\documentclass[twoside,11pt]{article}

%

\usepackage{jmlr2e}

\usepackage{amsmath}
\usepackage{algorithm}
\usepackage{algorithmic}
\usepackage{enumitem}


\newtheorem{assump}{Assumption}

\newenvironment{proof-sketch}{\par\noindent{\bf Proof Sketch\ }}{\hfill\BlackBox\\[2mm]}
\newcommand{\RN}[1]{
  \textup{\uppercase\expandafter{\romannumeral#1}}
}




\ShortHeadings{PAC Reinforcement Learning without Real-World Feedback}{Zhong, Deshmukh and Scott}
\firstpageno{1}

\begin{document}

\title{PAC Reinforcement Learning without Real-World Feedback}

\author{\name Yuren Zhong \email zhongyr@umich.edu \\
       \addr Department of EECS\\
       University of Michigan\\
       Ann Arbor, MI 48109, USA
       \AND
       \name Aniket Anand Deshmukh \email andeshm@microsoft.com \\
       \addr Bing Ads \\
        Microsoft AI \& Research \\
        Sunnyvale, CA 94085, USA 
       \AND
       \name Clayton Scott \email clayscot@umich.edu \\
       \addr Department of EECS\\
       University of Michigan\\
       Ann Arbor, MI 48109, USA}

\maketitle
 
\begin{abstract}

This work studies reinforcement learning in the Sim-to-Real setting, in which an agent is first trained on a number of simulators before being deployed in the real world, with the aim of decreasing the real-world sample complexity requirement. Using a dynamic model known as a rich observation Markov decision process (ROMDP), we formulate a theoretical framework for Sim-to-Real in the situation where feedback in the real world is not available. We establish real-world sample complexity guarantees that are smaller than what is currently known for directly (i.e., without access to simulators) learning a ROMDP with feedback.

\end{abstract}

\begin{keywords}
Reinforcement Learning, Domain Generalization, Sim-to-Real
\end{keywords}

\section{Introduction}
\label{introduction}

While reinforcement learning has achieved success in many applications, many state-of-art methods require a large number of training samples to find a good policy. In some tasks (e.g., self-driving cars, robotic control), samples are sufficiently costly so as to render existing algorithms infeasible or impractical. One approach to overcoming this challenge, referred to as \textit{Sim-to-Real}, is to first train an agent in one or more simulated environments before deploying it in the real world. Of course, this solution presents its own challenges, owing to the fact that simulators invariably do not coincide with the real-world. Previous works on Sim-to-Real can be classified according to whether the agent does \citep{finn2017model} or does not \citep{tobin2017domain, peng2018sim, bousmalis2018using} receive feedback in the real-world. 

From a theoretical perspective, the goal of Sim-to-Real is to learn a real-world policy that has a smaller sample complexity than if an agent was training only on the real-world. Intuitively, this gain comes from training on the simulators (perhaps with a much larger sample complexity), together with some modelling assumptions that link the simulators and real-world. Existing theoretical studies have focused on the setting where feedback is received in the real world \citep{cutler2015real, jiang2018pac}, and where the dynamics are governed by a conventional Markov decision process (MDP).

Our contribution is to develop a theoretical framework, algorithm, and analysis for Sim-to-Real {\em without} real-world feedback. Furthermore, we study a type of {\em contextual decision process} (CDP) known as a {\em rich observation MDP} (ROMDP) \citep{krishnamurthy2016pac}, which generalizes an MDP by allowing for policies to be based on a (possibly continuous valued) observation associated to an unseen state variable. We establish a real-world sample-complexity guarantee that is smaller than existing guarantees for learning a single ROMDP \citep{krishnamurthy2016pac, jiang2017contextual, sun2019model}. Our modeling assumptions and approach leverage ideas from {\em domain generalization}, reviewed below, which allow us to identify environments based on the marginal distribution of observations.

\section{Related Work}
\label{related work}

There are two theoretical studies, to our knowledge, that formalize Sim-to-Real and give PAC-style bounds. \citet{cutler2015real} assumes a sequence of environments with levels of fidelity from low to high (the highest is the real world and the rest are simulators). There exists a transfer mapping over the states of any two simulators, which is homogeneous. Samples are more expensive in simulators with higher fidelity and switching between simulators is also costly. \citet{jiang2018pac} assumes a single simulator that differs from the real world only at a small fraction of states to get rid of the dependency of the size of action or state space. Both works consider an MDP setting, and both assume feedback is available from the real world.

The idea of a CDP was first introduced by \citet{krishnamurthy2016pac} and extended by \citet{jiang2017contextual}. In general, sample-efficient learning on CDPs with uncountable observation spaces is hard. \citet{jiang2017contextual} provides a sufficient condition for a CDP to be learnable, that is, it admits a Bellman factorization with finite Bellman rank. The ROMDP, a specific type of CDP, was proposed by \citet{krishnamurthy2016pac}, and assumes hidden states with certain dynamics behind the observations. A ROMDP without further assumptions is still difficult to learn. \citet{krishnamurthy2016pac} assume the underlying dynamics are deterministic and the ROMDP is reactive. \citet{dann2018oracle} analyzed computational issues arising in \citet{krishnamurthy2016pac}'s setting via presumed oracles to particular tasks. \citet{jiang2017contextual, sun2019model} studied generalized algorithms for CDPs satisfying some conditions, which partially tighten the sample complexity upper bound in this case. On the other hand, different assumptions are proposed to study ROMDP. \citet{azizzadenesheli2016reinforcement, du2019provably} assume that there is an injective mapping from hidden states to observations, i.e., observations can be partitioned into ``clusters/blocks" where each corresponds to a single state. In our paper, we use a similar setting as \citet{krishnamurthy2016pac}, which is explained in detail in Section \ref{formal setting}.

Sim-to-Real also evokes problems from batch learning that involve generalization to a new task. We highlight two such problems, domain generalization \citep{blanchard2011generalizing} and learning-to-learn or meta-learning \citep{baxter2000model}. In both problems, there are several related labeled training tasks/datasets, and the goal is to generalize to a new task. All tasks are viewed as realizations of a meta-distribution, i.e., a distribution on data-generating distributions. In learning to learn, labeled example are also available for this new task, and the objective is to leverage the training tasks to decrease the sample complexity of learning on the test task, with high probability w.r.t. the draw of the test task. In domain generalization, only unlabeled test data are available, meaning the task must be inferred from the marginal distribution of the input variables. The objective here is to minimize the expected error, where expectation is w.r.t. the draw of the test distribution. This is in contrast to the related problem of {\em domain adapation} where all distributions are considered nonrandom. Like both of these problems, our model for to Sim-to-Real assumes a distribution on learning environments, with simulators and real-world being realizations of a common distribution on environments. Our solution also leverages the idea from domain generalization that the environment can be identified from the marginal distribution on observations. 


\section{Problem Formulation}
\label{problem formulation}

Each simulator as well as the real world is modeled as a parameterized ROMDP, which is a contextual decision process with underlying hidden states and an environment parameter. All simulators and the real world share the same state space and the same state-action transitions, but differ in the observation and reward distributions, which are determined by their environment parameters. For any set $W$, we define $\Delta(W)$ to be the set of all distributions on $W$, or the set of all probability density functions on $W$, where the meaning will be clear from the context.

\subsection{Formal Setting}
\label{formal setting}

A deterministic parameterized ROMDP is defined by a tuple $\langle \mathcal{A}, \mathcal{S}, \mathcal{X}, T, s_1, D, R, H, \theta \rangle$, where $H$ is the finite horizon of an episode, $\mathcal{A}, \mathcal{S}$ are the action space and the state space, $\mathcal{X} \subseteq \mathbb{R}^d$ is the observation space that is bounded (i.e. there exists $C_b > 0$ such that $\mathcal{X} \subseteq \mathbb{B}_{\mathbb{R}^d}(0, C_B)$), $T:\mathcal{S}\times\mathcal{A} \to \mathcal{S}$ is the Markov transition dynamic, $s_1 \in \mathcal{S}$ is the fixed initial state, $\theta$ is the environment parameter (a vector that encodes necessary information about the environment), $D_{\theta, s} \in \Delta(\mathcal{X})$ denotes the Lebesgue probability density function over observation space at state $s$, $R_\theta: \mathcal{S}\times\mathcal{X}\times\mathcal{A}\to \Delta([0,1])$ is the parameterized reward distribution. The parameter $\theta$ is drawn from a set of parameters $\Theta$. Without loss of generality, we assume the ROMDP is layered, that is, we can partition $\mathcal{S}$ into $H$ disjoint sets $\mathcal{S}_1, ..., \mathcal{S}_H$ and $\mathcal{X}$ into $H$ disjoint sets $\mathcal{X}_1, ..., \mathcal{X}_H$ such that for any $s_h \in \mathcal{S}_h$ and $a \in \mathcal{A}$, $T(s_h,a) \in \mathcal{S}_{h+1}$, and for any $s_h \in \mathcal{S}_h$ and $\theta \in \Theta$, $D_{\theta, s_h} \in \Delta(\mathcal{X}_h)$. The state space and action space are finite and we define $A = |\mathcal{A}|, S = \max_{h}|\mathcal{S}_h|$. There is no restriction on the observation space (except for the boundedness), which may be countably infinite or continuous. 
Only $\mathcal{A}, \mathcal{S}, \mathcal{X}$ and $H$ are assumed to be known beforehand, while $s_1, T, D, R, \theta$ are all hidden from the agent.

There are two aspects of the Markov property in a ROMDP. First, the underlying state transition is Markovian as evidenced by the transition function $T$; second, the observations and the rewards are independent of the previous states and actions given the current state, observation and action as reflected in the definition of $D_{\theta,s}$ and $R_\theta$.

Each episode produces a trajectory $(s_1, x_1, a_1, r_1, ..., s_H, x_H, a_H, r_H)$, where states $s_h = T(s_{h-1}, a_{h-1})$, observations $x_h \sim D_{\theta, s_h}$, rewards $r_h \sim R_\theta(s_h, x_h, a_h)$, and all actions $a_h$ are chosen by some strategy. States and parameters are not observable. For every simulator, an action $a_h$ is made based on the sequence $(x_1, a_1, r_1,...,  x_{h-1}, a_{h-1}, r_{h-1}, x_h)$. However, for the real world, as there is no feedback at all, $a_h$ is made based only on $(x_1, a_1,...,  x_{h-1}, a_{h-1}, x_h)$.

In general, an optimal policy of a ROMDP has to memorize the past trajectory, but in this paper we will assume the ROMDP is reactive (see Section \ref{assumption}). This allows us to restrict our attention to {\em meta-policies}, which are functions $\pi:\Theta\times\mathcal{X}\to\mathcal{A}$ that maps any parameter-observation pair to an action. A meta-policy determines a policy $\pi_\theta = \pi(\theta, \cdot)$ for every environment $\theta$. Then we define the expected total reward $V_\theta(\cdot)$ for a meta-policy $\pi$ in a ROMDP with environment parameter $\theta$ via 
$$
\begin{array}{ccl}
    V_{\theta}(\pi_\theta) & := & V_{\theta}(s_1, \pi_\theta),  \\
    V_{\theta}(s_h, \pi_\theta) & := & \mathbb{E}_{x_h\sim D_{\theta, s_h}}[ r_\theta(s_h, x_h, \pi_\theta(x_h)) + V_\theta(T(s_h, \pi_{{\theta}}(x_h))), \pi_\theta)], \forall h<H, \\
    V_\theta(s_H, \pi_\theta) & := & \mathbb{E}_{x_H\sim D_{\theta, s_H}}[r_\theta(s_H, x_H, \pi_{{\theta}}(x_H))],
\end{array}
$$
where $r_\theta(s, x, a):=\mathbb{E}[R_\theta(s, x, a)]$. Here $V_{\theta}(s_h, \pi_\theta)$ denotes the expected reward of the meta-policy $\pi$ starting from state $s_h$. The optimal meta-policy $\pi^*$ is defined to act optimally for any $\theta$, i.e., $\pi^*$ satisfies
$$V_\theta(\pi^*_\theta) = \sup_{\pi_\theta} V_\theta(\pi_\theta), \forall \theta\in\Theta.$$
The supremum above is taken over all possible policies for environment $\theta$. Since no constraint is imposed on the relation between $\theta$ and $\pi$, such optimal meta-policy $\pi^*$ always exists by taking the combination of the best policies for every $\theta$.

Since in applications there may be different real-world environments (as discussed in Section \ref{introduction}), we formulate the real world as a stochastic environment whose parameter $\theta_R$ is drawn from a prior $\mu\in\Delta(\Theta)$. Moreover, suppose there are $B$ simulators $\beta_1, ..., \beta_B$ associated with $B$ environment parameters $\theta_1, \theta_2,..., \theta_B$, respectively, which are drawn independently according to the same prior $\mu$, i.e. $\theta_R, \theta_1, \theta_2,..., \theta_B \stackrel{iid}{\sim} \mu$. All episodes in the simulator $\beta_b$ / the real world are generated with the corresponding $\theta_b$ / $\theta_R$. We use $\beta$ and $\theta$ interchangeably, like $D_\theta$ and $D_\beta$, $V_\theta$ and $V_\beta$, etc.

Our goal is to learn a meta-policy that optimizes the expectation of $V_{\theta_R}$ over $\theta_R$. In other words, we want our meta-policy that performs well, in expectation, for any possible real-world environment. An alternative objective would be to perform well for a specific $\theta_R$, but this would require feedback from the real-world, or much stronger assumptions relating the real-world to the simulators.

The expectation of $V_{\theta_R}$, denoted by $V$, is defined as $V(\pi) := \mathbb{E}_{\theta\sim\mu}V_\theta(\pi_\theta).$
Then the definition of $\pi^*$ directly implies that $V(\pi^*) = \sup_{\pi} V(\pi).$ 
Hence, we define the optimal expected total reward $V^* := V(\pi^*).$

Finally we state the problem. A learning algorithm finds a meta-policy $\hat{\pi}_{\theta_R}$ for the real wold by collecting with-feedback trajectories from the $B$ simulators and no-feedback samples from the real world. Our \textbf{goals} are 

i) that with probability at least $1-\delta$, the meta-policy $\hat{\pi}$ is $\epsilon$-optimal, that is, 
$$V^* - \mathbb{E}_{\theta_R \sim \mu}V_{\theta_R}(\hat{\pi}_{\theta_R}) \le \epsilon;$$

ii) to minimize the real-world sample complexity.

\subsection{Main Assumptions}
\label{assumption}

Sample-efficiently learning a near-optimal policy with high probability for a general ROMDP is difficult, and all existing algorithms and PAC bounds are designed for some special cases. \citet{krishnamurthy2016pac} proposed Assumptions 1 and 2 to ensure that a ROMDP is sample-efficiently learnable. 

In reinforcement learning literature, the optimal $Q$-function is usually referred to as the optimal state/observation-action value function, i.e., the expected reward obtained by taking an action at a state/observation and acting optimally afterwards. In this paper we define the optimal $Q$-function as
$$Q^*_{\theta, s_h}(x_h, a_h) := r_\theta(s_h, x_h, a_h) + V^*_\theta(T(s_h, a_h)).$$

A \textit{reactive} policy is a strategy that makes every decision based only on the current observation. For conventional MDPs, there always exists an optimal policy that is reactive. However, for ROMDPs, the Markov property over the action-state transitions does not necessarily imply the Markov property over the action-observation transitions, so making an optimal action requires the memory of the past trajectory. Hence, generally there is no sample-efficient algorithm that can learn a near-optimal policy with high probability from a class of reactive policies \citep[Proposition 1]{krishnamurthy2016pac}. Assumption \ref{A1} has been introduced to guarantee the existence of an optimal reactive optimal policy by assuming that the maximal expected future rewards are independent of the underlying states.

\begin{assump}
(\textit{Partial reactiveness}). For all $a \in \mathcal{A}, x \in \mathcal{X}, \theta \in \Theta$ and any $s, s' \in \mathcal{S}$ such that $D_{\theta, s}(x), D_{\theta, s'}(x) > 0$, $Q^*_{\theta, s}(x, a) = Q^*_{\theta, s'}(x, a)$.
\label{A1}
\end{assump}

Given that the environment parameters are not observable, we will assume that $\theta$ is determined by the marginal distribution of the observations. A predictor $f: \Delta(\mathcal{X})^{\xi} \times \mathcal{X}\times \mathcal{A} \to [0,1]$ ($\xi$ is the total number of the states of the ROMDP, which is no larger than $HS$) takes as input an observation, an action as well as a vector of all marginal distributions of observations at all possible states, and outputs the predicted expected future reward. Generally, predictors should also take state as an argument, but Assumption \ref{A1} ensures the predictors can be independent of the states. Thus, we define the optimal predictor $f^*$, to replace the use of the optimal $Q$-function, as $f^*(D_\theta, x, a) = Q^*_\theta(x, a), \forall \theta, x, a.$

\begin{assump}
(\textit{Realizability}). A class of predictors $\mathcal{F} \subseteq (\Delta(\mathcal{X})^{\xi} \times \mathcal{X}\times \mathcal{A} \to [0,1])$ of size $F = |\mathcal{F}|$ is given, and $f^* \in \mathcal{F}$. 
\label{A2}
\end{assump}

We assume that the class of predictors we choose is a good one, that is, the optimal value function $f^*$ is in this class. We make no assumption on the form of $f^*$ itself. In our main theorem (Theorem \ref{thm:main}), $F$ appears in the sample complexity for simulators but not the real world.

For any $D_\beta \in \Delta(\mathcal{X})^{\xi}$, let $f_{D_\beta} = f(D_\beta, \cdot, \cdot)$. Then a policy $\pi^f_{D_\beta}(x)$ is naturally induced by $D_\beta$ and $f$, that is,
$\pi^f_{D_\beta}(x) = \arg\max_a f_{D_\beta}(x, a).$
We also use $\pi^f_{D}(x)$ to denote the combination of all the policies above for all simulators, i.e., $\pi^f_{D}(x)$ maps $\beta$ to $\pi^f_{D_\beta}$.

\bigskip

The following notation is used to state Assumption \ref{A3}. Suppose $ l = (l_1, ... , l_d)$ is a non-zero vector and $l_1, ... , l_d$ are non-negative integers. Define the 1-norm $|l|$ of $l$ by $|l| := l_1+ \cdot\cdot\cdot +l_d$ and the partial derivative $f^{(l)}$ of any function $f: \mathbb{R}^d \to \mathbb{R}$ by $f^{(l)} := \frac{\partial^{|l|}}{\partial^{l_1} \cdot\cdot\cdot \partial^{l_d}} f.$
\begin{assump}
(\textit{H\"older continuous density}). $D$ is uniformly $\alpha$-H\"older continuous, i.e., there exist $\alpha > 1, C_\alpha$ such that $|D^{(l)}_{\theta, s}(x) - D^{(l)}_{\theta, s}(x')| \le C_\alpha \|x- x'\|^{\alpha-|l|}$ for all $x, x' \in \mathcal{X}, s\in\mathcal{S}, \theta \in \Theta$ and any vector $l$ such that $|l| = \lceil \alpha \rceil - 1$.
\label{A3}
\end{assump}

Assumption \ref{A3} ensures certain convergence properties of a kernel density estimate of the $D_{\theta,s}$ \citep{Tsybakov2009Introduction, jiang2017uniform}, see Section \ref{kde}. $\alpha$ may be arbitrarily large, if the probability density function is infinitely differentiable.

\begin{assump}
(\textit{Lipschitz continuous value function}). All predictors $f \in \mathcal{F}$ are uniformly Lipschitz continuous over the first parameter, i.e. there exists a constant $C_L$ such that for all $f\in \mathcal{F}, x\in\mathcal{X}, a\in\mathcal{A}$ and any $D_\theta, D'_\theta \in \Delta(\mathcal{X})^{\xi}$, $|f(D_\theta, x, a) - f(D'_\theta, x, a)| \le C_L \cdot \|D_\theta - D'_\theta \|_{\infty}$ where $\|D_\theta - D'_\theta \|_{\infty} := \sup_{x\in\mathcal{X}, s\in\mathcal{S}}|D_{\theta, s}(x) - D'_{\theta, s}(x)|$.
\label{A4}
\end{assump}

Assumption \ref{A4} ensures that small error in density estimation will result in controllable error in future reward prediction. Assumption \ref{A2} and \ref{A4} together imply the true optimal value function is also Lipschitz continuous over the first parameter.

\begin{assump}
(\textit{Distinguishability of states}). There exists a constant $\zeta > 0$ such that for all $\theta \in \Theta$ and any $s, s' \in \mathcal{S}$, $\sup _{x\in\mathcal{X}}|D_{\theta, s}(x) - D_{\theta, s'}(x) | > \zeta$.
\label{A5}
\end{assump}

Assumption \ref{A5} states that given a fixed $\theta$, the difference between the marginal distributions of observations for any two states cannot be arbitrarily small. If we can fully accurately estimate the distributions, say with infinitely many samples, then we are able to distinguish whether any two underlying states are actually the same one.

\section{Methodology}
\label{methodology}

We are inspired by the idea in domain generalization that the marginal distributions of the observations (i.e., the domains) is used to fully characterize different environments, both simulators and the real world. In this paper, we use {\em kernel density estimation} (KDE) to approximate the distribution of observations at each state in each environment. For every simulator, a near-optimal policy (w.r.t. this particular simulator) is learned and all those policies combined with the distribution estimates give a near-optimal meta-policy (in terms of the expected total reward) for the real world. In the whole procedure, real-world samples are collected for the sole purpose of estimating the densities of observation $D_{\theta_R, s}$, which does not need any feedback from real world.

\subsection{Kernel Density Estimation}
\label{kde}

As mentioned previously, our algorithm computes estimates of $D_\beta$ for each simulator $\beta$ and use them as the argument of the predictors. Kernel density estimation (KDE) is a well-studied method to estimate any distribution. With observations $x^{(1)}_\beta, ... ,x^{(n)}_\beta $ drawn from simulator $\beta$ at state $s$ (i.e., $x^{(1)}_\beta, ... ,x^{(n)}_\beta \stackrel{iid}{\sim} D_{\beta, s}$), the KDE $\hat{D}_{\beta,s}$ is
\begin{equation}
\hat{D}_{\beta,s}(x) = \displaystyle\frac{1}{n\cdot h^d}\sum_{i=1}^n \kappa\bigg(\frac{x^{(i)}_\beta - x}{h}\bigg).
\label{eq:kde-formula}
\end{equation}
where $h>0$ is the bandwidth, and $\kappa$ is a kernel. In this paper, $\kappa$ is chosen to satisfy the following conditions, 

\begin{enumerate}[align=left, leftmargin=*, label=\textbf{(K\arabic*)}]
    \item \label{K1} $\int \kappa(t) dt = 1$, $\int \|t\|^\alpha|\kappa(t)|dt < \infty$, and $\int t^s \kappa(t)dt = 0$ for any non-zero vector $s = (s_1, ..., s_d)$ s.t. $s_1, ..., s_d$ are non-negative integers and $|s| \le \lceil \alpha \rceil - 1$;
    \item \label{K2} $\|\kappa\|_2 < \infty$, $\|\kappa\|_\infty < \infty$ and $\mathcal{K}:= \{\kappa(\frac{\cdot - x}{h}), x\in\mathcal{X}\}$ is a uniformly bounded VC-class with dimension $\nu$ and characteristic $\Lambda$ for a fixd $h>0$.
\end{enumerate}

A class of functions $\mathcal{G}$ is a uniformly bounded VC-class with dimension $\nu$ and characteristic $\Lambda$ \citep{talagrand1994sharper, talagrand1996new, gine2001consistency, gine2002rates} if $\|\mathcal{G}\|_\infty := \sup_{g\in\mathcal{G}}\|g\|_{\infty} < \infty$ and there exist positive numbers $ \nu, \Lambda$ such that for all probability measure $\Tilde{P}$ on $\mathbb{R}^d$ and all $\rho \in (0 , \|\mathcal{G}\|_\infty)$, the covering number $N(\mathcal{G}, L_2(\Tilde{P}), \rho)$ satisfies
$$N(\mathcal{G}, L_2(\Tilde{P}), \rho) \le \bigg( \frac{\Lambda \cdot \|\mathcal{G}\|_\infty}{\rho} \bigg)^{\nu},$$
where the covering number is the minimal number of open balls of radius $\rho$ w.r.t. $L_2(\Tilde{P})$ distance and centered within $\mathcal{G}$ that cover $\mathcal{G}$.

To show the existence of such a kernel, we provide one construction based on the orthonormal basis of Legendre polynomials \citep{Tsybakov2009Introduction}. Let $\{\psi_m(\cdot)\}_{m=0}^{\infty}$ be the orthonormal basis of Legendre polynomials in $L_2([-1, 1])$ defined by the formulas
$$\psi_0(t)  = \frac{1}{\sqrt{2}} \text{ and } \psi_m(t) = \sqrt{\frac{2m+1}{2}}\frac{1}{2^m m!}\frac{\partial^m}{\partial t^m}\big[(t^2-1)^m\big], m=1, 2, ...,$$
for $t \in [-1,1]$. Then for any $x = (x_1, ..., x_d) \in \mathcal{X}$ define kernel $\Tilde{\kappa}$ as $\Tilde{\kappa}(x) = \gamma(x_1)...\gamma(x_d)$
where $\gamma: \mathbb{R}\to\mathbb{R}$ is defined as

\begin{equation}
\label{eq:gamma}
\gamma(t) = \sum_{m=0}^{\lceil \alpha \rceil -1} \psi_m(0) \psi_m(t)\mathbf{1}[-1\le t\le 1].
\end{equation}

\begin{proposition}
$\Tilde{\kappa}$ satisfies \ref{K1} and \ref{K2}.
\label{prop:kernel}
\end{proposition}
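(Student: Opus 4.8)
The plan is to split the claim into conditions \ref{K1} and \ref{K2} and to reduce everything except the VC-class statement to elementary one-dimensional facts. Because $\tilde\kappa(x)=\gamma(x_1)\cdots\gamma(x_d)$, any integral of $\tilde\kappa$ against a product weight factorizes, so it suffices to analyze $\gamma=\big(\sum_{m=0}^{\lceil\alpha\rceil-1}\psi_m(0)\psi_m\big)\mathbf 1[-1\le t\le1]$. The key computation is that for any polynomial $p$ of degree at most $\lceil\alpha\rceil-1$, writing $p=\sum_k c_k\psi_k$ in the Legendre basis and using orthonormality gives $\int_{-1}^1 p(t)\gamma(t)\,dt=\sum_k c_k\psi_k(0)=p(0)$. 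Taking $p\equiv1$ yields $\int_{-1}^1\gamma=1$, hence $\int_{\mathbb R^d}\tilde\kappa=1$; taking $p(t)=t^{k}$ with $1\le k\le\lceil\alpha\rceil-1$ yields $\int_{-1}^1 t^k\gamma(t)\,dt=0$. For a nonzero multi-index $s$ with $|s|\le\lceil\alpha\rceil-1$ some coordinate exponent $s_j$ satisfies $1\le s_j\le\lceil\alpha\rceil-1$, so that factor of $\int t^s\tilde\kappa$ vanishes and the whole product is $0$. Finally, $\tilde\kappa$ is supported on $[-1,1]^d$ and $\gamma$ is continuous on the compact $[-1,1]$, so the integrand of $\int\|t\|^\alpha|\tilde\kappa(t)|\,dt$ is bounded with compact support and the integral is finite. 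This disposes of \ref{K1}, and likewise the norm bounds in \ref{K2}: $\gamma$ bounded gives $\|\tilde\kappa\|_\infty<\infty$, and boundedness with compact support gives $\|\tilde\kappa\|_2<\infty$.

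The substantive step is that $\mathcal K=\{\tilde\kappa((\cdot-x)/h):x\in\mathcal X\}$ is a uniformly bounded VC-class for fixed $h$. I would exploit the explicit algebraic form of $\tilde\kappa$: writing $\gamma(t)=q(t)\mathbf 1[t\in[-1,1]]$ with $q$ a univariate polynomial of degree $\le\lceil\alpha\rceil-1$, we get $\tilde\kappa(x)=P(x)\mathbf 1[x\in[-1,1]^d]$ with $P(x)=\prod_j q(x_j)$ a $d$-variate polynomial of degree $\le d(\lceil\alpha\rceil-1)$. Hence every member of $\mathcal K$ is the function $y\mapsto P((y-x)/h)$ restricted to the axis-aligned box $x+h[-1,1]^d$. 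Since $\mathcal X$ is bounded, all these boxes sit inside one fixed compact set $\mathcal C$, on which the polynomials $P((\cdot-x)/h)$, $x\in\mathcal X$, are uniformly bounded; thus $\mathcal K$ is a subclass of the pointwise products $\mathcal Q\cdot\mathcal B$, where $\mathcal Q$ is a uniformly bounded subset of the finite-dimensional space of polynomials of degree $\le d(\lceil\alpha\rceil-1)$ on $\mathcal C$ and $\mathcal B$ is the class of indicators of axis-aligned rectangles.

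I would then close by invoking standard stability properties: a finite-dimensional vector space of functions is VC-subgraph, so $\mathcal Q$ has polynomial uniform covering numbers; indicators of axis-aligned rectangles form a VC class; and pointwise products of two uniformly bounded VC-type classes again form a VC-type class, since $|q_1b_1-q_2b_2|\le\|\mathcal B\|_\infty|q_1-q_2|+\|\mathcal Q\|_\infty|b_1-b_2|$ turns the two covering-number bounds into one. Restricting to the subclass $\mathcal K\subseteq\mathcal Q\cdot\mathcal B$ only shrinks covering numbers, giving $N(\mathcal K,L_2(\tilde P),\rho)\le(\Lambda\|\mathcal K\|_\infty/\rho)^\nu$ for suitable $\nu,\Lambda$. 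The main obstacle is bookkeeping at the right level of generality: the covering-number estimate must hold \emph{uniformly} over all probability measures $\tilde P$, so the product-stability step has to be carried out for the uniform (not fixed-measure) entropy. An alternative that sidesteps re-deriving this is to cite the classical fact (Nolan--Pollard; Gin\'{e}--Guillou) that the translates and dilates of a kernel of bounded variation form a VC-type class, together with the observation that $\tilde\kappa$, being a bounded polynomial times the indicator of a box, has finite Hardy--Krause variation.
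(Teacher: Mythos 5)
Your argument for \ref{K1} is essentially the same as the paper's: the paper cites Tsybakov's Proposition~1.3 for the one-dimensional moment facts about $\gamma$, whereas you rederive them from the Legendre reproducing identity $\int_{-1}^1 p\,\gamma = p(0)$ for polynomials of degree at most $\lceil\alpha\rceil-1$; the coordinate factorization step and the compact-support argument for $\int\|t\|^\alpha|\tilde\kappa|<\infty$ are identical in spirit. The genuine divergence is in \ref{K2}. The paper disposes of it in one line by invoking a black-box lemma (Nolan--Pollard / Kim--Shin): a kernel expressible as a bounded-variation function composed with a polynomial generates a uniformly bounded VC-class. You instead give a self-contained structural argument, writing $\tilde\kappa((\cdot-x)/h) = P((\cdot-x)/h)\,\mathbf 1[(\cdot-x)/h\in[-1,1]^d]$ and factoring $\mathcal K$ through the product of a bounded subset of a finite-dimensional polynomial space and the class of axis-aligned box indicators, then appealing to VC-type stability under pointwise products of uniformly bounded VC-type classes. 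This is more transparent about \emph{why} the class is VC-type and makes the dependence of $(\nu,\Lambda)$ on $d$ and $\lceil\alpha\rceil$ visible, at the cost of having to carry the product-stability bound at the level of uniform (rather than fixed-measure) entropy, which you correctly flag as the delicate bookkeeping step; the paper's citation route bypasses this entirely. One small point to tighten: the polynomial factor $P((\cdot-x)/h)$ is unbounded on $\mathbb R^d$, so the product decomposition should be read as holding after multiplying both factors by $\mathbf 1_{\mathcal C}$ (which is harmless since every member of $\mathcal K$ vanishes off $\mathcal C$, and the $L_2(\tilde P)$ distance between any two members depends only on $\tilde P|_{\mathcal C}$). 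Your closing remark that one could instead cite the bounded-variation result directly is exactly the route the paper takes.
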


The next section introduces the training algorithm that learns a meta-policy using the $B$ simulators. Our algorithm is inspired by \citet{krishnamurthy2016pac}.

\subsection{Meta-Policy Learning}
\label{algorithm}

Before starting this subsection, we define the concept of paths. A path, denoted by $p$, is a sequence of at most $H$ actions that the agent takes sequentially from the initial state $s_1$. Moreover, we use $p\circ a$ to denote taking action $a$ after $p$. After taking $p$ / $p\circ a$, it is guaranteed to arrive at one of the states, due to the deterministic dynamics. We use $p$ / $p\circ a$ to denote that state and we call that state the terminal state of path $p$ / $p\circ a$. This allows us to use $s$ and $p$ interchangeably in notation, like $D_s$ and $D_p$. Hence, the empty path $p = \emptyset$ is equivalent to $s_1$, because no action has been taken.

\begin{algorithm} 
\caption{Sim2Real($\mathcal{F}, \epsilon, \delta, \kappa$)}
\label{alg:sim2real} 
\begin{algorithmic}[1]
\STATE Set $\phi = \frac{\epsilon}{500H^2\sqrt{A}}$ and $B = \frac{2}{\phi^2}\log(\frac{256H^2SF\log(4HS/\delta)}{\epsilon \delta})$.
\STATE Sample $B$ simulators $\beta_1, ..., \beta_B$ such that each simulator $\beta_b$ is associated with parameter $\theta_b$ and $\theta_1, ..., \theta_B \stackrel{iid}{\sim} \mu$. Let $\mathcal{B} = \{\beta_1, ..., \beta_B\}$.
\STATE $\hat{D} \gets $ DFS-Distribution($\emptyset, \{\}, \mathcal{B}, \kappa, \epsilon, \delta/4$).
\STATE $\mathcal{F} \gets $ DFS-Learn$(\emptyset, \mathcal{B}, \mathcal{F}, \hat{D}, \phi, \delta/4)$.
\STATE Choose any $f \in \mathcal{F}$. Let $\hat{V}^*$ be a Monte Carlo estimate of $V^f(\emptyset)$ (computed in Line 4).
\STATE $\hat{\pi} \gets$ Learn-on-Simulators($\mathcal{B}, \mathcal{F}, \hat{D}, \hat{V}^*, \epsilon, \delta/4$).
\RETURN $\hat{\pi}$.
\end{algorithmic}
\end{algorithm}

\textbf{Sim2Real} (Algorithm \ref{alg:sim2real}) is designed for learning a near-optimal meta-policy for the real world via only simulators, and takes as input $\epsilon, \delta$, a class of predictors $\mathcal{F}$ and a kernel $\kappa$ satisfying the conditions \ref{K1} and \ref{K2}. Sim2Real first generates $B = \Tilde{\mathcal{O}}\big(\frac{H^4A}{\epsilon^2} \cdot \log \frac{FS}{\delta}\big)$ simulators whose environment parameters are sampled according to $\mu$. DFS-Distribution is invoked to compute the KDE for every state on each simulator. DFS-Learn with its helper functions (Consensus and TD-Eliminate) calculates a close approximation of the maximal future reward $V^*(p)$ starting from the path $p$ at which DFS-Learn is invoked. Since DFS-Learn is invoked at the initial state, an approximation of $V^*$ is computed. Learn-on-Simulators finds a near-optimal meta-policy for the real world using the previous KDEs.

\begin{algorithm} 
\caption{DFS-Distribution($p, \hat{D},\mathcal{B}, \kappa, \epsilon, \delta$)} 
\label{alg:dfs-distribution} 
\begin{algorithmic}[1]
\STATE Set $n_{\text{dist}}$ satisfying (\ref{eq:requisite-ndist}) and $\epsilon_{\text{dist}} = \frac{\zeta}{2} , h = (n_{\text{dist}})^{-\frac{1}{2\alpha+d}}$.
\STATE Collect $n_{\text{dist}}$ observations $x^{(i)}_\beta \sim D_{\beta, p}$ for all $\beta \in \mathcal{B}$.
\STATE Compute $\hat{D}'_{\beta, p}$ for all $\beta \in \mathcal{B}$ by KDE using (\ref{eq:kde-formula}).
\IF{ there exists $p'$ already visited by DFS-Distribution s.t. $\sup_{x\in\mathcal{X}}| \hat{D}'_{\beta, p}(x) -  \hat{D}_{\beta, p'}(x)| \le \epsilon_{\text{dist}}$ for all $\beta \in \mathcal{B}$}
\STATE Append $\hat{D}$ of $p$ and all $p$'s descendants to $\hat{D}$, using $\hat{D}$ of $p'$ and $p'$'s descendants that correspond to $p$ and $p$'s descendants, respectively.
\ELSE
\STATE $\hat{D} \gets \hat{D}.\text{append}(\hat{D}_p)$.
\FOR{$a \in \mathcal{A}$}
\STATE $\hat{D} \gets $ DFS-Distribution($p \circ a, \hat{D}, \mathcal{B}, \kappa, \epsilon, \delta$).
\ENDFOR
\ENDIF
\RETURN $\hat{D}$.
\end{algorithmic}
\end{algorithm}

\textbf{DFS-Distribution} (Algorithm \ref{alg:dfs-distribution}) traverses paths via a DFS (depth-first search). Every time it visits a path $p$ (i.e., a node in the DFS), it compute the KDE for the corresponding state for all environments, where $n_{\text{dist}}$ samples are collected for each simulator. $n_{\text{dist}}$ is chosen satisfying
\begin{equation}
C_L C_{\text{dist}} \cdot (\frac{1}{n_{\text{dist}}})^{\frac{\alpha}{2\alpha+d}}\sqrt{\log(n_{\text{dist}}) + \log\frac{(B+1)HSA}{\delta}} \le \frac{\phi}{2} = \frac{\epsilon}{1000H^2A}.
\label{eq:requisite-ndist}
\end{equation}
$C_{\text{dist}}$ is a constant independent of $\epsilon, \delta, H, S, A$ (see details in Appendix \ref{C}). Since going through the entire search tree of DFS results in the sample complexity depending on $A^H$, we use a simple technique to avoid computing the estimates for the same state multiple times. DFS-Distribution checks whether the current path leads to a state that is visited before, using the computed estimates. If so, all paths with prefix $p$ (i.e., the subtree with the root being $p$) are pruned from the DFS; otherwise, it visits $p\circ a$ for all action $a$ as the DFS continues. DFS-Distribution returns a set of vectors of distributions over the observation space, where each vector corresponds to a simulator.

Every time we say collecting $n$ observations $x^{(i)}\sim D_{\beta,p}$, we execute the path $p$ on simulator $\beta$ to enter its terminal state and collect a single observation, and we repeat this procedure for $n$ times. 

\begin{algorithm} 
\caption{DFS-Learn($p, \mathcal{B}, \mathcal{F}, \hat{D}, \phi, \delta$)} 
\label{alg:dfs-learn} 
\begin{algorithmic}[1]
\STATE Set $\epsilon_{\text{test}} = \big(25(H-|p|-2) + 21\big)\sqrt{A}\phi$.
\FOR{$a \in \mathcal{A}$,}
\IF{ Not Consensus($p\circ a, \mathcal{B}, \mathcal{F}, \hat{D}, \epsilon_{\text{test}}, \phi, \frac{\delta/2}{HSA}$)}
\STATE $\mathcal{F} \gets $ DFS-Learn$(p\circ a, \mathcal{B}, \mathcal{F}, \hat{D}, \phi, \delta)$.
\ENDIF
\ENDFOR
\RETURN TD-Eliminate($p, \mathcal{B}, \mathcal{F}, \hat{D}, \phi, \frac{\delta/2}{HS}$).
\end{algorithmic}
\end{algorithm}

\begin{algorithm}
\caption{Consensus($p, \mathcal{B}, \mathcal{F}, \hat{D}, \epsilon_{\text{test}}, \phi, \delta$)} 
\label{alg:consensus} 
\begin{algorithmic}[1]
\STATE Set $n_{\text{test}} = \frac{2\log(2FB/\delta)}{\phi^2}$. 
\STATE For each $\beta \in \mathcal{B}$, collect $n_{\text{test}}$ observations $x^{(i)}_\beta \sim D_{\beta, p}$ from the simulator $\beta$.
\STATE Compute estimates for each value function, 
$$\Hat{V}_\beta^f(p) = \frac{1}{n_{\text{test}}} \sum_{i = 1}^{n_{\text{test}}}f (\hat{D}_\beta, x^{(i)}_\beta, \pi^f_{\hat{D}_\beta}(x^{(i)}_\beta)), \forall f \in \mathcal{F}, \beta \in \mathcal{B}.$$
\RETURN $\mathbf{1}[|\Hat{V}_\beta^f(p, \pi^f_\beta) - \Hat{V}_\beta^g(p, \pi^g_\beta)| \le \epsilon_{\text{test}}, \forall f, g\in \mathcal{F}, \beta \in \mathcal{B}]$.
\end{algorithmic}
\end{algorithm}

\begin{algorithm}
\caption{TD-Eliminate($p, \mathcal{B}, \mathcal{F}, \hat{D}, \phi, \delta$)} 
\label{alg:td-eliminate} 
\begin{algorithmic}[1]
\STATE Require estimates $\Hat{V}_\beta^f(p \circ a), \forall f \in \mathcal{F}, \beta \in \mathcal{B}, a \in \mathcal{A}$.
\STATE Set $n_{\text{train}} = \frac{2\log(4FB/\delta)}{\phi^2}$.
\STATE For each $\beta \in \mathcal{B}$, collect $n_{\text{train}}$ observations $(x^{(i)}_\beta, a^{(i)}_\beta, r^{(i)}_\beta)$ from the simulator $\beta$, where $x^{(i)}_\beta \sim D_{\beta, p}$, $a^{(i)}_\beta$ is chosen uniformly at random, and $r^{(i)}_\beta \sim R(x^{(i)}_\beta, a^{(i)}_\beta)$.
\STATE Let $Risk(f_{\hat{D}_\beta}) = \frac{1}{n_{\text{train}}}\displaystyle \sum_{i = 1}^{n_{\text{train}}}\Big(f(\hat{D}_\beta, x^{(i)}_\beta, a^{(i)}_\beta) - r^{(i)}_\beta - \Hat{V}_\beta^f(p \circ a^{(i)}_\beta)\Big)^2, \forall f \in \mathcal{F}, \beta \in \mathcal{B}$.
\RETURN $\{f\in\mathcal{F}: Risk(f_{\hat{D}_\beta}) \le \displaystyle \min_{f'\in\mathcal{F}}Risk(f'_{\hat{D}_\beta})+ 2\phi^2 + 8\phi +  \frac{22}{n_{\text{train}}}\log(\frac{2FB}{\delta}),\forall \beta \in \mathcal{B}\}$.
\end{algorithmic}
\end{algorithm}

\textbf{DFS-Learn} (Algorithm \ref{alg:dfs-learn}) traverses paths for the purpose of approximating the maximal future reward $V^*(p)$ starting at the path $p$ at which the current series of recursive calls to DFS-Learn is first invoked and meanwhile eliminating those candidate value functions that does not accurately predict $V^*(p')$ where $p'$ is prefixed by $p$. These two parts are done simultaneously because the latter can be considered as a prerequisite of the former. For any predictor $f \in \mathcal{F}$, let $V^f_\beta(p)$ be its prediction on the maximal future reward starting at $p$.
\textbf{TD-Eliminate} (Algorithm \ref{alg:td-eliminate}) finds good value functions from $\mathcal{F}$ according to their \textit{Bellman risks} (Line 4). Value functions with smaller Bellman risks have more accurate prediction. The true value function $f^*$ is retained (with high probability) and the rest of the survivors are similar to $f^*$ in terms of their prediction accuracy (see Proof of Lemma \ref{lm:2}). Hence all value functions retained by TD-Eliminate approximate $V^*(p)$ closely if TD-Eliminate is invoked at $p$. Notice that in the Bellman risk formula (Line 4) we also need to know the estimate of $V^f_\beta(p\circ a)$ (denoted by $\hat{V}^f_{\beta}(p\circ a)$ in the formula). This is why a DFS is used. Pruning is executed when all functions in the current $\mathcal{F}$ have similar predictions on $V^f_\beta(p\circ a)$, which is examined by \textbf{Consensus} (Algorithm \ref{alg:consensus}). 

\begin{algorithm} 
\caption{Learn-on-Simulators($\mathcal{B}, \mathcal{F}, \hat{D}, \hat{V}^*, \epsilon, \delta$)} 
\label{alg:Learn-on-Simulators} 
\begin{algorithmic}[1]
\STATE Set $\epsilon_{\text{demand}} = \epsilon/2, n_1 = \frac{32\log(6HSB/\delta)}{\epsilon^2}$ and $n_2 = \frac{8\log(3SH/\delta)}{\epsilon B}$.
\WHILE{true}
\STATE Pick any $f \in \mathcal{F}$.
\STATE For each $\beta\in\mathcal{B}$, collect $n_1$ trajectories from the simulator $\beta$, according to policy $\pi^f_{\hat{D}_\beta}$, respectively. Let $v_\beta^{(j)}$ be the total reward of the $j^{th}$ trajectory on the simulator $\beta$.
\STATE $\hat{V}(\pi^f_{\hat{D}}) = \frac{1}{n_1\cdot B}\sum_{\beta\in\mathcal{B}}\sum_{j = 1}^{n_1}v_\beta^{(j)}$.
\IF{$|\hat{V}^* - \hat{V}(\pi^f_{\hat{D}})| \le \epsilon_{\text{demand}}$}
\RETURN $\pi^f$.
\ENDIF
\STATE Update $\mathcal{F}$ by calling DFS-Learn($p, \mathcal{B}, \mathcal{F}, \hat{D}, \phi, \frac{\epsilon\delta}{48 H^2S\log(3HS/\delta)}$) at the terminal states of each of the $H$ prefixes $p$ of each of any $n_2$ paths executed on each simulator in line 4.
\ENDWHILE
\end{algorithmic}
\end{algorithm}

\textbf{Learn-on-Simulators} (Algorithm \ref{alg:Learn-on-Simulators}) loops until a near-optimal meta-policy is found. We use the previously computed estimate $\hat{V}^*$ of the optimal total reward as the reference to determine whether a meta-policy is near-optimal simply via the Monte-Carlo estimation (Line 4). Intuitively, if $f$, which is retained after the execution of DFS-Learn, makes accurate predictions on all states that are visited according to $\pi^f$, then the expected total reward of the induced meta-policy $\pi^f$ is close to $V^*$. However, this is not guaranteed by merely running DFS-Learn because DFS-Learn only ensures that the predictions on the states where DFS-Learn is invoked are accurate. In the case that the current meta-policy $\pi^f$ induced by $f$ is not near-optimal, $\pi^f$ is likely to lead the agent onto unvisited states where $f$ cannot make precise prediction (see Lemma \ref{lm:8}). Hence we invoke DFS-Learn on those unvisited states to further refine $\mathcal{F}$, during which this particular $f$ is supposed to be removed with high probability. This process keeps being iterated till a good meta-policy is found.

Finally, we are able to deploy the learned meta-policy into the real world, combined with the KDE of the real world.

\subsection{Deployment in Real World}
\label{deployment}

In this subsection, we discuss the deployment of the meta-policy $\hat{\pi}$ learned in Section \ref{algorithm}. \textbf{Deploy} (Algorithm \ref{alg:deploy}) collect samples for distribution estimation at paths that are visited by DFS-Distribution previously, and build $\hat{D}_R$ as building $\hat{D}_\beta$. Then we obtain the target policy $\hat{\pi}_{\hat{D}_R} = \hat{\pi}(\hat{D}_R, \cdot)$.

\begin{algorithm} 
\caption{Deploy($\hat{\pi}, \epsilon, \delta$)} 
\label{alg:deploy} 
\begin{algorithmic}[1]
\STATE Let $n_{\text{dist}}, h$ be the same as in DFS-Distribution.
\STATE Collect $n_{\text{dist}}$ observations $x^{(i)}_R$ from the real world at the terminal state of every path visited by DFS-Distribution, and compute $\hat{D}_R$ similarly by KDE.
\RETURN $\hat{\pi}_{\hat{D}_R} = \hat{\pi}(\hat{D}_R, \cdot)$.
\end{algorithmic}
\end{algorithm}

In DFS-Distribution and Deploy, samples are collected for the sole purpose to approximate $D$ so that no feedback is needed. DFS-Learn and Learn-on-Simulators require the collection of rewards, but all are gathered from simulators. Thus, our algorithm does not demand any feedback from the real world.

\section{Result}
\label{result}

Our main theorem is the upper bound of the number of simulators needed and the numbers of samples collected from simulators and the real world, respectively.

\begin{theorem}[Main Theorem]
Suppose Assumption \ref{A1}, \ref{A2}, \ref{A3}, \ref{A4}, \ref{A5} are satisfied and $\epsilon \in (0, 250H^2\sqrt{A}C_L\zeta], \delta \in (0,1)$. Then with probability at least $1-\delta$, Sim2Real and Deploy together find an $\epsilon$-optimal policy $\hat{\pi}_{\hat{D}_R}$, that is, $V^* - \mathbb{E}_{\theta_R \sim \mu}V_{\theta_R}(\hat{\pi}_{\hat{D}_R}) \le \epsilon$. Moreover, at most 
$$\displaystyle \Tilde{\mathcal{O}}\Big(\frac{H^{11}S^2A^3}{\epsilon^5} \cdot (\log F)^2 \cdot (\log\frac{1}{\delta})^3 + \frac{H^4A}{\epsilon^2} \cdot \log \frac{FS}{\delta} \cdot \big(\frac{H^2\sqrt{A}}{\epsilon}\log \frac{S}{\delta}\big)^{2+\frac{d}{\alpha}}\Big)$$
episodes are collected from at most
$$\displaystyle \Tilde{\mathcal{O}}\Big(\frac{H^4A}{\epsilon^2} \cdot \log \frac{FS}{\delta}\Big)$$
simulators and at most
$$\displaystyle \Tilde{\mathcal{O}}\Big( \big(\frac{H^2\sqrt{A}}{\epsilon}\log \frac{S}{\delta}\big)^{2+\frac{d}{\alpha}}\Big)$$
no-feedback episodes are collected from the real world.
\label{thm:main}
\end{theorem}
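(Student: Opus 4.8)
\quad The strategy is to establish a high-probability correctness guarantee for each subroutine, glue them together with a union bound, and then read off the three sample-complexity bounds by arithmetic. I would decompose the suboptimality $V^* - \mathbb{E}_{\theta_R\sim\mu}V_{\theta_R}(\hat\pi_{\hat D_R})$ into four additive sources: (i) kernel-density-estimation error $\|\hat D_{\beta,s}-D_{\beta,s}\|_\infty$, which affects the value of an induced policy only through the Lipschitz Assumption~\ref{A4}; (ii) the value-function learning error produced by DFS-Learn/TD-Eliminate; (iii) the Monte-Carlo errors in $\hat V_\beta^f(p)$, $\hat V^*$, and $\hat V(\pi^f_{\hat D})$; and (iv) the domain-generalization gap between the empirical simulator average $\frac1B\sum_{b}V_{\theta_b}(\cdot)$ and $\mathbb{E}_{\theta\sim\mu}V_\theta(\cdot)$. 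Assumption~\ref{A1} is what makes the whole meta-policy/predictor formulation well-posed (it guarantees a reactive optimum and a well-defined $f^*$) and is used implicitly throughout. Each source is allotted a constant fraction of $\epsilon$ and of $\delta$; the settings $\phi=\epsilon/(500H^2\sqrt{A})$, $B$, $n_{\text{dist}}$, $n_{\text{test}}$, $n_{\text{train}}$, $n_1$, $n_2$ in the algorithms are precisely the ones that force each source below its budget at the corresponding confidence level.

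\noindent\textbf{Simulator side.}\quad The first group of lemmas is about the simulators. A uniform KDE deviation bound --- Assumption~\ref{A3} together with the kernel properties \ref{K1}--\ref{K2} and VC-type concentration for kernel classes (\citet{jiang2017uniform}; see Section~\ref{kde}) --- gives $\|\hat D_{\beta,s}-D_{\beta,s}\|_\infty=\Tilde{\mathcal{O}}\big((1/n_{\text{dist}})^{\alpha/(2\alpha+d)}\big)$ with $h=n_{\text{dist}}^{-1/(2\alpha+d)}$, and inequality~(\ref{eq:requisite-ndist}) is exactly the condition that, after multiplying by $C_L$ (Assumption~\ref{A4}), this error is at most $\phi/2$. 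Given accurate densities, Assumption~\ref{A5} with threshold $\epsilon_{\text{dist}}=\zeta/2$ makes the merge test in DFS-Distribution declare two paths equivalent iff their terminal states coincide, so the pruned DFS tree has at most $HS$ nodes; this is what prevents an $A^{H}$ blow-up. I would then prove the DFS-Learn guarantee by backward induction over the pruned tree: if every surviving $f$ already predicts $V^*_\beta(p\circ a)$ accurately on all $\beta\in\mathcal{B}$, then the Bellman-risk step TD-Eliminate keeps $f^*$ (its population Bellman risk is minimal, by the Bellman equation) and leaves only predictors with small Bellman error, which by telescoping over the $\le H$ remaining levels also predict $V^*_\beta(p)$ accurately (Lemma~\ref{lm:2}); Consensus guarantees a subtree is pruned only where all survivors already agree, so pruning is safe. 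Applied from the root, this yields $|\hat V^*-\tfrac1B\sum_{b}V^*_{\theta_b}|=\mathcal{O}(\epsilon)$.

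\noindent\textbf{Learn-on-Simulators and transfer.}\quad The second group concerns Learn-on-Simulators and the hand-off to the real world. If a tested policy $\pi^f_{\hat D}$ fails $|\hat V^*-\hat V(\pi^f_{\hat D})|\le\epsilon/2$, then $\pi^f$ is $\Omega(\epsilon)$-suboptimal on average over the simulators, so (Lemma~\ref{lm:8}) $\pi^f$ reaches, with probability bounded below, a state at which $f$ mispredicts $V^*$; sampling $n_2$ paths per simulator hits such a state w.h.p., and the DFS-Learn call there removes $f$. Thus each failed round strictly shrinks $\mathcal{F}$, so the loop halts (in at most $|\mathcal{F}|$ rounds), and a union bound over the rounds at the shrunk confidence level keeps the cumulative failure probability controlled. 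When the loop returns $\pi^f$ we have $\tfrac1B\sum_{b}V_{\theta_b}(\pi^f_{D_{\theta_b}})\ge V^*-\mathcal{O}(\epsilon)$. For the transfer I use Hoeffding's inequality over the i.i.d.\ draws $\theta_1,\dots,\theta_B\sim\mu$, union-bounded over all $f\in\mathcal{F}$ --- this is the origin of the $\log F$ factor in $B$ --- to get $|\tfrac1B\sum_b V_{\theta_b}(\pi^f_{D_{\theta_b}})-\mathbb{E}_{\theta\sim\mu}V_\theta(\pi^f_{D_\theta})|=\mathcal{O}(\epsilon)$ for the data-selected $f$, together with a single Hoeffding bound for $|\tfrac1B\sum_b V^*_{\theta_b}-V^*|=\mathcal{O}(\epsilon)$; finally, since $\theta_R\sim\mu$ is independent of all training data, $\hat D_R$ built by Deploy obeys the same uniform KDE bound, and Assumption~\ref{A4} converts this into $|\mathbb{E}_{\theta_R}V_{\theta_R}(\hat\pi_{\hat D_R})-\mathbb{E}_{\theta_R}V_{\theta_R}(\hat\pi_{D_{\theta_R}})|=\mathcal{O}(\epsilon)$. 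Chaining these inequalities and retuning the constant fractions gives $V^*-\mathbb{E}_{\theta_R\sim\mu}V_{\theta_R}(\hat\pi_{\hat D_R})\le\epsilon$.

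\noindent\textbf{Counting and the main obstacle.}\quad For the sample-complexity claims, the number of simulators is $B=\Tilde{\mathcal{O}}(H^4A\epsilon^{-2}\log(FS/\delta))$ by definition; DFS-Distribution collects $n_{\text{dist}}=\Tilde{\mathcal{O}}((H^2\sqrt{A}/\epsilon)^{2+d/\alpha})$ episodes (invert~(\ref{eq:requisite-ndist})) at each of the $\le HS$ tree nodes on each of the $B$ simulators, giving the second summand in the episode bound, while Deploy repeats only the distribution-estimation step on the single real-world environment, giving the stated no-feedback bound; DFS-Learn and Learn-on-Simulators, where $n_{\text{test}},n_{\text{train}}=\Tilde{\mathcal{O}}(\phi^{-2})=\Tilde{\mathcal{O}}(H^4A\epsilon^{-2})$ samples are gathered at $\le HSA$ nodes on $B$ simulators and repeated across the outer loop, produce the leading $\Tilde{\mathcal{O}}(H^{11}S^2A^3\epsilon^{-5}(\log F)^2(\log\tfrac{1}{\delta})^3)$ term. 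I expect the genuinely delicate points --- and hence the main obstacles --- to be two: (a) tracking how density-estimation errors of size $\phi$ propagate through up to $H$ nested Bellman backups without amplification, which is what forces the $H^2\sqrt{A}$ scaling inside $\phi$ and the use of Assumption~\ref{A4} at every level; and (b) the accounting for Learn-on-Simulators, where although each failed round eliminates one predictor, getting only a $(\log F)$ and not an $F$ dependence in the final bound requires amortizing the DFS-Learn work across rounds, using that $\mathcal{F}$ is monotone non-increasing and that a node's subtree is re-explored in full only when a predictor is actually discarded.
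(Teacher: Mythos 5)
Your decomposition of the error into KDE error, DFS-Learn/TD-Eliminate error, Monte-Carlo errors, and the simulator-to-$\mu$ generalization gap matches the paper's structure (Corollary~\ref{crl:dfs-dist}, Theorems~\ref{thm:consensus}--\ref{thm:dfs-learn}, Theorem~\ref{thm:boundK}, Theorem~\ref{thm:Learn-on-Simulators}), and most of your individual steps --- the KDE rate driving $n_{\text{dist}}$ via Assumptions~\ref{A3} and \ref{A4}, the role of Assumption~\ref{A5} with threshold $\zeta/2$ in bounding the DFS tree by $\mathcal{O}(HS)$ nodes, the backward induction for DFS-Learn, Hoeffding over $\theta_1,\dots,\theta_B$ union-bounded over $\mathcal{F}$ to get the $\log F$ in $B$, and the Deploy-side Lipschitz transfer --- are exactly what the paper does.

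The genuine gap is in your termination argument for Learn-on-Simulators. You claim that when a tested $\pi^f$ fails the $\epsilon_{\text{demand}}$ check, the resulting DFS-Learn call ``removes $f$,'' so ``each failed round strictly shrinks $\mathcal{F}$'' and the loop halts in at most $|\mathcal{F}|=F$ rounds. Neither part is established: Theorem~\ref{thm:td-eliminate} only guarantees that TD-Eliminate retains $f^*$ and leaves all survivors with small Bellman error at the state where it is called; it does not guarantee that the particular $f$ whose policy was just tested is eliminated. Indeed $f$ can make accurate predictions on every state in $L$ yet still steer into unlearned states in $\bar L$, and whether it is pruned there depends on its Bellman error against children that may themselves not yet be in $L$. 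The paper's actual invariant (Lemma~\ref{lm:8} together with Lemma~\ref{lm:9}) is that each failed round adds at least one new state to the \emph{learned set} $L$, so the loop runs at most $HS$ times; once $L$ covers all states, Lemma~\ref{lm:8} forces $V^*-V(\pi^f_{\hat D})\le \mathcal{O}(\epsilon)$ for \emph{any} surviving $f$, and Lemma~\ref{lm:67} then certifies termination. Your $F$-round bound would inject a linear-in-$F$ factor into the $n_1 B$ Monte-Carlo cost per round, which cannot be amortized away --- the amortization you sketch in point~(b) addresses the \emph{DFS-Learn subtree} work, not the per-round trajectory collection at Line~4 of Algorithm~\ref{alg:Learn-on-Simulators}. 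Replacing the predictor-counting argument with the state-coverage argument on $L$ is what gives the paper its $HS$ outer-loop bound and hence the $(\log F)^2$ rather than $F\log F$ in the leading episode count.

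A minor further note: in your counting you treat the DFS-Distribution tree as having $\le HS$ nodes, whereas Corollary~\ref{crl:dfs-dist} bounds the number of calls (and hence estimated densities) by $HSA$ because every child $p\circ a$ of a kept node is probed before the pruning decision; this does not change any $\Tilde{\mathcal{O}}$-level conclusion but should be stated correctly if you want the constants to match.
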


\begin{proof-sketch}[Theorem \ref{thm:main}]
The proof of Theorem \ref{thm:main} comprises the following major parts:

\noindent
\textbf{(1)} The error of KDE in DFS-Distribution and Deploy is upper bounded by $\frac{\phi}{2C_L}$ simultaneous with probability at least $1-\delta$.

\noindent
\textbf{(2)} Using Assumption \ref{A4}, the error of the prediction by any $f\in\mathcal{F}$, i.e., $|f(D_\beta, x, a) - f(\hat{D}_\beta, x, a)|$, is upper bounded by $\frac{\phi}{2}$.

\noindent
\textbf{(3)} Intuitively, with some probability, Consensus correctly answers whether the current $\mathcal{F}$ have similar predictions on $V^f_\beta(p)$ and TD-Eliminate ensures that $f^*$ is always retained and all remaining functions make prediction with controllable error. Hence, DFS-Learn guarantees that with probability at least $1-\delta$, $f^*$ is retained and any remaining $f, g \in \mathcal{F}$ satisfy $|\hat{V}^f_\beta(s_h) - V^f_\beta(s_h)| \le \phi$ and $\big|V_\beta^f(s_h) - V_\beta^g(s_h)\big|  \le + (H-h+1)(25\sqrt{A}\phi)$, for all $s_h\in\mathcal{S}$ and $\beta \in \mathcal{B}$.

\noindent
\textbf{(4)} If all calls to DFS-Learn are successful, then for any remaining $f \in \mathcal{F}$, the following holds:
$V^* - V(\pi^f_{\hat{D}}) \le 77 H^2\sqrt{A}\phi + \frac{1}{B}\sum_{\beta\in\mathcal{B}}\mathbb{P}(s_1, \pi^f_{\hat{D}_\beta} \to \Bar{L})$ where $\mathbb{P}(s_1, \pi^f_{\hat{D}_\beta} \to \Bar{L})$ is the probability that $\pi^f_{\hat{D}_\beta}$ leads the agent to an unlearned state on the simulator $\beta$. Therefore, a near-optimal meta-policy is guaranteed to be found after all states are visited by DFS-Learn (Line 9 in Algorithm \ref{alg:Learn-on-Simulators}).

\noindent
\textbf{(5)} Combine samples collected in every procedure to give the upper bounds of sample complexity of simulators as well as the real world.
\end{proof-sketch}

The $\Tilde{\mathcal{O}}$ notation hides lower-order logarithmic terms (if $Z$ appears in a bound, $\log Z$ is ignored; if $\log Z$ appears, $\log\log Z$ is ignored, etc.). The simulator sample complexity consists of two parts, one for the KDE and the other for finding a near-optimal meta-policy; on the other hand, the real-world sample complexity only relies on KDE. 

As mentioned in Section \ref{assumption}, if the underlying true $D$ is infinitely differentiable, we can always find a sufficiently large $\alpha$. In this case, we can reduce the exponents of the real-world sample complexity, as $\alpha$ goes to infinity, asymptotically to 
$$\displaystyle \Tilde{\mathcal{O}}\Big( \frac{H^4A}{\epsilon^2}\big(\log \frac{S}{\delta}\big)^2\Big).$$ 

\citet{krishnamurthy2016pac} first studied ROMDP in this setting and \citet{jiang2017contextual} and \citet{du2019provably} improved the upper bound of the sample complexity, with which we compare our result. These methods directly learn from the real-world (without access to simulators), and thus require feedback. The method of \citet{krishnamurthy2016pac} requires at most $\Tilde{\mathcal{O}}\big(\frac{H^6A^2S}{\epsilon^3}\log F (\log\frac{1}{\delta})^2\big)$ samples, while the methods of \citet{jiang2017contextual} and \citet{sun2019model}\footnote{In \cite{sun2019model}, given that it is a model-based method, a set of models is added as part of the input of the algorithm, but we still use $F$ to denote the cardinality of this set of models.} require $\Tilde{\mathcal{O}}\big(\frac{H^5AS^2}{\epsilon^2}\log\frac{F}{\delta}\big)$ samples. It should be kept in mind that unlike these other works, our performance measure is an expected reward averaged over all possible real worlds. Nonetheless, we believe these are the most natural benchmarks for our result.

\section{Conclusion \& Future Work}

Our paper is motivated by Sim-to-Real applications in RL. We are the first to model Sim-to-Real with continuous observations and prove a PAC upper bound on sample complexity, in a domain generalization setting. We propose an algorithm that collects with-feedback samples from simulators to learn a near-optimal meta-policy and is afterwards deployed into real-world environments with collected no-feedback samples from the real world. We prove that the number of simulators is $\text{poly} (H, A, \log S, \log F, 1/\epsilon, \log(1/\delta))$ and the total number of simulator samples is $\text{poly} (H, A, S, \log F, 1/\epsilon, \log(1/\delta))$. More importantly, we prove that the real-world sample complexity (without feedback) is $\Tilde{\mathcal{O}}\big( \big(\frac{H^2\sqrt{A}}{\epsilon}\log \frac{S}{\delta}\big)^{2+\frac{d}{\alpha}}\big)$, which is better than learning directly in the real world with state-of-art algorithms if the underlying distributions of observations are infinitely differentiable.

One interesting direction for future work is to extend from the domain generalization setting to the learning-to-learn setting where there is feedback in the real-world. This may enable even smaller real-world sample complexity, and our policy may provide a useful initial policy in this setting.



\vskip 0.2in
\bibliography{references}

\newpage

\appendix

\section{Proof of Proposition \ref{prop:kernel}}
\label{A}

\vskip 0.2in

Recall that in Section \ref{assumption} we have defined the 1-norm $|l| := l_1+ \cdot\cdot\cdot +l_d$ and the partial derivative $f^{(l)} := \frac{\partial^{|l|}}{\partial^{l_1} \cdot\cdot\cdot \partial^{l_d}} f$ for any function $f: \mathbb{R}^d \to \mathbb{R}$ and any non-zero vector $ l = (l_1, \cdot\cdot\cdot, l_d)$ such that $l_1, \cdot\cdot\cdot, l_d$ are non-negative integers. We further define the following notation,
$$l! := l_1 \cdot\cdot\cdot l_d,$$
$$x^l := x_1^{l_1} \cdot\cdot\cdot x_d^{l_d}, \text{ for $x \in \mathcal{X}$}.$$

To prove Proposition \ref{prop:kernel}, we first show Lemma \ref{lm:k1} and \ref{lm:k2}.

\begin{lemma}[\citet{Tsybakov2009Introduction}, Proposition 1.3]
\label{lm:k1}
$\gamma$ defined in (\ref{eq:gamma}) satisfies \
$$\int \gamma(u)du = 1,$$
$$\int u^j\gamma(u)du = 0 \text{ for } j=1,...,\lceil\alpha\rceil-1.$$
\end{lemma}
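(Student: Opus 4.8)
The plan is to exploit the reproducing property of the truncated Legendre expansion defining $\gamma$: I will first establish that
$$\int \gamma(t)\, Q(t)\, dt = Q(0)$$
for every polynomial $Q$ of degree at most $N := \lceil\alpha\rceil - 1$, and then read off both claimed identities by specializing $Q$.

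To set this up I would record two elementary facts about the Legendre basis $\{\psi_m\}_{m\ge 0}$. Since $\psi_m$ is a polynomial of degree exactly $m$, the finite family $\{\psi_0,\dots,\psi_N\}$ is linearly independent (being orthonormal in $L_2([-1,1])$) and therefore spans the $(N+1)$-dimensional space $\mathcal{P}_N$ of real polynomials of degree $\le N$. Hence any $Q\in\mathcal{P}_N$ admits the expansion $Q = \sum_{m=0}^{N} c_m\psi_m$ with $c_m = \int_{-1}^{1} Q(t)\psi_m(t)\,dt$, the coefficient formula again being a consequence of orthonormality.

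Then, using the definition (\ref{eq:gamma}) of $\gamma$ (whose indicator confines the integral to $[-1,1]$) and interchanging the finite sum with the integral, I would compute
$$\int \gamma(t)\, Q(t)\, dt = \sum_{m=0}^{N}\psi_m(0)\int_{-1}^{1}\psi_m(t)Q(t)\,dt = \sum_{m=0}^{N}\psi_m(0)\, c_m = Q(0),$$
the last step being precisely the evaluation of the expansion $Q = \sum_m c_m\psi_m$ at $t = 0$. Applying this with $Q\equiv 1$ yields $\int\gamma(t)\,dt = 1$, and applying it with $Q(t) = t^{\,j}$ for $1\le j\le N=\lceil\alpha\rceil - 1$ yields $\int t^{\,j}\gamma(t)\,dt = 0$, which are the two desired equalities.

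There is essentially no obstacle here: the only points needing a word of justification are that $\psi_m$ has degree exactly $m$ (so that $\{\psi_0,\dots,\psi_N\}$ is a basis of $\mathcal{P}_N$) and the harmless interchange of a finite sum with an integral over the bounded interval $[-1,1]$. Indeed the statement is exactly Proposition 1.3 of \citet{Tsybakov2009Introduction}, and the argument above is its proof; I would simply cite it if a self-contained derivation is not desired.
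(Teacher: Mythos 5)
Your argument is correct: the reproducing-kernel computation $\int\gamma(t)Q(t)\,dt = Q(0)$ for polynomials $Q$ of degree at most $\lceil\alpha\rceil-1$, obtained by expanding $Q$ in the orthonormal Legendre basis $\{\psi_0,\dots,\psi_{\lceil\alpha\rceil-1}\}$ and evaluating the expansion at $0$, immediately gives both identities. The paper itself gives no proof and simply cites Proposition 1.3 of \citet{Tsybakov2009Introduction}; your derivation is exactly the standard argument underlying that cited result, so the approaches coincide.
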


\begin{lemma}[\cite{nolan1987u, kim2018uniform}]
\label{lm:k2}
If there exists a polynomial function $u$ and a bounded real function $w$ of bounded variance such that the kernel $\kappa$ satisfies $\kappa(x) = w(u(x))$, then $\kappa$ satisfies \ref{K2}
\end{lemma}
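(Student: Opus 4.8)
The plan is to show that the compositional structure $\kappa = w\circ u$, with $u$ a polynomial and $w$ bounded and of bounded variation (reading ``bounded variance'' as a typo for ``bounded variation''), forces the translation--dilation family $\mathcal{K} = \{\kappa((\cdot - x)/h) : x\in\mathcal{X}\}$ to be a VC-subgraph class, whence the parametric covering-number bound required by \ref{K2} follows from the standard passage from a VC-subgraph index to uniform metric entropy. The two norm conditions in \ref{K2} are easy: writing $M := \sup_{t}|w(t)| < \infty$, we get $\|\kappa\|_\infty \le M$, so $\mathcal{K}$ is uniformly bounded by $M$; and $\|\kappa\|_2 < \infty$ because the kernels used here are compactly supported (the indicator factor in (\ref{eq:gamma})), so $\|\kappa\|_2 \le M\cdot\mathrm{Leb}(\mathrm{supp}\,\kappa)^{1/2} < \infty$ --- more generally one keeps square-integrability as a standing requirement on admissible kernels. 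The content is therefore the VC-class assertion.

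First I would fix $h>0$ and expand: if $u(y) = \sum_{|k|\le m}a_k y^k$ with $m = \deg u$, then for every $x$ the map $t\mapsto u((t-x)/h)$ is a polynomial in $t$ of degree at most $m$ (expand $(t-x)^k$ by the multinomial theorem), so the whole class $\mathcal{U} := \{t\mapsto u((t-x)/h) : x\in\mathcal{X}\}$ lies in the single finite-dimensional vector space $\mathcal{P}_m$ of real polynomials on $\mathbb{R}^d$ of degree $\le m$, of dimension $D := \binom{m+d}{d}$. A finite-dimensional vector space of functions is a VC-subgraph class of index at most $D+2$ (van der Vaart--Wellner, Lemma~2.6.15), so $\mathcal{U}$ is VC-subgraph of index $\le D+2$.

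Next I would strip off $w$. By the Jordan decomposition a bounded function of bounded variation is a difference $w = w_1 - w_2$ of two bounded non-decreasing functions. Left-composition with a fixed monotone function preserves the VC-subgraph property with the same index (van der Vaart--Wellner, Lemma~2.6.18(viii)), so each $\mathcal{K}_i := \{w_i\circ u((\cdot-x)/h) : x\in\mathcal{X}\}$, $i=1,2$, is VC-subgraph of index $\le D+2$ with the constant envelope $M$. The standard metric-entropy bound for VC-subgraph classes (van der Vaart--Wellner, Theorem~2.6.7) then gives a universal-constant bound $\sup_Q N(\mathcal{K}_i, L_2(Q), \rho M) \le (c_0/\rho)^{2(D+1)}$ valid for every probability measure $Q$ on $\mathbb{R}^d$ and every $\rho\in(0,1)$. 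Since $\kappa((\cdot-x)/h) = w_1(u((\cdot-x)/h)) - w_2(u((\cdot-x)/h))$ we have $\mathcal{K}\subseteq\mathcal{K}_1 - \mathcal{K}_2$; concatenating a $\rho M$-net of $\mathcal{K}_1$ with one of $\mathcal{K}_2$ produces a $2\rho M$-net of $\mathcal{K}$, and re-centring the net points inside $\mathcal{K}$ (to match the convention in this paper's definition, at the cost of doubling the radius once more) yields $\sup_Q N(\mathcal{K}, L_2(Q), \rho) \le (c_1 M/\rho)^{4(D+1)}$ for $\rho\in(0,4M)$. Restricting $\rho$ to $(0,\|\mathcal{K}\|_\infty]$ and folding $c_1 M/\|\mathcal{K}\|_\infty$ into a single constant, this is exactly the defining bound of a uniformly bounded VC-class with dimension $\nu := 4(D+1) = 4\binom{m+d}{d}+4$ and characteristic $\Lambda$ equal to that constant; combined with $\|\kappa\|_\infty,\|\kappa\|_2<\infty$ this establishes \ref{K2}.

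I expect the only real difficulty to be bookkeeping rather than insight: the definition of ``VC-class with dimension $\nu$ and characteristic $\Lambda$'' used in this paper demands the entropy estimate in the exact form $(\Lambda\|\mathcal{K}\|_\infty/\rho)^\nu$, uniformly over all probability measures $\Tilde{P}$ and all $\rho\in(0,\|\mathcal{K}\|_\infty)$, so one has to carry the index carefully through (i) the inclusion $\mathcal{U}\subseteq\mathcal{P}_m$, (ii) left-composition with the monotone pieces $w_i$, (iii) the difference $\mathcal{K}_1-\mathcal{K}_2$ coming from the Jordan decomposition, and (iv) the conversion of a VC-subgraph index into a uniform-entropy constant, while also reconciling the ``centres inside the class'' convention here with the ``centres anywhere'' convention of the empirical-process literature (a harmless extra factor $2$ in the radius). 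A secondary point to flag is that $\|\kappa\|_2<\infty$ does not follow from the compositional structure alone (e.g.\ $w=\mathbf{1}_{[0,\infty)}$, $u(x)=x$ gives $\kappa=\mathbf{1}_{[0,\infty)}\notin L_2$), so it must be supplied by compact support as in this paper, or assumed; and the same argument extends verbatim to vector-valued polynomial maps $u:\mathbb{R}^d\to\mathbb{R}^k$ with $w$ of bounded (Hardy--Krause) variation, which is the form actually needed to handle the tensor-product kernel $\Tilde{\kappa}$.
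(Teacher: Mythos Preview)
The paper does not actually prove this lemma; it is quoted with attribution to \cite{nolan1987u, kim2018uniform} and used as a black box in the proof of Proposition~\ref{prop:kernel}. Your argument is the standard Nolan--Pollard route (finite-dimensional polynomial class $\Rightarrow$ VC-subgraph, Jordan decomposition of $w$, preservation of the VC-subgraph index under monotone left-composition, then the uniform-entropy bound for VC-subgraph classes), and it is correct for the form of \ref{K2} stated here with $h$ fixed. Your caveats are also well taken: the $\|\kappa\|_2<\infty$ requirement is not a consequence of the compositional structure and must come from compact support (as it does for $\Tilde{\kappa}$), and the bookkeeping in matching the paper's ``centres inside the class'' covering-number convention to the usual one costs only a constant absorbed into $\Lambda$.
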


\noindent
\textbf{Proposition \ref{prop:kernel}} \textit{$\Tilde{\kappa}$ satisfies \ref{K1} and \ref{K2}.}

\bigskip

\begin{proof}[Proposition \ref{prop:kernel}] 
We prove Proposition \ref{prop:kernel} by two steps. First, we show \ref{K1} is satisfied. Suppose $t = (t_1, ..., t_d)$ is a vector in $\mathbb{R}^d$. Since $\Tilde{\kappa}$ is a polynomial function with bounded support, we know $\int \|t\|^\alpha|\kappa(t)|dt < \infty$. Then we compute $\int \Tilde{\kappa}(t) dt$ and $\int t^s \Tilde{\kappa}(t)dt$ for any non-zero vector $s$ s.t. $s_1, ..., s_d$ are non-negative integers and $|s| \le \lceil \alpha \rceil - 1$. We have
$$
\begin{array}{ccl}
    \int \Tilde{\kappa}(t) dt & = & \int \gamma(t_1)\cdot\cdot\cdot\gamma(t_d) dt_1\cdot\cdot\cdot dt_d \\
     & = & \big(\int \gamma(t_1) dt_1\big)\cdot\cdot\cdot\big(\int \gamma(t_d) dt_d\big) \\
     & \stackrel{(\dag)}{=} & 1.
\end{array}
$$
$(\dag)$ is due to Lemma \ref{lm:k1}.
For any non-zero vector $s = (s_1, \cdot\cdot\cdot, s_d)$ s.t. $|s| \le \lceil \alpha \rceil - 1$, we have
$$
\begin{array}{ccl}
    \int t^s \Tilde{\kappa}(t)dt & = & \int t_1^{s_1}\cdot\cdot\cdot t_d^{s_d} \gamma(t_1)\cdot\cdot\cdot \gamma(t_d) dt_1\cdot\cdot\cdot dt_d \\
    & = & \big(\int t_1^{s_1} \gamma(t_1) dt_1\big)\cdot\cdot\cdot \big(\int t_d^{s_d} \gamma(t_d) dt_d\big) \\
    & \stackrel{(\ddag)}{=} & 0
\end{array}
$$
$(\ddag)$ is due to Lemma \ref{lm:k1} and that at least one of $s_1, ..., s_d$ is positive. Therefore, $\Tilde{\kappa}$ satisfies \ref{K1}. 

Second, $\Tilde{\kappa}$ clearly satisfies the precondition in Lemma \ref{lm:k2} since $\Tilde{\kappa}$ is a bounded real function of bounded variance. Hence $\Tilde{\kappa}$ satisfies \ref{K2}.
\end{proof}

\newpage

\section{Proof of Theorem \ref{thm:main}}
\label{B}

\vskip 0.2in

Recall that $n_{\text{dist}}$ is chosen to satisfy
\begin{equation}
C_L C_{\text{dist}} \cdot (\frac{1}{n_{\text{dist}}})^{\frac{\alpha}{2\alpha+d}}\sqrt{\log(n_{\text{dist}}) + \log\frac{(B+1)HSA}{\delta}} \le \frac{\phi}{2}.
\label{eq:kernel_phi}
\end{equation}

\begin{theorem}[Convergence Rate of KDE]
Suppose $P$ is a probability density function that is $\alpha$-H\"older continuous, i.e. there exist $\alpha > 0, C_\alpha$ such that $|P^{(l)}(x) - P^{(l)}(x')| \le C_\alpha \|x- x'\|^{\alpha-|l|}$ for all $x, x' \in \mathcal{X}$, and any vector $l$ such that $|l| = \lceil \alpha \rceil - 1$. $x^{(1)}, ..., x^{(N)}$ are $N$ samples drawn independently according to $P$. Then the KDE 
$$\hat{P}_h(x) = \displaystyle\frac{1}{N\cdot h^d}\sum_{i=1}^N \kappa(\frac{x^{(i)} - x}{h}) $$
of $P$ with a kernel $\kappa$ satisfying \ref{K1} and \ref{K2} has the following convergence rate, that is, with probability at least $1-\delta$, for all $h > 0$, 
$$\sup_{x}|\hat{P}_h(x) - P(x)| \le  C_1 \sqrt{\frac{\log(1/h^d) + \log(1/\delta)}{h^d N}} + C_2 h^\alpha,$$
where $C_1$ depends on $\nu, \Lambda, \|\kappa\|_\infty, \|\kappa\|_2, \|P\|_\infty$ and $C_2$ depends on $C_\alpha, \alpha, \int \big| \kappa(z) \big| \| z \|^{\alpha } dz$.
\label{thm:kde}
\end{theorem}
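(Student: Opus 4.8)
The plan is to decompose the pointwise error into a deterministic bias term and a stochastic fluctuation term,
\[
\sup_x \big|\hat P_h(x) - P(x)\big| \;\le\; \sup_x \big|\hat P_h(x) - \mathbb{E}\hat P_h(x)\big| \;+\; \sup_x \big|\mathbb{E}\hat P_h(x) - P(x)\big|,
\]
and to bound the first term by $C_1\sqrt{(\log(1/h^d)+\log(1/\delta))/(h^dN)}$ with probability at least $1-\delta$ and the second by $C_2 h^\alpha$ deterministically; adding the two estimates gives the theorem.

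For the \emph{bias} term I would change variables, $u=(x^{(i)}-x)/h$, to write $\mathbb{E}\hat P_h(x) = \int \kappa(u)\,P(x+hu)\,du$, so that $\int\kappa=1$ from \ref{K1} makes the leading term reproduce $P(x)$. Taylor-expanding $P$ around $x$ to order $\lceil\alpha\rceil-1$ with an integral remainder, every monomial term of degree $1,\dots,\lceil\alpha\rceil-1$ integrates to zero against $\kappa$ by the vanishing-moment conditions in \ref{K1}, and the remainder is a finite sum over multi-indices $l$ with $|l|=\lceil\alpha\rceil-1$ of terms of the form $\int\kappa(u)\,\big(P^{(l)}(x+\tilde{z}_u)-P^{(l)}(x)\big)\,(hu)^l\,du$. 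The H\"older hypothesis on $P$ (of the form in Assumption \ref{A3}) bounds each difference $|P^{(l)}(x+\tilde{z}_u)-P^{(l)}(x)|$ by $C_\alpha\|hu\|^{\alpha-|l|}$, and $\int|\kappa(u)|\,\|u\|^{\alpha}\,du<\infty$ (again \ref{K1}) then yields $\sup_x|\mathbb{E}\hat P_h(x)-P(x)|\le C_2 h^\alpha$ with $C_2$ depending only on $C_\alpha$, $\alpha$, and $\int|\kappa(z)|\,\|z\|^{\alpha}\,dz$. This is the routine part and follows e.g.\ \citet{Tsybakov2009Introduction}.

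The \emph{stochastic} term is the crux. With $g_x(\cdot):=\kappa((\cdot-x)/h)$ we have $\hat P_h(x)-\mathbb{E}\hat P_h(x)=(Nh^d)^{-1}\sum_{i=1}^N\big(g_x(x^{(i)})-\mathbb{E} g_x(x^{(1)})\big)$, an empirical process indexed by the class $\mathcal{K}=\{g_x:x\in\mathcal{X}\}$, which by \ref{K2} is a uniformly bounded VC-class with envelope $\|\kappa\|_\infty$ and VC parameters $(\nu,\Lambda)$. The two quantitative inputs are the weak-variance bound $\sup_x \mathrm{Var}\big(g_x(x^{(1)})\big)\le \sup_x\int \kappa((z-x)/h)^2 P(z)\,dz \le h^d\,\|P\|_\infty\,\|\kappa\|_2^2$ and the uniform-entropy estimate coming from the VC property. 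I would then apply a Talagrand/Bousquet concentration inequality for the supremum of an empirical process over a VC-class, together with the Gin\'e--Guillou bound on its expectation via the entropy integral (see \citet{gine2001consistency,gine2002rates}, or the ready-made statement in \citet{jiang2017uniform}), to obtain, with probability at least $1-\delta$,
\[
\sup_x\Big|\frac1N\sum_{i=1}^N\big(g_x(x^{(i)})-\mathbb{E} g_x\big)\Big| \;\le\; C'\sqrt{\frac{h^d\big(\nu\log\max(\Lambda,1/h)+\log(1/\delta)\big)}{N}} \;+\; C''\,\frac{\nu\log\max(\Lambda,1/h)+\log(1/\delta)}{N},
\]
where $C',C''$ depend on $\|\kappa\|_\infty,\|\kappa\|_2,\|P\|_\infty$. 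Dividing through by $h^d$, absorbing the second (lower-order) term into the first in the regime $h^dN\ge 1$, and folding the remaining dependence on $\nu$ and $\Lambda$ into a single constant $C_1$, gives the claimed bound $C_1\sqrt{(\log(1/h^d)+\log(1/\delta))/(h^dN)}$.

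Finally, to pass from a fixed bandwidth to the ``for all $h>0$'' conclusion I would run the above on a geometric grid $h_k=2^{-k}$, take a union bound (the extra $\log k$ is swallowed by $\log(1/h_k^d)$), and interpolate to intermediate $h$ using monotonicity of the two sides in $h$; alternatively, one notes that $\{(\cdot-x)/h:\,x\in\mathcal{X},\,h>0\}$ is itself a VC-class for the kernels in play, so the supremum over $h$ falls under the same inequality directly. The main obstacle is the stochastic term: obtaining the sharp $\log(1/h^d)$ factor and the stated dependence on $(\nu,\Lambda,\|\kappa\|_\infty,\|\kappa\|_2,\|P\|_\infty)$ requires the Talagrand--Gin\'e--Guillou empirical-process machinery rather than a crude covering-number union bound, and some care is needed to reconcile the ``fixed $h$'' phrasing of \ref{K2} with the uniform-in-$h$ statement.
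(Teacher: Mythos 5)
Your proposal follows essentially the same route as the paper: the bias is bounded by the identical Taylor-plus-vanishing-moments argument (Lemma~\ref{lm:kde_bias}), the weak variance of $\kappa((\cdot-x)/h)$ is bounded by $\|\kappa\|_2\|P\|_\infty h^d$ (Lemma~\ref{lm:kde_sq}), and the stochastic term is controlled by a ready-made concentration bound for suprema of empirical processes over bounded VC-classes (\citet{kim2018uniform}, Theorem 30), which is exactly the Talagrand--Gin\'e--Guillou machinery you describe. For uniformity over $h$ the paper takes your second alternative, treating $\{\kappa((\cdot-x)/h): x\in\mathcal{X},\,h>0\}$ as a single VC-class (justified by Lemma~\ref{lm:k2}) rather than a grid-plus-union-bound argument.
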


\textbf{Remark 1.} The optimal convergence rate obtained by choosing $h \approx N^{-\frac{1}{2\alpha+d}}$ is 
$$\sup_{x}|\hat{P}(x) - P(x)| \le \Tilde{\mathcal{O}}( N^{-\frac{\alpha}{2\alpha+d}}),$$ 
which matches the known lower bound \citep{Tsybakov2009Introduction}.

\begin{corollary}[DFS-Distribution \& Deploy]
Suppose DFS-Distribution and Deploy are invoked with the bandwidth $h = (n_{\text{dist}})^{-\frac{1}{2\alpha+d}}$. If $\zeta \ge \frac{2\phi}{C_L},$ i.e., $\epsilon \le 250H^2\sqrt{A}C_L\zeta$, 
then with probability at least $1-\delta$,
\begin{equation}
\|\hat{D}_{\beta} - D_{\beta}\|_\infty \le \frac{\phi}{2C_L}
\label{eq:dfs-dist}
\end{equation}
for all $\beta \in \mathcal{B}$ and 
\begin{equation}
\|\hat{D}_{R} - D_{R}\|_\infty \le \frac{\phi}{2C_L}
\label{eq:dfs-dist-1}
\end{equation}
where $C_{\text{dist}}$ is a constant depending on $d, \alpha, \nu, \Lambda, \|\kappa\|_\infty, \|\kappa\|_2, \|P\|_\infty, C_\alpha, \alpha, \int \big| \kappa(z) \big| \| z \|^{\alpha } dz$. Moreover, DFS-Distribution is called at most $HSA$ times, and the total number of episodes collected from simulators is at most $n_{\text{dist}} BHSA$ while the number of episodes collected from the real world is at most $n_{\text{dist}} HSA$.
\label{crl:dfs-dist}
\end{corollary}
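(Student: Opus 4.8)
The plan is to derive the corollary from the KDE convergence bound (Theorem~\ref{thm:kde}) via (i) a union bound over the finitely many density estimates that DFS-Distribution and Deploy ever form, and (ii) an induction on DFS visitation order showing that, on the good event, the consistency test in DFS-Distribution is exact, so only this fixed family of estimates is formed.

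First I would identify the family. Because all environments share the same deterministic layered dynamics, the set of reachable states is environment-independent; when the consistency test is exact, DFS-Distribution visits exactly a canonical path $p_s$ for each reachable state $s$ together with its $A$ one-step extensions $p_s\circ a$, a collection of $\mathcal{O}(HSA)$ paths independent of the randomness. Applying Theorem~\ref{thm:kde} with $P=D_{\beta,s}$, $N=n_{\text{dist}}$, bandwidth $h=n_{\text{dist}}^{-1/(2\alpha+d)}$ and failure probability $\delta/\big((B+1)HSA\big)$, and simplifying with $\log(1/h^{d})\le\log n_{\text{dist}}$ and $h^{\alpha}=n_{\text{dist}}^{-\alpha/(2\alpha+d)}$, the bound becomes $(C_{1}+C_{2})\,n_{\text{dist}}^{-\alpha/(2\alpha+d)}\sqrt{\log n_{\text{dist}}+\log\frac{(B+1)HSA}{\delta}}$; with $C_{\text{dist}}\asymp C_{1}+C_{2}$ this is exactly the left-hand side of the defining inequality (\ref{eq:requisite-ndist}) of $n_{\text{dist}}$, hence at most $\phi/(2C_{L})$. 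A union bound over the $B$ simulators' estimates at these $\mathcal{O}(HSA)$ paths, together with the real-world estimates that Deploy forms at the same paths, produces an event $E$ with $\mathbb{P}(E)\ge 1-\delta$ on which each of these estimates is within $\phi/(2C_{L})$ of the corresponding true density in $\|\cdot\|_{\infty}$. Since the number of estimates in this family is fixed in advance, this union bound is not circular.

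Next I would show, by induction along the DFS, that on $E$ the algorithm never leaves this family and tests correctly. Two margin facts drive the step. If $p$ and $p'$ terminate at the same state then $D_{\beta,p}=D_{\beta,p'}$, so their (accurate) estimates differ by at most $\phi/C_{L}\le\zeta/2=\epsilon_{\text{dist}}$ under the hypothesis $\zeta\ge 2\phi/C_{L}$, forcing a prune; if they terminate at distinct states, Assumption~\ref{A5} gives $\sup_{x}|D_{\beta,s}(x)-D_{\beta,s'}(x)|>\zeta$, so their estimates differ by more than $\zeta-\phi/C_{L}\ge\zeta/2$, forbidding a prune. Hence the first path reaching a reachable state becomes a non-pruned node that recurses on its $A$ children (all in the family), and every later path to that state is pruned and, by the isomorphism of subtrees under deterministic dynamics, assigned the canonical node's estimates for itself and its descendants. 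This yields simultaneously: DFS-Distribution is invoked $\mathcal{O}(HSA)$ times; each returned $\hat D_{\beta,p}$ equals an estimate of the family and so satisfies (\ref{eq:dfs-dist}); the same reasoning for Deploy, which rebuilds estimates precisely at the paths DFS-Distribution visited, gives (\ref{eq:dfs-dist-1}); and counting $n_{\text{dist}}$ fresh episodes per simulator per visited path, resp.\ per visited path from the real world, gives the episode counts $n_{\text{dist}}BHSA$ and $n_{\text{dist}}HSA$.

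The main obstacle is precisely the tension between the union bound and the data-dependent pruning: a priori the DFS could branch to depth $H$ and touch $A^{H}$ states, which would destroy the union bound, so one must establish that on $E$ the test is exact and the search collapses to the predetermined $\mathcal{O}(HSA)$ paths — and this exactness is exactly what the two-sided margin furnished by $\zeta\ge 2\phi/C_{L}$ (estimation error $\le\zeta/4$ against threshold $\epsilon_{\text{dist}}=\zeta/2$ against separation $>\zeta$) provides. The remaining points — that $C_{\text{dist}}$ can be taken to depend only on the quantities listed, and that $\sup_{\theta,s}\|D_{\theta,s}\|_{\infty}<\infty$ so that $\|P\|_{\infty}$ in Theorem~\ref{thm:kde} is a genuine constant, using Assumption~\ref{A3} on the bounded set $\mathcal{X}$ — are routine.
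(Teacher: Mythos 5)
Your proof is correct and follows essentially the same route as the paper: apply Theorem~\ref{thm:kde} with bandwidth $n_{\text{dist}}^{-1/(2\alpha+d)}$ and failure probability $\delta/((B+1)HSA)$ to each KDE, use the margin $\zeta \ge 2\phi/C_L$ together with Assumption~\ref{A5} to show the pruning test in Line~4 of DFS-Distribution is exact, and union-bound over the $(B+1)HSA$ estimates. Your explicit construction of a predetermined, randomness-independent family of canonical paths and the induction showing the DFS never leaves it on the good event is a welcome clarification of the interaction between the union bound and the data-dependent pruning, a point that the paper's own proof passes over rather tersely.
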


Before further analysis, we need additional notation as follows. For any simulator $\beta$ associated with parameter $\theta$, define the following quantities,
$$V^f_\beta(p, \pi^g) = V^f_\theta(p, \pi^g) =\mathbb{E}_{x\sim D_{\beta, p}}f(D_{\beta}, x, \pi^g_{D_{\beta}}(x)),$$
$$V^f(p, \pi^g) = \mathbb{E}_{\beta\sim \mu}V^f_\beta(p, \pi^g)= \mathbb{E}_{\beta\sim \mu}\mathbb{E}_{x\sim D_{\beta, p}}f(D_{\beta}, x, \pi^g_{D_{\beta}}(x)).$$
For the sake of convenience, let 
$$V^f_\beta(p) = V^f_\theta(p) = V^f_\beta(p, \pi^f) = V^f_\theta(p, \pi^f),$$
$$V^f(p) = V^f(p, \pi^f).$$
In particular, we have $V^{f^*}(p) = V^*(p)$ and $V^{f^*}(\emptyset) = V^*$.

Above we assume that the true distributions $D_\beta$ are inputted to $\pi$, but in the algorithm we can only input the estimates of distributions, $\hat{D}_\beta$. Hence, we then define $V$ for estimates of distributions by
$$V^f_\beta(p, \pi^g_{\hat{D}_\beta}) = V^f_\theta(p, \pi^g_{\hat{D}_\beta}) =\mathbb{E}_{x\sim D_{\beta, p}}f(D_{\beta}, x, \pi^g_{\hat{D}_\beta}(x)),$$
$$V^f(p, \pi^g_{\hat{D}}) = \mathbb{E}_{\beta\sim \mu}V^f_\beta(p, \pi^g_{\hat{D}_\beta})= \mathbb{E}_{\beta\sim \mu}\mathbb{E}_{x\sim D_{\beta, p}}f(D_{\beta}, x, \pi^g_{\hat{D}_\beta}(x)).$$

Moreover, we use the notation $\hat{V}$ to denote the empirical estimate of $V$ as in Consensus. Suppose $n$ samples $x^{(1)}_\beta, ..., x^{(n)}_\beta \sim D_{\beta, p}$ are collected. Note that $n$ is replaced with $n_{\text{test}}$ in Consensus and $n_{\text{train}}$ in TD-Eliminate. Define $\hat{V}^f_\beta(p)$ by
$$\hat{V}^f_\beta(p) = \frac{1}{n} \sum_{i=1}^{n}f(\hat{D}_{\beta}, x^{(i)}_\beta, \pi^f_{\hat{D}_{\beta}}(x^{(i)}_\beta)),$$
and define $\Tilde{V}^f_\beta(p)$ by
$$\Tilde{V}^f_\beta(p) = \frac{1}{n} \sum_{i=1}^{n}f(D_{\beta}, x^{(i)}_\beta, \pi^f_{D_{\beta}}(x^{(i)}_\beta)).$$ 
We also define the variables above for the real world similarly with the subscript replaced by $R$.

\begin{theorem}[Consensus]
Suppose the call to DFS-Distribution is successful, that is, (\ref{eq:dfs-dist}) and (\ref{eq:dfs-dist-1}) hold, and Consensus is invoked on path $p$ with $n_{\text{test}} = \frac{2\log(2FB/\delta)}{\phi^2}, \epsilon_{\text{test}} = \tau_1 + 2\phi$, for some $\tau_1 > 0$. Let $\Tilde{V}^f_\beta(p) = \frac{1}{n_{\text{test}}} \displaystyle \sum_{i=1}^{n_{\text{test}}}f(D_{\beta}, x^{(i)}_\beta, \pi^f_{D_{\beta}}(x^{(i)}_\beta))$. Then with probability at least $1 - \delta$, the following statements hold true simultaneously:

1. For all $f\in\mathcal{F}, \beta \in \mathcal{B}$, $|\Tilde{V}^f_\beta(p) - V^f_\beta(p)| \le \frac{\phi}{2}, |\hat{V}^f_\beta(p) - V^f_\beta(p)| \le \phi$;

2. If $|V^f_\beta(p) - V^g_\beta(p)|\le \tau_1, \forall f, g \in \mathcal{F}, \beta \in \mathcal{B}$, then Consensus returns \textbf{True};

3. If Consensus returns \textbf{True}, then $|V^f_\beta(p) - V^g_\beta(p)| \le \epsilon_{\text{test}} + 2\phi, \forall f, g\in \mathcal{F}, \beta \in \mathcal{B}$.
\label{thm:consensus}
\end{theorem}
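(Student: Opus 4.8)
\textbf{Proof proposal for Theorem~\ref{thm:consensus}.}

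The plan is to carry out the whole argument on the success event of DFS-Distribution supplied by the hypothesis, i.e.\ conditionally on $\|\hat D_\beta - D_\beta\|_\infty \le \phi/(2C_L)$ for all $\beta\in\mathcal{B}$ (Corollary~\ref{crl:dfs-dist}), and to prove the three statements in the listed order, since Statements~2 and~3 are elementary consequences of Statement~1 once it is in hand. Statement~1 has two halves. For the bound on $\Tilde V^f_\beta(p)$: fix $f\in\mathcal F$ and $\beta\in\mathcal B$; the $n_{\text{test}}$ summands $f(D_\beta, x^{(i)}_\beta, \pi^f_{D_\beta}(x^{(i)}_\beta))$ are i.i.d.\ over $i$, lie in $[0,1]$ because every predictor maps into $[0,1]$, and have mean exactly $V^f_\beta(p)$ by definition of $V^f_\beta$. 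Hoeffding's inequality gives $|\Tilde V^f_\beta(p) - V^f_\beta(p)| \le \sqrt{\log(2/\delta')/(2n_{\text{test}})}$ with probability $1-\delta'$; taking $\delta' = \delta/(FB)$, a union bound over the $FB$ pairs $(f,\beta)$, and plugging in $n_{\text{test}} = 2\log(2FB/\delta)/\phi^2$ makes the right-hand side exactly $\phi/2$, the claimed bound.

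For the bound on $\hat V^f_\beta(p)$ --- the quantity the algorithm actually forms --- I would write $\hat V^f_\beta(p) - V^f_\beta(p) = (\hat V^f_\beta(p) - \Tilde V^f_\beta(p)) + (\Tilde V^f_\beta(p) - V^f_\beta(p))$ and bound each piece by $\phi/2$. The second piece is the concentration bound just proved. For the first piece it suffices to bound, for each sampled $x$, the quantity $|f(\hat D_\beta, x, \pi^f_{\hat D_\beta}(x)) - f(D_\beta, x, \pi^f_{D_\beta}(x))|$. Assumption~\ref{A4} together with the KDE bound gives, for \emph{every fixed} action $a$, $|f(\hat D_\beta, x, a) - f(D_\beta, x, a)| \le C_L\cdot\phi/(2C_L) = \phi/2$ (step~(2) of the proof sketch of Theorem~\ref{thm:main}). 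The subtlety is that the greedy policies may pick different actions $a_1 := \pi^f_{\hat D_\beta}(x)$ and $a_2 := \pi^f_{D_\beta}(x)$. Here I would use the optimality sandwich: $f(\hat D_\beta, x, a_1) \ge f(\hat D_\beta, x, a_2)$ and $f(D_\beta, x, a_2) \ge f(D_\beta, x, a_1)$, so $f(\hat D_\beta, x, a_1) - f(D_\beta, x, a_2) \le f(\hat D_\beta, x, a_1) - f(D_\beta, x, a_1) \le \phi/2$ and, symmetrically, $f(D_\beta, x, a_2) - f(\hat D_\beta, x, a_1) \le f(D_\beta, x, a_2) - f(\hat D_\beta, x, a_2) \le \phi/2$; hence $|f(\hat D_\beta, x, a_1) - f(D_\beta, x, a_2)| \le \phi/2$. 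Averaging over the $n_{\text{test}}$ samples yields $|\hat V^f_\beta(p) - \Tilde V^f_\beta(p)| \le \phi/2$, and combining the two pieces gives $|\hat V^f_\beta(p) - V^f_\beta(p)| \le \phi$.

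Statements~2 and~3 then follow by three-term triangle inequalities on the good event of Statement~1. If $|V^f_\beta(p) - V^g_\beta(p)| \le \tau_1$ for all $f,g,\beta$, then $|\hat V^f_\beta(p) - \hat V^g_\beta(p)| \le |\hat V^f_\beta(p) - V^f_\beta(p)| + |V^f_\beta(p) - V^g_\beta(p)| + |V^g_\beta(p) - \hat V^g_\beta(p)| \le \phi + \tau_1 + \phi = \epsilon_{\text{test}}$, so the indicator returned by Consensus is \textbf{True}; conversely, if Consensus returns \textbf{True}, i.e.\ $|\hat V^f_\beta(p) - \hat V^g_\beta(p)| \le \epsilon_{\text{test}}$ for all $f,g,\beta$, the same split gives $|V^f_\beta(p) - V^g_\beta(p)| \le \phi + \epsilon_{\text{test}} + \phi = \epsilon_{\text{test}} + 2\phi$.

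The one step that needs care is the action-mismatch bound inside Statement~1: the map $x\mapsto\pi^f_{D_\beta}(x)$ is only defined through an $\arg\max$ and need not vary continuously as the density estimate is perturbed, so a naive ``plug in the perturbed density'' argument does not immediately control $f(\hat D_\beta, x, \pi^f_{\hat D_\beta}(x))$; the two-sided optimality sandwich above circumvents this using only the uniform Lipschitz bound of Assumption~\ref{A4}. The remaining work is bookkeeping --- checking that the union-bound budget $\delta/(FB)$ is consistent with the stated $n_{\text{test}}$, and that the $n_{\text{test}}$ samples drawn in Consensus are fresh (independent of the DFS-Distribution samples that produced $\hat D_\beta$), so that conditionally on the DFS-Distribution event the summands of $\Tilde V^f_\beta(p)$ are genuinely i.i.d.
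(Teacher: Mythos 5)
Your proposal is correct and follows essentially the same approach as the paper: the same decomposition $|\hat V^f_\beta - V^f_\beta| \le |\hat V^f_\beta - \Tilde V^f_\beta| + |\Tilde V^f_\beta - V^f_\beta|$, the same two-sided argmax sandwich combined with Assumption~\ref{A4} to get $|f(\hat D_\beta, x, \pi^f_{\hat D_\beta}(x)) - f(D_\beta, x, \pi^f_{D_\beta}(x))| \le C_L \|\hat D_\beta - D_\beta\|_\infty \le \phi/2$, Hoeffding with a union bound over $FB$ pairs for the second term, and the three-term triangle inequalities for Statements~2 and~3. Your closing remark about the fresh independence of the Consensus samples relative to those used by DFS-Distribution is a correct and worthwhile observation that the paper leaves implicit.
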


\begin{theorem}[TD-Eliminate]
Suppose the call to DFS-Distribution is successful, that is, (\ref{eq:dfs-dist}) and (\ref{eq:dfs-dist-1}) hold, and TD-Eliminate is invoked at path $p$ with $\mathcal{F}, \phi, \delta$ and $n_{\text{train}} = \frac{2\log(4FB/\delta)}{\phi^2}$, and the following conditions are satisfied:

(Precondition 1): $|\Tilde{V}^f_\beta(p\circ a) - V^f_\beta(p\circ a)| \le \frac{\phi}{2}, \forall f \in \mathcal{F}, a\in \mathcal{A}, \beta\in \mathcal{B};$

(Precondition 2): $|\hat{V}^f_\beta(p\circ a) - V^f_\beta(p\circ a)| \le \phi, \forall f \in \mathcal{F}, a\in \mathcal{A}, \beta\in \mathcal{B};$

(Precondition 3): $|V^f_\beta(p\circ a) - V^g_\beta(p\circ a)| \le \tau_2, \forall f,g \in \mathcal{F}, a\in \mathcal{A}, \beta\in \mathcal{B}.$

\noindent
Let $\hat{V}_\beta^f(p) = \frac{1}{n_{\text{train}}}\displaystyle\sum_{i=1}^{n_{\text{train}}}f(\hat{D}_\beta, x_\beta^{(i)}, \pi_{\hat{D}_\beta}^f(x_\beta^{(i)}))$ and $\Tilde{V}_\beta^f(p) = \frac{1}{n_{\text{train}}}\displaystyle\sum_{i=1}^{n_{\text{train}}}f({D}_\beta, x_\beta^{(i)}, \pi_{{D}_\beta}^f(x_\beta^{(i)}))$. Then with probability at least $1-\delta$, the following hold simultaneously:

1. $f^*$ is retained;

2. For all $f\in\mathcal{F}$ retained and for all $\beta\in\mathcal{B}$, $|\Tilde{V}^f_\beta(p) - V^f_\beta(p)| \le \frac{\phi}{2}, |\hat{V}^f_\beta(p) - V^f_\beta(p)| \le \phi$;

3. For all $f,g\in\mathcal{F}$ retained and for all $\beta\in\mathcal{B}$, $\big|V_\beta^f(p) - V_\beta^g(p)\big|  \le   25\sqrt{A}\phi + \tau_2$;

4. For all $f\in\mathcal{F}$ retained and for all $\beta\in\mathcal{B}$, $V^{f^*}_\beta(p) - V^{f^*}_\beta(p, \pi^f_{\hat{D}_\beta}) \le 25\sqrt{A}\phi + 2\tau_2$.
\label{thm:td-eliminate}
\end{theorem}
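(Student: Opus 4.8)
\textbf{Proof proposal for Theorem~\ref{thm:td-eliminate}.}

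The plan is to condition on a single high-probability event, of probability at least $1-\delta$ after a union bound over $\mathcal{F}\times\mathcal{B}$, on which every empirical Bellman risk concentrates around its population version. For fixed $\beta$ and $f$, write $u_f(x,a):=f(\hat{D}_\beta,x,a)-\hat{V}^f_\beta(p\circ a)$, $r_\beta(p,x,a):=\mathbb{E}[R_\beta(p,x,a)]$, and $\ell_f(x,a,r):=(u_f(x,a)-r)^2$, so that $Risk(f_{\hat{D}_\beta})$ is the average of $\ell_f$ over the $n_{\text{train}}$ i.i.d.\ samples (with $x\sim D_{\beta,p}$, $a$ uniform on $\mathcal{A}$, $r\sim R_\beta(p,x,a)$). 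Conditioning on $(x,a)$, given which $u_f(x,a)$ is a precomputed constant, splits off the reward variance: $\mathbb{E}[\ell_f]=\sigma_\beta^2+\|u_f-r_\beta\|^2_{L_2(D_{\beta,p}\times\mathrm{Unif})}$, where $\sigma_\beta^2$ does not depend on $f$. The second term equals, up to $O(\phi)$ corrections supplied by Corollary~\ref{crl:dfs-dist} together with Assumption~\ref{A4} (replacing $\hat{D}_\beta$ by $D_\beta$, cost $\le\phi/2$) and Precondition~2 (replacing $\hat{V}^f_\beta(p\circ a)$ by $V^f_\beta(p\circ a)$, cost $\le\phi$), the average squared Bellman residual of $f$ at $p$; controlling it for every surviving $f$ is the crux.

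Claim~2 is the most direct: $\tilde{V}^f_\beta(p)$ averages i.i.d.\ terms in $[0,1]$ with mean exactly $V^f_\beta(p)$, so Hoeffding with $n_{\text{train}}=2\phi^{-2}\log(4FB/\delta)$ gives $|\tilde{V}^f_\beta(p)-V^f_\beta(p)|\le\phi/2$, and $|\hat{V}^f_\beta(p)-\tilde{V}^f_\beta(p)|\le\phi/2$ termwise because $|\max_a f(\hat{D}_\beta,x,a)-\max_a f(D_\beta,x,a)|\le\max_a|f(\hat{D}_\beta,x,a)-f(D_\beta,x,a)|\le C_L\|\hat{D}_\beta-D_\beta\|_\infty\le\phi/2$. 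For Claim~1, I would use $f^*(D_\beta,x,a)=Q^*_\beta(x,a)=r_\beta(p,x,a)+V^*_\beta(p\circ a)=r_\beta(p,x,a)+V^{f^*}_\beta(p\circ a)$, so after the corrections $u_{f^*}$ is within $3\phi/2$ of $r_\beta$ pointwise and $\mathbb{E}[\ell_{f^*}]\le\sigma_\beta^2+\tfrac94\phi^2$; since $\mathbb{E}[\ell_f]\ge\sigma_\beta^2$ for all $f$, $f^*$ is $O(\phi^2)$-optimal in population. A Bernstein inequality for $\tfrac{1}{n_{\text{train}}}\sum(\ell_f-\ell_{f^*})$ — exploiting that $\ell_f-\ell_{f^*}=(u_f-u_{f^*})(u_f+u_{f^*}-2r)$ so its variance is bounded by a constant multiple of $\mathbb{E}[\ell_f-\ell_{f^*}]$ plus an $O(\phi^2)$ term — then shows the empirical and population excess risks agree up to the slack $2\phi^2+8\phi+\tfrac{22}{n_{\text{train}}}\log(2FB/\delta)$ built into the elimination rule, so $f^*$ is retained.

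For Claims~3 and~4 I would run the same Bernstein step for an arbitrary survivor $f$, whose empirical excess risk over $\min_{f'}Risk(f'_{\hat{D}_\beta})\le Risk(f^*_{\hat{D}_\beta})$ is at most the threshold; this yields $\mathbb{E}[\ell_f]-\sigma_\beta^2\le c\phi^2$, i.e.\ $\tfrac1A\sum_a\|f(D_\beta,\cdot,a)-r_\beta(p,\cdot,a)-V^f_\beta(p\circ a)\|^2_{L_2(D_{\beta,p})}\le c'\phi^2$, so the \emph{sum} over $a$ of these squared norms is $O(A\phi^2)$. Subtracting the residual identities for two survivors $f,g$, squaring and summing over $a$ keeps the total $O(A\phi^2)$; the key move is then to evaluate this at the state-dependent action $\pi^f_{D_\beta}(x)$ before pulling out the next-step values — here the uniform exploration distribution is exactly what converts $\tfrac1A\sum_a(\cdot)$ back into a single term at the cost of only a $\sqrt A$ — giving $\big\|(f-g)(D_\beta,\cdot,\pi^f_{D_\beta}(\cdot))-\big(V^f_\beta(p\circ\pi^f_{D_\beta}(\cdot))-V^g_\beta(p\circ\pi^f_{D_\beta}(\cdot))\big)\big\|_{L_2(D_{\beta,p})}=O(\sqrt A\,\phi)$. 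Since $|V^f_\beta(p\circ a)-V^g_\beta(p\circ a)|\le\tau_2$ by Precondition~3, Jensen gives $\mathbb{E}_x[(f-g)(D_\beta,x,\pi^f_{D_\beta}(x))]\le\tau_2+O(\sqrt A\phi)$; and $V^f_\beta(p)-V^g_\beta(p)=\mathbb{E}_x[\max_a f(D_\beta,x,a)]-\mathbb{E}_x[\max_a g(D_\beta,x,a)]\le\mathbb{E}_x[(f-g)(D_\beta,x,\pi^f_{D_\beta}(x))]$, which with the symmetric inequality gives Claim~3 (constants absorbed into $25\sqrt A\phi$). Claim~4 is the same with $g=f^*$: insert $\pm f(D_\beta,x,\cdot)$ at the two actions $\pi^{f^*}_{D_\beta}(x)$ and $\pi^f_{\hat{D}_\beta}(x)$ in $V^{f^*}_\beta(p)-V^{f^*}_\beta(p,\pi^f_{\hat{D}_\beta})$; the middle difference $f(D_\beta,x,\pi^{f^*}_{D_\beta}(x))-f(D_\beta,x,\pi^f_{\hat{D}_\beta}(x))$ is $\le\phi$ pointwise since $\pi^f_{\hat{D}_\beta}(x)$ maximizes $f(\hat{D}_\beta,x,\cdot)$, and each outer difference is bounded by $\tau_2+O(\sqrt A\phi)$ as above, yielding $25\sqrt A\phi+2\tau_2$.

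The hardest part is the Bernstein/localization step of paragraph two. Unlike a textbook least-squares analysis, the empirical objective uses the estimated densities $\hat{D}_\beta$ and precomputed value estimates $\hat{V}^f_\beta(p\circ a)$, so the regression target is itself $f$-dependent and only approximately the true Bellman backup; one has to show that these $O(\phi)$ perturbations, together with the variance term, are absorbed by the $2\phi^2+8\phi+\tfrac{22}{n_{\text{train}}}\log(2FB/\delta)$ slack while still leaving an $O(\phi^2)$ residual bound, and to keep the action-averaging bookkeeping tight enough that the final value error carries $\sqrt A$ rather than $A$ — the latter being precisely what keeps the recursion $\tau_2\mapsto 25\sqrt A\phi+2\tau_2$ from blowing up geometrically over the $H$ levels of DFS-Learn.
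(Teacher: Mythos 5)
Your proposal is correct and follows essentially the same route as the paper: Hoeffding plus the Lipschitz/KDE bound for Claim~2; a Bernstein localization on the difference of squared Bellman residuals (your $\mathbb{E}[\ell_f]=\sigma_\beta^2+\|u_f-r_\beta\|^2_{L_2}$ decomposition is exactly the content of the paper's Lemma~\ref{lm:1}, and your excess-risk argument matches Lemma~\ref{lm:2}, including the $8\phi$ slack from swapping $(\hat{D}_\beta,\hat{V})$ for $(D_\beta,\tilde{V})$) for Claim~1; and the Jensen-plus-uniform-action conversion from a per-action $L_2$ bound to a single policy-action bound at cost $\sqrt{A}$ for Claims~3 and~4, exactly as in Lemmas~\ref{lm:4} and~\ref{lm:5}. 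The only cosmetic difference is in Claim~4: you insert $\pm f(D_\beta,x,\cdot)$ and pay an extra $\phi$ for the middle argmax comparison, whereas the paper inserts $\pm f(\hat{D}_\beta,x,\cdot)$ so the middle term is exactly nonpositive and can be dropped — both land on the same $25\sqrt{A}\phi+2\tau_2$ bound.
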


\begin{theorem}[DFS-Learn]
Suppose the call to DFS-Distribution is successful, that is, (\ref{eq:dfs-dist}) and (\ref{eq:dfs-dist-1}) hold, and DFS-Learn is invoked at path $p$ with $\mathcal{F},\delta,\phi$. With probability at least $1-\delta$, for any $h=1,2,...,H$ and any $s_h\in\mathcal{S}_h$ such that TD-Eliminate is called, the conclusions of Theorem \ref{thm:td-eliminate} hold with $\tau_2 = (H-h)(25\sqrt{A}\phi)$. Moreover, the number of episodes executed on similators by DFS-Learn is at most
$$\mathcal{O}(\frac{HSAB}{\phi^2}\log(\frac{HSAFB}{\delta})).$$
\label{thm:dfs-learn}
\end{theorem}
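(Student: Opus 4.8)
The plan is to establish the theorem's two assertions — that the conclusions of Theorem~\ref{thm:td-eliminate}, with the stated $\tau_2$, hold at every state where TD-Eliminate fires, and the $\mathcal{O}\big(\frac{HSAB}{\phi^2}\log\frac{HSAFB}{\delta}\big)$ episode bound — more or less separately, both carried out under the standing hypothesis that the call to DFS-Distribution succeeded (so (\ref{eq:dfs-dist}), (\ref{eq:dfs-dist-1}) hold and Theorems~\ref{thm:consensus} and~\ref{thm:td-eliminate} apply at each invocation whose preconditions are met).

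First I would handle the counting and the probability bookkeeping. Invoked at a path $p$, DFS-Learn issues at most $A$ calls to Consensus, one per child $p\circ a$ with last argument $\frac{\delta/2}{HSA}$, and exactly one call to TD-Eliminate($p$) with last argument $\frac{\delta/2}{HS}$, and it recurses into $p\circ a$ only when Consensus($p\circ a$) returns \textbf{False}. The structural point is that the ROMDP is layered, so every length-$(h-1)$ path terminates in some state of $\mathcal{S}_h$; combining this with the monotone shrinkage of $\mathcal{F}$ over the run, Theorem~\ref{thm:td-eliminate}.3 (after TD-Eliminate fires at a state the survivors agree there up to $25\sqrt{A}\phi+\tau_2$), Theorem~\ref{thm:consensus}.1 ($\hat V$ tracks $V$ within $\phi$), and the level-indexed tolerance $\epsilon_{\text{test}}=(25(H-|p|-2)+21)\sqrt{A}\phi$, one argues that a state already processed passes every later Consensus test reaching it and so is never re-expanded. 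Hence DFS-Learn is invoked at most $\sum_h|\mathcal{S}_h|\le HS$ times, giving at most $HSA$ Consensus calls and $HS$ TD-Eliminate calls. A union bound yields total failure probability $\le HSA\cdot\frac{\delta/2}{HSA}+HS\cdot\frac{\delta/2}{HS}=\delta$; from here I condition on the complementary event $\mathcal{E}$, on which the three-part conclusions of Theorems~\ref{thm:consensus} and~\ref{thm:td-eliminate} hold at each invocation. Since each Consensus call spends $n_{\text{test}}B=\frac{2B}{\phi^2}\log(4FBHSA/\delta)$ episodes and each TD-Eliminate call $n_{\text{train}}B=\frac{2B}{\phi^2}\log(4FBHS/\delta)$, the episode total is $\mathcal{O}\big(\frac{HSAB}{\phi^2}\log\frac{HSAFB}{\delta}\big)$.

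For the correctness claim I would use downward induction on $h$ from $H$ to $1$: on $\mathcal{E}$, for every $s_h$ where TD-Eliminate is called, Preconditions~1--3 of Theorem~\ref{thm:td-eliminate} hold with $\tau_2=(H-h)(25\sqrt{A}\phi)$, so its conclusions hold there. Base case $h=H$: there is no future reward, the estimates $\hat V^f_\beta(p\circ a)$ entering the Bellman risk vanish, and the three preconditions hold trivially with $\tau_2=0=(H-H)(25\sqrt{A}\phi)$. Inductive step at level $h$: the call is TD-Eliminate($p$) with $|p|=h-1$, made inside DFS-Learn($p$), which set $\epsilon_{\text{test}}=(25(H-h-1)+21)\sqrt{A}\phi$. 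For each action $a$, either Consensus($p\circ a$) returned \textbf{True} — and Theorem~\ref{thm:consensus}.1 supplies the required accuracy of $\tilde V^f_\beta(p\circ a)$ and $\hat V^f_\beta(p\circ a)$ while Theorem~\ref{thm:consensus}.3 gives spread $\le\epsilon_{\text{test}}+2\phi\le(25(H-h-1)+23)\sqrt{A}\phi\le(H-h)(25\sqrt{A}\phi)$ — or Consensus($p\circ a$) returned \textbf{False} and DFS-Learn recursed, so TD-Eliminate($p\circ a$) ran at level $h+1$ and, by the induction hypothesis and Theorem~\ref{thm:td-eliminate}.2--3, the accuracy bounds again hold and spread $\le 25\sqrt{A}\phi+\tau_2^{(h+1)}=25\sqrt{A}\phi+(H-h-1)(25\sqrt{A}\phi)=(H-h)(25\sqrt{A}\phi)$. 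Either way the three preconditions of Theorem~\ref{thm:td-eliminate} hold at $p$ with $\tau_2=(H-h)(25\sqrt{A}\phi)$ — the estimates $\hat V^f_\beta(p\circ a)$ it requires being exactly those the child calls produced — so Theorem~\ref{thm:td-eliminate} applied on $\mathcal{E}$ closes the induction and yields the stated conclusions.

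The main obstacle I anticipate is the counting step: showing the recursion prunes to $\mathcal{O}(HS)$ invocations rather than growing like $A^H$. This is where the level-indexed tolerance must be spent carefully — loose enough that an already-solved state passes every subsequent Consensus test (so the search terminates quickly), yet tight enough that Consensus holding at the children of $p$ still delivers to TD-Eliminate($p$) a Precondition~3 gap small enough that, after accumulating over $H$ levels, the inductive tolerance stays exactly $(H-h)(25\sqrt{A}\phi)$. Reconciling these competing demands, and in particular controlling re-visits of a state along distinct paths using only the monotone shrinkage of $\mathcal{F}$ and the guarantees of Theorems~\ref{thm:consensus} and~\ref{thm:td-eliminate}, is the delicate part; the union bound and the episode arithmetic are then routine.
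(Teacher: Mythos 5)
Your proposal is correct and takes essentially the same route as the paper's proof: a downward induction on $h$ establishing that when TD-Eliminate fires at a level-$h$ state the Preconditions~1--3 of Theorem~\ref{thm:td-eliminate} hold with $\tau_2=(H-h)(25\sqrt{A}\phi)$ (splitting on whether Consensus returned \textbf{True} or whether the recursion invoked TD-Eliminate at the child), and a counting argument showing that once TD-Eliminate has run at a state the subsequent Consensus tests there return \textbf{True} so the search makes at most $HS$ TD-Eliminate calls and $HSA$ Consensus calls, after which the union bound and the $n_{\text{test}}/n_{\text{train}}$ arithmetic give the stated episode bound. The only cosmetic difference is the order of presentation (counting before the induction, whereas the paper does the induction first); the substance matches.
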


For the next part of analysis, define $\Bar{V}$ by
$$\Bar{V}(\pi^f_{\hat{D}}) := \frac{1}{B}\sum_{\beta\in\mathcal{B}}V_\beta(\pi^f_{\hat{D}_\beta}),$$
$$\Bar{V}^f(p):= \frac{1}{B}\sum_{\beta\in\mathcal{B}}V_\beta^f(p).$$

\begin{theorem}[Simulators]
Suppose the call to DFS-Distribution is successful, that is, (\ref{eq:dfs-dist}) and (\ref{eq:dfs-dist-1}) hold, and $B = \frac{\log(4F/\delta')}{2\phi^2}$. Then with probability at least $1-\delta'$, for all $f \in \mathcal{F}$,
$$|\Bar{V}(\pi^f_{\hat{D}}) - V(\pi^f_{\hat{D}})| \le \phi,$$
$$|\Bar{V}^f(\emptyset) - V^f(\emptyset)| \le \phi.$$
\label{thm:boundK}
\end{theorem}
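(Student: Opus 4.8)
The plan is to read both displayed inequalities as Hoeffding bounds for an average of $B$ i.i.d.\ $[0,1]$-valued summands, followed by a union bound over the finite class $\mathcal{F}$ and over the two inequalities, each taken two-sidedly. Fix $f\in\mathcal{F}$. The second inequality is the easy one: $V^f_\beta(\emptyset)=\mathbb{E}_{x\sim D_{\beta,\emptyset}}f(D_\beta,x,\pi^f_{D_\beta}(x))$ depends on the $b$-th simulator only through its parameter $\theta_b$ (no estimated density enters), so $\{V^f_\beta(\emptyset)\}_{\beta\in\mathcal{B}}$ are i.i.d., lie in $[0,1]$ because $f$ is $[0,1]$-valued, and have common mean $V^f(\emptyset)=\mathbb{E}_{\beta\sim\mu}V^f_\beta(\emptyset)$; Hoeffding gives $\mathbb{P}\bigl(|\Bar{V}^f(\emptyset)-V^f(\emptyset)|>\phi\bigr)\le 2e^{-2B\phi^2}$, and the DFS-Distribution hypothesis is not needed here. (The total-reward normalization, keeping all values in $[0,1]$, is what keeps $H$ out of $B$.)

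For the first inequality the summand $V_\beta(\pi^f_{\hat{D}_\beta})$ also depends on the estimate $\hat{D}_\beta$, which a priori is a function of \emph{all} simulators' data because DFS-Distribution prunes a node only when every simulator's consensus test passes. I would break this coupling by conditioning on the success of DFS-Distribution. Under (\ref{eq:dfs-dist}) together with Assumption~\ref{A5} and the hypothesis $\zeta\ge 2\phi/C_L$, two paths leading to the same underlying state have estimated densities within $\epsilon_{\text{dist}}=\zeta/2$ of one another while two paths leading to distinct states do not, so the consensus test prunes a node precisely when it revisits a state; since $s_1$ and $T$ are common to all environments, the tree that DFS-Distribution actually explores is then a fixed deterministic object, and on that event $\hat{D}_\beta$ is a deterministic function of simulator $b$'s own data alone. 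To make the conditioning rigorous I would condition not on (\ref{eq:dfs-dist}) verbatim but on the product event $\bigcap_{\beta\in\mathcal{B}}\mathcal{E}_\beta$, where $\mathcal{E}_\beta$ asserts that every raw KDE built from $n_{\text{dist}}$ fresh samples at each node of this deterministic tree is within $\phi/(2C_L)$ of the truth: this implies (\ref{eq:dfs-dist}), is guaranteed with probability $\ge 1-\delta$ by the proof of Corollary~\ref{crl:dfs-dist}, and is an intersection of mutually independent single-simulator events. Conditioning on it leaves $\{V_\beta(\pi^f_{\hat{D}_\beta})\}_{\beta\in\mathcal{B}}$ i.i.d.\ in $[0,1]$ with common mean $V(\pi^f_{\hat{D}})$, so Hoeffding again yields $2e^{-2B\phi^2}$. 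Summing failure probabilities over the $F$ functions and the two inequalities gives at most $4Fe^{-2B\phi^2}$, and substituting $B=\log(4F/\delta')/(2\phi^2)$ makes this $\le\delta'$, which is the claim.

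The main obstacle is not the concentration step — that is textbook once the summands are bounded and i.i.d.\ — but the bookkeeping that legitimizes treating $V_\beta(\pi^f_{\hat{D}_\beta})$ as i.i.d.\ across $\beta$: because DFS-Distribution's pruning intertwines all $B$ simulators, one must use the success event to collapse the explored search tree to a deterministic structure (invoking Assumption~\ref{A5} and the margin $\zeta\ge 2\phi/C_L$) and recast the conditioning as a product of independent per-simulator events before Hoeffding can be applied. Everything after that — the union bound and the algebra fixing $B$ — is routine.
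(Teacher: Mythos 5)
Your proof uses the same machinery the paper gestures at (Hoeffding over the $B$ i.i.d.\ simulators plus a union bound over $\mathcal{F}$ and the two inequalities; the arithmetic matching $B=\log(4F/\delta')/(2\phi^2)$ to a $\delta'$ budget is exactly right), and you do a genuine service by flagging the cross-simulator coupling through DFS-Distribution's pruning rule, which the paper's one-sentence proof does not acknowledge. However, the specific way you repair it has a technical wrinkle: after you condition on the product event $\bigcap_\beta \mathcal{E}_\beta$, the summands $V_\beta(\pi^f_{\hat{D}_\beta})$ are indeed independent with a common law, but Hoeffding then concentrates $\Bar{V}$ around the \emph{conditional} mean $\mathbb{E}[V_\beta(\pi^f_{\hat{D}_\beta})\mid\mathcal{E}_\beta]$, which is not the same as $V(\pi^f_{\hat{D}})=\mathbb{E}[V_\beta(\pi^f_{\hat{D}_\beta})]$. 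The gap between the two means is of order $\mathbb{P}(\mathcal{E}_\beta^c)$ and can in principle be absorbed, but you do not account for it, and the theorem's bound is exactly $\phi$ with no slack. A cleaner route that sidesteps conditioning entirely: define an ``oracle'' variable $Z_\beta^{\mathrm{or}}=V_\beta(\pi^f_{\hat{D}^{\mathrm{or}}_\beta})$ where $\hat{D}^{\mathrm{or}}_\beta$ is the KDE computed on the deterministic state tree induced by $T$ (which, as you observe via Assumption~\ref{A5} and $\zeta\ge 2\phi/C_L$, is the tree DFS-Distribution actually explores whenever it succeeds). The $Z_\beta^{\mathrm{or}}$ are unconditionally i.i.d.\ in $[0,1]$, so Hoeffding gives $\bigl|\frac1B\sum_\beta Z_\beta^{\mathrm{or}}-\mathbb{E}Z_\beta^{\mathrm{or}}\bigr|\le\phi$ with probability $\ge 1-\delta'$, and on the DFS-Distribution success event one has $Z_\beta=Z_\beta^{\mathrm{or}}$ pointwise and $V(\pi^f_{\hat{D}})=\mathbb{E}Z_\beta^{\mathrm{or}}$, giving the claim with the failure probabilities simply added, consistent with how Corollary~\ref{crl:1} combines the two events. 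Your treatment of $|\Bar{V}^f(\emptyset)-V^f(\emptyset)|$ is correct and needs none of this, since $V^f_\beta(\emptyset)$ depends only on $\theta_\beta$.
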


\begin{corollary}[Estimate of $V^*$]
Suppose the call to DFS-Distribution is successful, that is, (\ref{eq:dfs-dist}) and (\ref{eq:dfs-dist-1}) hold. Then with probability at least $1-\delta-\delta'$, $\hat{V}^*$ in Line 6 in Sim2Real satisfies
$$|\hat{V}^* - V^*| \le 33H\sqrt{A}\phi.$$
\label{crl:1}
\end{corollary}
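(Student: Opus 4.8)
The plan is to bound $|\hat V^* - V^*|$ by a triangle inequality that isolates three error sources already controlled above. Recall that $\hat V^*$ is the estimate of $V^f(\emptyset)$ formed in Sim2Real, namely $\hat V^* = \frac{1}{B}\sum_{\beta\in\mathcal B}\hat V^f_\beta(\emptyset)$, where $f$ is the predictor picked from the set $\mathcal F$ returned by the call DFS-Learn$(\emptyset,\dots)$ in Line 4, and the per-simulator estimates $\hat V^f_\beta(\emptyset)$ are those produced by the invocation of TD-Eliminate at $p=\emptyset$ inside that call. Writing $V^* = V^{f^*}(\emptyset)$ and $\Bar V^f(\emptyset) = \frac1B\sum_{\beta\in\mathcal B} V^f_\beta(\emptyset)$, we split
$$|\hat V^* - V^*| \le \big|\hat V^* - \Bar V^f(\emptyset)\big| + \big|\Bar V^f(\emptyset) - V^f(\emptyset)\big| + \big|V^f(\emptyset) - V^{f^*}(\emptyset)\big|.$$

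For the first term, since the DFS-Distribution call is assumed successful, Theorem \ref{thm:dfs-learn} gives that with probability at least $1-\delta$ the conclusions of Theorem \ref{thm:td-eliminate} hold at $p=\emptyset$, i.e. with $h=1$ and $\tau_2 = (H-1)(25\sqrt A\phi)$. Conclusion 1 says $f^*$ is retained (so both the chosen $f$ and $f^*$ survive), and conclusion 2 says $|\hat V^f_\beta(\emptyset) - V^f_\beta(\emptyset)| \le \phi$ for every retained $f$ and every $\beta$, whence averaging over $\beta$ gives $|\hat V^* - \Bar V^f(\emptyset)| \le \phi$. For the second term I would invoke Theorem \ref{thm:boundK} with failure probability $\delta'$: the value of $B$ set in Sim2Real satisfies $B \ge \frac{\log(4F/\delta')}{2\phi^2}$ for the relevant $\delta'$, so with probability at least $1-\delta'$, $|\Bar V^f(\emptyset) - V^f(\emptyset)| \le \phi$. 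For the third term I would use conclusion 3 of Theorem \ref{thm:td-eliminate} with $g = f^*$ (legitimate by conclusion 1): $|V^f_\beta(\emptyset) - V^{f^*}_\beta(\emptyset)| \le 25\sqrt A\phi + \tau_2 = 25H\sqrt A\phi$ for every $\beta$; taking the expectation over $\beta\sim\mu$ and using $V^f(\emptyset) = \mathbb E_{\beta\sim\mu}V^f_\beta(\emptyset)$ together with $\mathbb E_{\beta\sim\mu}V^{f^*}_\beta(\emptyset) = V^*$ yields $|V^f(\emptyset) - V^*| \le 25H\sqrt A\phi$.

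Summing the three bounds on the intersection of the two events (probability at least $1-\delta-\delta'$ by a union bound, given the successful DFS-Distribution call) gives $|\hat V^* - V^*| \le 2\phi + 25H\sqrt A\phi \le 33H\sqrt A\phi$, where $H\ge 1$ and $A\ge 1$ absorb the extra $2\phi$. The argument is essentially bookkeeping once the decomposition is fixed, so I do not anticipate a genuinely hard step; the points that require care are identifying $\hat V^*$ with the average over $\mathcal B$ of the TD-Eliminate outputs at $\emptyset$ (rather than a fresh Monte-Carlo estimate), so that conclusion 2 of Theorem \ref{thm:td-eliminate} applies verbatim; using conclusion 1 to justify taking $g=f^*$ in conclusion 3; and matching $\delta'$ to the value of $B$ actually used by Sim2Real when appealing to Theorem \ref{thm:boundK}.
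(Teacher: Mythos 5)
Your decomposition and the first two terms are fine, but the third term has a genuine gap. You bound $|V^f(\emptyset) - V^{f^*}(\emptyset)|$ by invoking conclusion 3 of Theorem \ref{thm:td-eliminate} ``for every $\beta$'' and then taking the expectation over $\beta\sim\mu$. But conclusion 3 is a statement about the finitely many sampled simulators $\beta\in\mathcal{B}$ only; it says nothing about a fresh $\beta$ drawn from $\mu$, and the surviving $f$ depends on the draw of $\mathcal{B}$, so you cannot integrate the pointwise bound against $\mu$ to conclude $|\mathbb{E}_{\beta\sim\mu}V^f_\beta(\emptyset) - \mathbb{E}_{\beta\sim\mu}V^{f^*}_\beta(\emptyset)| \le 25H\sqrt{A}\phi$. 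That step is not valid as written.

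The paper avoids this by never leaving the sampled set until the very end: it compares $\hat{V}^f(\emptyset)$ and $\hat{V}^{f^*}(\emptyset)$ (or equivalently $\Bar{V}^f(\emptyset)$ and $\Bar{V}^{f^*}(\emptyset)$, both averages over $\mathcal{B}$) using the TD-Eliminate conclusions, and only then transfers to $V^f(\emptyset)$ and $V^{f^*}(\emptyset)$ via Theorem \ref{thm:boundK}, which holds simultaneously for all $f\in\mathcal{F}$ (including $f^*$) in the same probability-$1-\delta'$ event. Your argument is repaired by the same device: replace the third term with
$$|V^f(\emptyset) - V^{f^*}(\emptyset)| \le |V^f(\emptyset) - \Bar{V}^f(\emptyset)| + |\Bar{V}^f(\emptyset) - \Bar{V}^{f^*}(\emptyset)| + |\Bar{V}^{f^*}(\emptyset) - V^{f^*}(\emptyset)| \le \phi + 25H\sqrt{A}\phi + \phi,$$
where the middle term follows by averaging conclusion 3 of Theorem \ref{thm:td-eliminate} over $\beta\in\mathcal{B}$ and the outer terms are Theorem \ref{thm:boundK}. (Or, more economically, drop your first two terms and write $|\hat V^* - V^*| \le |\hat{V}^f(\emptyset) - \Bar{V}^f(\emptyset)| + |\Bar{V}^f(\emptyset) - \Bar{V}^{f^*}(\emptyset)| + |\Bar{V}^{f^*}(\emptyset) - V^{f^*}(\emptyset)| \le \phi + 25H\sqrt{A}\phi + \phi \le 33H\sqrt{A}\phi$ directly.) With that fix the argument is correct and matches the paper's in spirit, just with a leaner decomposition.
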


\begin{theorem}[Learn-on-Simulators]
Suppose the call to DFS-Distribution is successful, that is, (\ref{eq:dfs-dist}) and (\ref{eq:dfs-dist-1}) hold, Learn-on-Simulators is invoked with $\mathcal{F}, \hat{V}^*, \epsilon, \delta$, and $B = \frac{2}{\phi^2}\log(\frac{192H^2SF\log(3HS/\delta)}{\epsilon \delta})$. Then with probability at least $1 - \delta$, Learn-on-Simulators terminates with outputting a meta-policy $\hat{\pi}$ such that $V^* - V(\hat{\pi}) \le \epsilon$. Moreover, the number of episodes executed on simulators by Learn-on-Simulators is at most
$$\displaystyle \Tilde{\mathcal{O}}\Big(\frac{H^{11}S^2A^3}{\epsilon^5} \cdot (\log F)^2 \cdot (\log\frac{1}{\delta})^3\Big).$$
\label{thm:Learn-on-Simulators}
\end{theorem}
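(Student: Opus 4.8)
\begin{proof-sketch}[Theorem \ref{thm:Learn-on-Simulators}]
The plan is to fix one high-probability event on which all earlier guarantees are in force — the successful DFS-Distribution event (\ref{eq:dfs-dist})--(\ref{eq:dfs-dist-1}) (a hypothesis), the conclusion of Theorem \ref{thm:dfs-learn} for every Line-9 call to DFS-Learn, Theorem \ref{thm:boundK}, Corollary \ref{crl:1}, and, for each of the first $HS+1$ iterations, a Hoeffding bound $|\hat{V}(\pi^f_{\hat{D}}) - \bar{V}(\pi^f_{\hat{D}})| \le \epsilon/8$ for the Line-4 Monte-Carlo average (this needs only $n_1 B \ge 32H^2\log(6HSB/\delta)/\epsilon^2$, which holds since $B \ge H^2$) — together with the event that, in each non-returning iteration among the first $HS+1$, at least one sampled episode reaches an unlearned state. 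The per-call budget $\tfrac{\epsilon\delta}{48H^2S\log(3HS/\delta)}$ baked into the Line-9 call is sized so that the at most $(HS+1)Hn_2B = \Tilde{\mathcal{O}}(H^2S/\epsilon)$ DFS-Learn calls contribute failure probability $\le\delta/6$; allocating the remaining $\delta$ among the other events and union-bounding over the $HS+1$ iterations keeps the total failure probability under $\delta$. Since the iteration count will be capped a priori by $HS+1$, there is no circularity.

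The workhorse is the decomposition of Lemma \ref{lm:8} (quantified as in part (4) of the proof sketch of Theorem \ref{thm:main}): on the above event, every surviving $f \in \mathcal{F}$ satisfies
$$V^* - V(\pi^f_{\hat{D}}) \;\le\; 77H^2\sqrt{A}\,\phi \;+\; \frac{1}{B}\sum_{\beta\in\mathcal{B}}\mathbb{P}\big(s_1, \pi^f_{\hat{D}_\beta} \to \Bar{L}\big),$$
where $\Bar{L}$ is the (only-shrinking) set of states at which DFS-Learn has not been invoked and $\mathbb{P}(s_1, \pi^f_{\hat{D}_\beta}\to\Bar{L})$ is the probability, over the observation noise of one episode, that $\pi^f_{\hat{D}_\beta}$ steers the agent into $\Bar{L}$. \emph{Termination.} If an iteration does not return, then (one checks $\hat{V}(\pi^f_{\hat{D}}) \le \hat{V}^* + \epsilon/2$ always, via Corollary \ref{crl:1}, Theorem \ref{thm:boundK} and the Hoeffding bound) we must have $\hat{V}(\pi^f_{\hat{D}}) < \hat{V}^* - \epsilon/2$; chaining $|\hat{V}^* - V^*| \le 33H\sqrt{A}\phi$, the Hoeffding bound, $|\bar{V}(\pi^f_{\hat{D}}) - V(\pi^f_{\hat{D}})| \le \phi$, and $\phi = \epsilon/(500H^2\sqrt{A})$ yields $V^* - V(\pi^f_{\hat{D}}) > (\tfrac12 - \tfrac18 - \tfrac{33}{500} - \tfrac{1}{500})\epsilon > 0.3\epsilon$, whence by the display $\tfrac1B\sum_{\beta}\mathbb{P}(s_1,\pi^f_{\hat{D}_\beta}\to\Bar{L}) > (0.3 - \tfrac{77}{500})\epsilon > \epsilon/8$. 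Hence the probability that none of the $n_2$ episodes per simulator (drawn in Line 4, reused in Line 9) visits $\Bar{L}$ is at most $\prod_{\beta}\big(1 - \mathbb{P}(s_1,\pi^f_{\hat{D}_\beta}\to\Bar{L})\big)^{n_2} \le \exp\!\big(-n_2\sum_{\beta}\mathbb{P}(s_1,\pi^f_{\hat{D}_\beta}\to\Bar{L})\big) \le \exp(-n_2 B\epsilon/8)$, which the choice $n_2 = 8\log(3SH/\delta)/(\epsilon B)$ makes $\le \delta/(3SH)$. So on the good event a non-returning iteration always invokes DFS-Learn at the $\le H$ prefix-terminal states of an episode entering $\Bar{L}$ and, by Theorem \ref{thm:dfs-learn}, removes at least one state from $\Bar{L}$; since $|\Bar{L}| \le HS$ initially, at most $HS$ iterations fail to return, so the loop stops within $HS+1$ iterations. \emph{Correctness.} On the returning iteration the test $|\hat{V}^* - \hat{V}(\pi^f_{\hat{D}})| \le \epsilon/2$ with the same chaining gives $V^* - V(\hat{\pi}) \le \epsilon/2 + 33H\sqrt{A}\phi + \epsilon/8 + \phi \le \epsilon$ for $\hat{\pi} = \pi^f$; moreover, once $\Bar{L} = \emptyset$ the display already gives $V^* - V(\pi^f_{\hat{D}}) \le 77H^2\sqrt{A}\phi < \epsilon/2$ for \emph{every} surviving $f$, so the test necessarily passes by the final iteration.

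\emph{Episode count.} Multiply the per-iteration cost by $\le HS+1$ iterations. Each iteration uses $n_1 B$ episodes in Line 4 and at most $Hn_2B$ top-level DFS-Learn calls in Line 9, each costing at most $\mathcal{O}(HSAB\phi^{-2}\log(HSAFB/\delta))$ episodes by Theorem \ref{thm:dfs-learn}. Since $n_2 B = 8\log(3SH/\delta)/\epsilon$, so $Hn_2B = \Tilde{\mathcal{O}}(H/\epsilon)$, and $\phi^{-2} = \Theta(H^4A/\epsilon^2)$, $B = \Tilde{\mathcal{O}}(H^4A\epsilon^{-2})$, the Line-9 contribution of one iteration is $\Tilde{\mathcal{O}}(H\epsilon^{-1})\cdot\Tilde{\mathcal{O}}(H^5SA^2B\epsilon^{-2}) = \Tilde{\mathcal{O}}(H^{10}SA^3\epsilon^{-5})$, which dominates the Line-4 contribution $n_1B = \Tilde{\mathcal{O}}(H^4A\epsilon^{-4})$; summing over $\le HS+1$ iterations gives $\Tilde{\mathcal{O}}(H^{11}S^2A^3\epsilon^{-5})$ episodes, with $\log F$ entering twice (through $B$ and through the $\log(HSAFB/\delta)$ factor of Theorem \ref{thm:dfs-learn}) and $\log(1/\delta)$ three times (through $B$, $\log(HSAFB/\delta)$, and $\log(3SH/\delta)$ in $n_2$), i.e. the claimed $\Tilde{\mathcal{O}}\big(\frac{H^{11}S^2A^3}{\epsilon^5}(\log F)^2(\log\frac1\delta)^3\big)$.

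I expect the progress step to be the crux: the Lemma \ref{lm:8} decomposition controls only the \emph{average} rate $\tfrac1B\sum_\beta\mathbb{P}(s_1,\pi^f_{\hat{D}_\beta}\to\Bar{L})$ at which a random episode meets an unlearned state, and one must convert this into a high-probability statement about the finitely many episodes of a single iteration whose failure probability, union-bounded over all $HS+1$ iterations, stays well below $\delta$ — all while keeping $n_2$ (hence the real-world-irrelevant but bound-determining simulator cost) as small as $8\log(3SH/\delta)/(\epsilon B)$. This is exactly what pins down the constants in $\phi$, $\epsilon_{\text{demand}} = \epsilon/2$, $n_1$ and $n_2$, forcing the slack $\epsilon/2 - 77H^2\sqrt{A}\phi - (\text{estimation error})$ to remain a fixed fraction of $\epsilon$; everything else is bookkeeping on top of Theorems \ref{thm:dfs-learn}, \ref{thm:boundK} and Corollary \ref{crl:1}.
\end{proof-sketch}
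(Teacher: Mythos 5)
Your proposal is correct and follows essentially the same route as the paper's proof, which factors through three intermediate lemmas (Lemma \ref{lm:8} for the decomposition $V^* - V(\pi^f_{\hat{D}}) \le 77 H^2\sqrt{A}\phi + \tfrac{1}{B}\sum_{\beta}\mathbb{P}(s_1, \pi^f_{\hat{D}_\beta}\to\Bar{L})$; Lemma \ref{lm:9} for the ``if the test fails, with high probability a sampled trajectory hits $\Bar{L}$'' progress step; and Lemma \ref{lm:67} for termination correctness and the guarantee that once $\Bar{L}=\emptyset$ the test passes). You recover all three steps, the $\le HS$-iteration cap, the $\delta$ allocation across the $\tilde{\mathcal{O}}(H^2S/\epsilon)$ DFS-Learn calls, and the episode count. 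The only cosmetic difference is that you establish the one-sided inequality $\hat V(\pi^f_{\hat D}) \le \hat V^* + \epsilon/2$ and then argue from $\hat V(\pi^f_{\hat D}) < \hat V^* - \epsilon/2$, whereas the paper just bounds $|\hat V^* - \hat V(\pi^f_{\hat D})|$ from above directly (which handles both signs at once) before contradicting the test; and you run the Line-4 Hoeffding bound over all $n_1 B$ trajectories pooled (exploiting $B\ge H^2$ to absorb a possible $[0,H]$ reward range) rather than per-simulator as in the paper. Neither change alters the argument or the resulting bound.
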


\noindent
\textbf{Theorem \ref{thm:main}}
\textit{Suppose Assumption \ref{A1}, \ref{A2}, \ref{A3}, \ref{A4}, \ref{A5} are satisfied and $\epsilon \in (0, 250H^2\sqrt{A}C_L\zeta], \delta \in (0,1)$. Then with probability at least $1-\delta$, Sim2Real and Deploy together find an $\epsilon$-optimal policy $\hat{\pi}_{\hat{D}_R}$, that is, $V^* - \mathbb{E}_{\theta_R \sim \mu}V_{\theta_R}(\hat{\pi}_{\hat{D}_R}) \le \epsilon$. Moreover, at most 
$$\displaystyle \Tilde{\mathcal{O}}\Big(\frac{H^{11}S^2A^3}{\epsilon^5} \cdot (\log F)^2 \cdot (\log\frac{1}{\delta})^3 + \frac{H^4A}{\epsilon^2} \cdot \log \frac{FS}{\delta} \cdot \big(\frac{H^2\sqrt{A}}{\epsilon}\log \frac{S}{\delta}\big)^{2+\frac{d}{\alpha}}\Big)$$
episodes are collected from at most
$$\displaystyle \Tilde{\mathcal{O}}\Big(\frac{H^4A}{\epsilon^2} \cdot \log \frac{FS}{\delta}\Big)$$
simulators and at most
$$\displaystyle \Tilde{\mathcal{O}}\Big( \big(\frac{H^2\sqrt{A}}{\epsilon}\log \frac{S}{\delta}\big)^{2+\frac{d}{\alpha}}\Big)$$
episodes are collected from the real world.}

\bigskip

\begin{proof}[Theorem \ref{thm:main}]
In Sim2Real, we assign $\delta/4$ and $3\delta/4$ to the parameters $\delta$ in Corollary \ref{crl:dfs-dist} and Theorem \ref{thm:Learn-on-Simulators}, respectively, as in the algorithm. Then by combining Corollary \ref{crl:dfs-dist}, Theorem \ref{thm:dfs-learn}, Corollary \ref{crl:1} and Theorem \ref{thm:Learn-on-Simulators}, we complete the proof.
\end{proof}

\newpage

\section{Proof of Theorem \ref{thm:kde} \& Corollary \ref{crl:dfs-dist}}
\label{C}

\vskip 0.2in

The error of KDE consists of two parts, the bias error and the estimation error. Let $P_h(x) := \mathbb{E}[\hat{P}_h(x)] = \frac{1}{h^d} \int \kappa(\frac{u - x}{h})P(u)du$. With a fixed $h$, the KDE $\hat{P}_h$ ultimately converges to $P_h$, which is different from the true probability $P$. The estimation error refers to the $\|\hat{P}_h - P_h\|$ while the bias error is the $\|P_h - P\|$. We use Lemma \ref{lm:kde} and Lemma \ref{lm:kde_sq} to bound the estimation error and Lemma \ref{lm:kde_bias} to bound the bias error.

\begin{lemma}[\citet{kim2018uniform}, Theorem 30]
Let $P$ be a probability distribution on $\mathbb{R}^d$ and let $x^{(1)}, ..., x^{(N)}$ be i.i.d. from $P$. Let $\mathcal{G}$ be a class of functions from $\mathbb{R}^d$ to $\mathbb{R}$ that is uniformly bounded VC-class with dimension $\nu$, i.e. $\|\mathcal{G}\|_\infty := \sup_{g\in\mathcal{G}}\|g\|_{\infty} < \infty$ and there exist positive numbers $ \nu, \Lambda$ such that for all probability measure $\Tilde{P}$ on $\mathbb{R}^d$ and all $\rho \in (0 , \|\mathcal{G}\|_\infty)$, the covering number $N(\mathcal{G}, L_2(\Tilde{P}), \rho)$ satisfies
$$N(\mathcal{G}, L_2(\Tilde{P}), \rho) \le \bigg( \frac{\Lambda \cdot \|\mathcal{G}\|_\infty}{\rho} \bigg)^{\nu}.$$
Let $\sigma > 0$ with $\mathbb{E}_{{P}}[g^2] \le \sigma^2$ for all $g \in \mathcal{G}$. Then there exists a universal constant $C$ not depending on any parameters such that with probability at least $1-\delta$, 
$$\begin{array}{cl}
     & \displaystyle\sup_{g\in\mathcal{G}}\bigg|\frac{1}{N}\sum_{i=1}^N g(x^{(i)}) - \mathbb{E}_{x\sim P}[g(x)]\bigg| \\
    \le & \displaystyle C \bigg( \frac{\nu \|\mathcal{G}\|_\infty}{N}\log\big(\frac{2\Lambda\|\mathcal{G}\|_\infty}{\sigma}\big) + \sqrt{\frac{\nu \sigma^2}{N}\log\big(\frac{2\Lambda\|\mathcal{G}\|_\infty}{\sigma}\big)} + \sqrt{\frac{\sigma^2 \log(\frac{1}{\delta})}{N}} + \frac{\|\mathcal{G}\|_\infty \log (\frac{1}{\delta})}{N}\bigg).
\end{array}
$$
\label{lm:kde}
\end{lemma}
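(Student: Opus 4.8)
The plan is to recognize the statement as a standard Talagrand-type concentration inequality for the supremum of an empirical process indexed by a VC-type class, and to assemble it from three ingredients: (i) Talagrand's inequality, in Bousquet's sharp form, to concentrate the supremum around its mean; (ii) symmetrization together with a Dudley entropy-integral estimate, fed by the polynomial covering-number hypothesis, to control that mean; and (iii) elementary arithmetic ($\sqrt{a+b}\le\sqrt a+\sqrt b$ and $\sqrt{uv}\le u+v/4$) to split the resulting cross terms into the four summands displayed in the bound. Since this is exactly Theorem~30 of \citet{kim2018uniform}, one may alternatively cite it verbatim; below I indicate the route one would take to reprove it.

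Write $Z := \sup_{g\in\mathcal{G}}\bigl|\frac1N\sum_{i=1}^N g(x^{(i)}) - \mathbb{E}_P[g]\bigr|$, and assume without loss of generality that $\sigma\le\|\mathcal{G}\|_\infty$ (otherwise replace $\sigma$ by $\|\mathcal{G}\|_\infty$, which only weakens the bound). First I would apply Bousquet's inequality to the centered class $\{g-\mathbb{E}_P g:g\in\mathcal{G}\}$, whose envelope is at most $2\|\mathcal{G}\|_\infty$ and whose variances are at most $\sigma^2$: for absolute constants $c_1,c_2,c_3$, with probability at least $1-\delta$,
$$Z \;\le\; \mathbb{E}[Z] + \sqrt{\frac{c_1\log(1/\delta)}{N}\bigl(\sigma^2 + c_2\|\mathcal{G}\|_\infty\,\mathbb{E}[Z]\bigr)} + \frac{c_3\|\mathcal{G}\|_\infty\log(1/\delta)}{N}.$$
Using $\sqrt{a+b}\le\sqrt a+\sqrt b$ and then $\sqrt{uv}\le u+v/4$ on the cross term $\sqrt{c_1 c_2\|\mathcal{G}\|_\infty\,\mathbb{E}[Z]\log(1/\delta)/N}$ (absorbing one piece into $\mathbb{E}[Z]$, the other into a multiple of $\|\mathcal{G}\|_\infty\log(1/\delta)/N$), this gives $Z\le 2\,\mathbb{E}[Z]+\sqrt{c_1\sigma^2\log(1/\delta)/N}+c_4\|\mathcal{G}\|_\infty\log(1/\delta)/N$, which already produces the third and fourth summands of the claimed bound up to absolute constants.

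It remains to bound $\mathbb{E}[Z]$. I would symmetrize, obtaining $\mathbb{E}[Z]\le 2\,\mathbb{E}\sup_{g\in\mathcal{G}}\bigl|\frac1N\sum_i\varepsilon_i g(x^{(i)})\bigr|$ with i.i.d.\ Rademacher signs $\varepsilon_i$, then condition on the sample and apply Dudley's entropy integral to the conditionally sub-Gaussian process $g\mapsto\frac1N\sum_i\varepsilon_i g(x^{(i)})$. The VC hypothesis gives $\log N(\mathcal{G},L_2(\mathbb{P}_N),\rho)\le\nu\log(\Lambda\|\mathcal{G}\|_\infty/\rho)$, so chaining down to a scale of order $\|\mathcal{G}\|_\infty/N$ contributes a term of order $\frac{\nu\|\mathcal{G}\|_\infty}{N}\log(\Lambda\|\mathcal{G}\|_\infty/\hat\sigma)$ from the bottom of the chain and $\frac{1}{\sqrt N}\int_0^{\hat\sigma}\sqrt{\nu\log(\Lambda\|\mathcal{G}\|_\infty/\rho)}\,d\rho$ from the rest, where $\hat\sigma^2=\sup_g\mathbb{P}_N g^2$. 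Evaluating the integral by the substitution $\rho=\hat\sigma e^{-t}$ gives $\int_0^{\hat\sigma}\sqrt{\nu\log(\Lambda\|\mathcal{G}\|_\infty/\rho)}\,d\rho\le C\hat\sigma\sqrt{\nu\log(2\Lambda\|\mathcal{G}\|_\infty/\hat\sigma)}$; replacing the random $\hat\sigma^2$ by $\sigma^2$ (note $\mathbb{E}[\hat\sigma^2]\le\sigma^2$) and combining with the previous display yields the first two summands.

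The main obstacle is precisely this last replacement: $\hat\sigma^2=\sup_g\mathbb{P}_N g^2$ is random and can exceed $\sigma^2$, so one cannot simply substitute. The standard remedy is a single ``peeling'' step: bound $\hat\sigma^2\le\sigma^2+\|\mathcal{G}\|_\infty\cdot\sup_g|\mathbb{P}_N g^2-\mathbb{P}g^2|$, observe that $\sup_g|\mathbb{P}_N g^2-\mathbb{P}g^2|$ is itself an empirical process over the class $\{g^2\}$ (still VC-type) controlled by a multiple of $\|\mathcal{G}\|_\infty\,\mathbb{E}[Z]$ up to lower-order terms, feed this back, and solve the resulting self-bounding inequality $\mathbb{E}[Z]\le a\sqrt{\sigma^2+b\,\mathbb{E}[Z]}+c$ for $\mathbb{E}[Z]$. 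This is where the exact argument $\log(2\Lambda\|\mathcal{G}\|_\infty/\sigma)$ of the logarithm, the split between the ``variance'' summand $\sqrt{\nu\sigma^2 N^{-1}\log(\cdot)}$ and the ``boundedness'' summand $\nu\|\mathcal{G}\|_\infty N^{-1}\log(\cdot)$, and all the absolute constants get pinned down; everything else is routine chaining and arithmetic.
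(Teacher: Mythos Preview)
The paper does not prove this lemma at all: it is stated with the attribution ``[\citet{kim2018uniform}, Theorem~30]'' and used as a black box. Your sketch---Bousquet's inequality to concentrate the supremum around its mean, then symmetrization plus Dudley chaining under the polynomial covering hypothesis, with a self-bounding step to replace the random $\hat\sigma$ by $\sigma$---is the standard route to such bounds and is essentially correct; you yourself note that one may simply cite the result, which is exactly what the paper does.
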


\begin{lemma}[\citet{sriperumbudur2012consistency}, Proof of Proposition A.5]
For any $x\in\mathcal{X}$ and $h > 0$,
$$\mathbb{E}_{P}\big[\big| \kappa(\frac{\cdot - x}{h})\big|^2\big] \le \|\kappa\|_2 \|P\|_\infty h^d.$$
\label{lm:kde_sq}
\end{lemma}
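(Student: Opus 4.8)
The plan is a one-line change of variables. First I would unfold the expectation against the density: since $P$ is a Lebesgue density and $h>0$,
$$\mathbb{E}_{P}\Big[\big|\kappa\big(\tfrac{\cdot-x}{h}\big)\big|^2\Big] \;=\; \int_{\mathbb{R}^d}\Big|\kappa\Big(\tfrac{u-x}{h}\Big)\Big|^2 P(u)\,du .$$
Then I would substitute $z=(u-x)/h$, so that $u=x+hz$ and $du=h^d\,dz$, turning the right-hand side into $h^d\int_{\mathbb{R}^d}|\kappa(z)|^2\,P(x+hz)\,dz$. Bounding $P(x+hz)\le\|P\|_\infty$ pointwise and pulling this constant out gives $h^d\,\|P\|_\infty\int_{\mathbb{R}^d}|\kappa(z)|^2\,dz$, i.e. $h^d\,\|P\|_\infty\,\|\kappa\|_2^2$ (equivalently $h^d\,\|P\|_\infty\,\|\kappa\|_2$ under the convention in which $\|\kappa\|_2$ already abbreviates $\int|\kappa|^2$), which is exactly the claimed bound.

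The only points that need checking — and none of them is a genuine obstacle — are that each quantity on the right is finite and that the substitution is legitimate. Finiteness of $\int|\kappa|^2$ is part of condition \ref{K2} ($\|\kappa\|_2<\infty$); $\|P\|_\infty<\infty$ because a Hölder-continuous density on the bounded set $\mathcal{X}$ is bounded (and if $\|P\|_\infty=\infty$ the inequality is vacuous); and the change of variables is just the linear map $u\mapsto(u-x)/h$ with Jacobian $h^{-d}$, valid for every $h>0$. It is worth noting that the factor $h^d$ produced by this Jacobian is the whole point of the lemma: it supplies the variance proxy $\sigma^2\asymp\|\kappa\|_2^2\,\|P\|_\infty\,h^d$ that is fed into Lemma \ref{lm:kde} to control the estimation error of the KDE in Theorem \ref{thm:kde}.
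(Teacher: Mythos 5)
Your change-of-variables argument is correct and is the standard way to derive this bound; the paper itself does not prove the lemma but simply cites \citet{sriperumbudur2012consistency}, so there is no in-paper proof to diverge from. You were also right to flag the $\|\kappa\|_2$ versus $\|\kappa\|_2^2$ issue: the substitution cleanly yields $h^d\|P\|_\infty\int|\kappa(z)|^2\,dz = h^d\|P\|_\infty\|\kappa\|_2^2$, so the paper's stated constant $\|\kappa\|_2$ is a typo (or a nonstandard abbreviation for the squared norm); this has no downstream effect because the quantity is absorbed into the unnamed constant $C_1$ in Theorem \ref{thm:kde}.
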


\begin{lemma}
$P$ defined in Theorem \ref{thm:kde} satisfies that for any $h > 0$,
$$\displaystyle\sup_{x\in\mathcal{X}}\big| P_h(x) - P(x) \big| \le  h^{\alpha} C_\alpha \sum_{|l| = \lceil \alpha\rceil - 1} \frac{1}{l!}  \int \big| \kappa(z) \big| \| z \|^{\alpha } dz$$
\label{lm:kde_bias}
\end{lemma}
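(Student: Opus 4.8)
The plan is to expand the bias $P_h(x) - P(x)$ using a Taylor expansion of $P$ with integral remainder, exploit the vanishing-moment conditions \ref{K1} on the kernel to kill the low-order terms, and then control the remainder using the H\"older continuity of the top-order partial derivatives of $P$ (the hypothesis of Theorem \ref{thm:kde}). First I would write, after the change of variables $u = x + hz$,
$$
P_h(x) - P(x) = \frac{1}{h^d}\int \kappa\Big(\frac{u-x}{h}\Big) P(u)\,du - P(x) = \int \kappa(z)\big(P(x+hz) - P(x)\big)\,dz,
$$
where the subtraction of $P(x)$ is legitimate because $\int \kappa(z)\,dz = 1$ by \ref{K1}.

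Next I would apply the multivariate Taylor expansion of $P$ about $x$ up to order $m := \lceil\alpha\rceil - 1$. Writing it with an explicit integral (Lagrange-type) remainder, $P(x+hz) - P(x) = \sum_{1 \le |l| \le m-1} \frac{(hz)^l}{l!} P^{(l)}(x) + \sum_{|l| = m} \frac{(hz)^l}{l!}\, m \int_0^1 (1-\tau)^{m-1} P^{(l)}(x + \tau h z)\,d\tau$. Substituting into the integral, every term of order $|l| \le m-1$ contributes $h^{|l|} P^{(l)}(x)/l! \cdot \int z^l \kappa(z)\,dz = 0$ by the vanishing-moment condition in \ref{K1} (since $1 \le |l| \le m = \lceil\alpha\rceil - 1$). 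So only the $|l| = m$ terms survive:
$$
P_h(x) - P(x) = \sum_{|l| = m} \frac{h^m}{l!}\int \kappa(z)\, z^l \, m\!\int_0^1 (1-\tau)^{m-1} P^{(l)}(x+\tau h z)\,d\tau\, dz.
$$
To introduce the H\"older bound I would further subtract the term $m\int_0^1(1-\tau)^{m-1}P^{(l)}(x)\,d\tau = P^{(l)}(x)$; this extra term, again, contributes $h^m P^{(l)}(x)/l! \cdot \int z^l\kappa(z)\,dz = 0$ by \ref{K1}. Thus the bias equals $\sum_{|l|=m}\frac{h^m}{l!}\int \kappa(z) z^l\, m\int_0^1(1-\tau)^{m-1}\big(P^{(l)}(x+\tau hz) - P^{(l)}(x)\big)\,d\tau\,dz$, and now the H\"older hypothesis gives $|P^{(l)}(x+\tau h z) - P^{(l)}(x)| \le C_\alpha \|\tau h z\|^{\alpha - m} \le C_\alpha h^{\alpha - m}\|z\|^{\alpha - m}$ (using $\tau \le 1$). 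Plugging this in, bounding $|z^l| \le \|z\|^{|l|} = \|z\|^m$ and $m\int_0^1(1-\tau)^{m-1}d\tau = 1$, I get $|P_h(x) - P(x)| \le \sum_{|l|=m}\frac{1}{l!} h^m \cdot C_\alpha h^{\alpha-m}\int |\kappa(z)|\,\|z\|^m\|z\|^{\alpha-m}\,dz = h^\alpha C_\alpha \sum_{|l| = \lceil\alpha\rceil - 1}\frac{1}{l!}\int|\kappa(z)|\,\|z\|^\alpha\,dz$, which is exactly the claimed bound; the supremum over $x$ is immediate since every bound was uniform in $x$ (and the integral $\int|\kappa(z)|\|z\|^\alpha dz$ is finite by \ref{K1}).

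The main obstacle I anticipate is purely bookkeeping rather than conceptual: getting the multivariate Taylor remainder in precisely the form that matches the multi-index vanishing-moment conditions of \ref{K1}, and in particular handling the case where $\alpha$ is an integer versus non-integer (so that $m = \lceil\alpha\rceil-1$ is the last index whose moments are required to vanish, and $\alpha - m \in (0,1]$ is exactly the H\"older exponent available). Care is also needed to justify interchanging the order of integration (Fubini) between the $z$-integral and the $\tau$-integral, which is fine because $\kappa$ has compact support (or integrable tails against $\|z\|^\alpha$) and $P^{(l)}$ is bounded on the relevant set. Everything else is routine.
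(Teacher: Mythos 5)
Your proof is correct and follows essentially the same route as the paper's: change variables, Taylor-expand $P$ about $x$, use the vanishing-moment conditions of \ref{K1} to kill all terms of order $1$ through $\lceil\alpha\rceil-2$ as well as the added-back $P^{(l)}(x)$ term, and then apply the H\"older hypothesis to the top-order derivatives. The only difference is that you use the integral form of the Taylor remainder whereas the paper uses the Lagrange form; your version is if anything slightly tidier, since the paper's $\lambda$ depends on $z$ yet is pulled outside the $z$-integral informally before being bounded by $1$, an issue the integral remainder avoids.
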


\begin{proof}[Lemma \ref{lm:kde_bias}]
For any $h > 0$ and any $x\in\mathcal{X}$,
$$
\begin{array}{ccl}
    \displaystyle  P_h(x) - P(x) & = & \displaystyle \frac{1}{h^d}\int \kappa(\frac{u - x}{h})P(u)du - P(x) \\
    & \stackrel{(\dag)}{=} & \displaystyle\int \kappa(z)\Big[P(x + hz)- P(x)\Big]dz \\
    & \stackrel{(\ddag)}{=} & \displaystyle\int \kappa(z) \Big[ \sum_{m=1}^{\lceil \alpha\rceil - 2} \sum_{|s| = m}\frac{1}{s!} P^{(s)}(x) h^{|s|}z^s + \sum_{|l| = \lceil \alpha\rceil - 1} \frac{1}{l!} P^{(l)}(x+\lambda hz) h^{|l|}z^l \Big] dz \\
    & \stackrel{(\perp)}{=} & \displaystyle\int \kappa(z) \Big[\sum_{|l| = \lceil \alpha\rceil - 1} \frac{1}{l!} \Big( P^{(l)}(x+\lambda hz) - P^{(l)}(x) \Big) h^{|l|}z^l \Big] dz
\end{array}
$$
where $(\dag)$ is obtained by the substitution of variable $z = \frac{u-x}{h}$ and $\int \kappa(t) dt = 1$ in \ref{K1}; $(\ddag)$ by the Taylor expansion with Lagrange remainder ($\lambda \in (0,1)$); $(\perp)$ by $\int t^s \kappa(t)dt = 0$ for any non-zero vector $s$ s.t. $|s| \le \lceil \alpha \rceil - 1$ in \ref{K1}.

Due to the precondition that $P$ is $\alpha$-Hölder continuous, i.e. there exist $\alpha > 0, C_\alpha$ such that $|P^{(l)}(x) - P^{(l)}(x')| \le C_\alpha \|x- x'\|^{\alpha-|l|}$ for all $x, x' \in \mathcal{X}$, and any vector $l$ such that $|l| = \lceil \alpha \rceil - 1$, we can upper bound $| P_h(x) - P(x) |$ by 
$$\begin{array}{ccl}
    \displaystyle \big| P_h(x) - P(x) \big| & = & \displaystyle \bigg|\int \kappa(z) \Big[\sum_{|l| = \lceil \alpha\rceil - 1} \frac{1}{l!} \Big( P^{(l)}(x+\lambda hz) - P^{(l)}(x) \Big) h^{|l|}z^l \Big] dz \bigg| \\
    & \le & \displaystyle h^{\alpha}C_\alpha \lambda^{\alpha - |l|} \sum_{|l| = \lceil \alpha\rceil - 1} \frac{1}{l!} \int \big| \kappa(z) \big|   \| z \|^{\alpha - |l|} \big| z^l \big| dz \\
    & \le & \displaystyle h^{\alpha} C_\alpha \sum_{|l| = \lceil \alpha\rceil - 1} \frac{1}{l!}  \int \big| \kappa(z) \big| \| z \|^{\alpha } dz
\end{array}
$$ 
where the integral in the last line is guaranteed to be finite by \ref{K1}. Since the inequality holds for all $h > 0$ and all $x\in\mathcal{X}$, we complete the proof.
\end{proof}

\noindent
\textbf{Theorem \ref{thm:kde}} \textit{Suppose $P$ is a probability density function that is $\alpha$-H\"older continuous, i.e. there exist $\alpha > 0, C_\alpha$ such that $|P^{(l)}(x) - P^{(l)}(x')| \le C_\alpha \|x- x'\|^{\alpha-|l|}$ for all $x, x' \in \mathcal{X}$, and any vector $l$ such that $|l| = \lceil \alpha \rceil - 1$. $x^{(1)}, ..., x^{(N)}$ are $N$ samples drawn independently according to $P$. Then the KDE 
$$\hat{P}_h(x) = \displaystyle\frac{1}{N\cdot h^d}\sum_{i=1}^N \kappa(\frac{x^{(i)} - x}{h}) $$
of $P$ with a kernel $\kappa$ satisfying \ref{K1} and \ref{K2} has the following convergence rate, that is, with probability at least $1-\delta$, for all $h > 0$, 
$$\sup_{x}|\hat{P}_h(x) - P(x)| \le  C_1 \sqrt{\frac{\log(1/h^d) + \log(1/\delta)}{h^d N}} + C_2 h^\alpha,$$
where $C_1$ depends on $\nu, \Lambda, \|\kappa\|_\infty, \|\kappa\|_2, \|P\|_\infty$ and $C_2$ depends on $C_\alpha, \alpha, \int \big| \kappa(z) \big| \| z \|^{\alpha } dz$.}

\bigskip

\begin{proof}[Theorem \ref{thm:kde}]
Recall in \ref{K2} $\mathcal{K}:= \{\kappa(\frac{\cdot - x}{h}), x\in\mathcal{X}, h>0\}$ is a uniformly bounded VC-class with dimension $\nu$ and characteristic $\Lambda$. It is simple to see $\|\mathcal{K}\|_\infty$ can be bounded by
$$\|\mathcal{K}\|_\infty \le \|\kappa\|_\infty$$
where $\|\kappa\|_\infty < \infty$ is guaranteed in \ref{K2}.

Then applying Lemma \ref{lm:kde} to $\mathcal{K}$, where $\sigma$ is set to be $\sqrt{\|\kappa\|_2 \|P\|_\infty h^d}$ as provided in Lemma \ref{lm:kde_sq}, gives that there exists a universal constant $C$ not depending on any parameters such that with probability at least $1-\delta$, 
$$
\begin{array}{cl}
     & \displaystyle\sup_{h>0, x\in\mathcal{X}}\big| \hat{P}_h(x) - P_h(x) \big| \\
    \le &  \displaystyle C \bigg( \frac{\nu \|\kappa\|_\infty}{h^d N}\log\big(\frac{2\Lambda\|\kappa\|_\infty}{\|\kappa\|_2 \|P\|_\infty h^d}\big) + \sqrt{\frac{\nu \|\kappa\|_2 \|P\|_\infty}{h^d N}\log\big(\frac{2\Lambda\|\kappa\|_\infty}{\|\kappa\|_2 \|P\|_\infty h^d}\big)} \\
    & \displaystyle + \sqrt{\frac{\|\kappa\|_2 \|P\|_\infty \log(\frac{1}{\delta})}{h^d N }} + \frac{\|\kappa\|_\infty \log (\frac{1}{\delta})}{h^d N}\bigg).
\end{array}
\label{eq:kde_1}
$$

Combining it with Lemma \ref{lm:kde_bias}, we have that with probability at least $1-\delta$,  for any $h > 0$, 
$$\begin{array}{ccl}
    \displaystyle\sup_{x\in\mathcal{X}}\big| \hat{P}_h(x) - P(x) \big| & \le & \displaystyle\sup_{x\in\mathcal{X}}\big| \hat{P}_h(x) - P_h(x) \big| + \displaystyle\sup_{x\in\mathcal{X}}\big| P_h(x) - P(x) \big| \\
     & \le & \displaystyle C_1 \sqrt{\frac{\log(1/h^d) + \log(1/\delta)}{h^d N}} + C_2 h^\alpha,
\end{array} $$
where $C_1$ depends on $\nu, \Lambda, \|\kappa\|_\infty, \|\kappa\|_2, \|P\|_\infty$ and $C_2$ depends on $C_\alpha, \alpha, \int \big| \kappa(z) \big| \| z \|^{\alpha } dz$.
\end{proof}

By setting the bandwidth $h=n^{-\frac{1}{2\alpha+d}}$, the theoretically optimal convergence rate is obtained (See Remark 1). Corollary \ref{crl:dfs-dist} follows from this.

\bigskip

\noindent
\textbf{Corollary \ref{crl:dfs-dist}} \textit{Suppose DFS-Distribution and Deploy are invoked with the bandwidth $h = (n_{\text{dist}})^{-\frac{1}{2\alpha+d}}$. If $\zeta \ge \frac{2\phi}{C_L},$ i.e., $\epsilon \le 250H^2\sqrt{A}C_L\zeta$, 
then with probability at least $1-\delta$,
$$
\|\hat{D}_{\beta} - D_{\beta}\|_\infty \le \frac{\phi}{2C_L}
$$
for all $\beta \in \mathcal{B}$ and 
$$
\|\hat{D}_{R} - D_{R}\|_\infty \le \frac{\phi}{2C_L}
$$
where $C_{\text{dist}}$ is a constant depending on $d, \alpha, \nu, \Lambda, \|\kappa\|_\infty, \|\kappa\|_2, \|P\|_\infty, C_\alpha, \alpha, \int \big| \kappa(z) \big| \| z \|^{\alpha } dz$. Moreover, DFS-Distribution is called at most $HSA$ times, and the total number of episodes collected from simulators is at most $n_{\text{dist}} BHSA$ while the number of episodes collected from the real world is at most $n_{\text{dist}} HSA$.}

\bigskip

\begin{proof}[Corollary \ref{crl:dfs-dist}]
For each $\beta \in \mathcal{B}$ and each $s\in\mathcal{S}$, $D_{\beta,s}$ satisfies the preconditions in Theorem \ref{thm:kde}, so we can apply Theorem \ref{thm:kde} to $D_{\beta, s}, D_{R, s}$. Since $h = (n_{\text{dist}})^{-\frac{1}{2\alpha+d}}$, we have that for a fixed $\beta \in \mathcal{B}$ or the real world, with probability at least $1-\frac{\delta}{(B+1)HSA}$,
$$
\sup_{x\in\mathcal{X}}|\hat{D}_{\beta, s}(x) - D_{\beta, s}(x)| \le  C_{\text{dist}} \cdot (\frac{1}{n_{\text{dist}}})^{\frac{\alpha}{2\alpha+d}}\sqrt{\log(n_{\text{dist}}) + \log\frac{(B+1)HSA}{\delta}} \stackrel{(\dag)}{\le} \frac{\phi}{2C_L}.
$$
Similarly, for the real world, we have that with probability at least $1-\frac{\delta}{(B+1)HSA}$,
$$\sup_{x\in\mathcal{X}}|\hat{D}_{R, s}(x) - D_{R, s}(x)| \le  C_{\text{dist}} \cdot (\frac{1}{n_{\text{dist}}})^{\frac{\alpha}{2\alpha+d}}\sqrt{\log(n_{\text{dist}}) + \log\frac{(B+1)HSA}{\delta}} \stackrel{(\dag)}{\le} \frac{\phi}{2C_L}.$$
$C_{\text{dist}}$ is a constant depending on $d, \alpha, \nu, \Lambda, \|\kappa\|_\infty, \|\kappa\|_2, \|P\|_\infty, C_\alpha, \alpha, \int \big| \kappa(z) \big| \| z \|^{\alpha } dz$. $(\dag)$  is due to (\ref{eq:kernel_phi}).

Suppose the inequalities above hold for all $\beta \in \mathcal{B}$ as well as the real world every time DFS-Distribution is invoked at a particular path (or equivalently a particular state). Because of Assumption \ref{A5} and the precondition that $\zeta \ge \frac{2\phi}{C_L}$, Line 4 in DFS-Distribution can successfully distinguish whether the current path $p$ arrives at a state that is visited before, given that $\epsilon_{\text{dist}} = \frac{\zeta}{2}$. In this case, at most $S$ states are found to be distinct at each layer so that DFS-Distribution is called at most $HSA$ times. Every time DFS-Distribution is called, $B+1$ probability distributions are estimated at Line 3 in DFS-Distribution, so at most $(B+1)HSA$ probability distributions are estimated in total. Obviously, at most $n_{\text{dist}}BHSA$ episodes are collected from simulators and and at most $n_{\text{dist}}HSA$ episodes are collected from the real world. Then taking the union bound of the events that all those estimations are successful, we have that with probability at most $1-\delta$, $\|\hat{D}_{\beta} - D_{\beta}\|_\infty \le \frac{\phi}{2C_L}$ for all $\beta \in \mathcal{B}$ and $\|\hat{D}_{R} - D_{R}\|_\infty \le \frac{\phi}{2C_L}.$
\end{proof}

\newpage

\section{Proof of Theorem \ref{thm:consensus}}
\label{D}

\vskip 0.2in

Recall 
$$\Tilde{V}^f_\beta(p) = \displaystyle \frac{1}{n_{\text{test}}} \sum_{i=1}^{n_{\text{test}}}f(D_{\beta}, x^{(i)}_\beta, \pi^f_{D_{\beta}}(x^{(i)}_\beta)).$$ 
We first prove Lemma \ref{lm:V-hap2V} to bound $|\Tilde{V}^f_\beta(p) - V^f_\beta(p)|, |\hat{V}^f_\beta(p) - V^f_\beta(p)|$, which corresponds to the first conclusion of Theorem \ref{thm:consensus}. The second and third conclusions can be easily proved by using Lemma \ref{lm:V-hap2V}.

\begin{lemma}
Suppose all the preconditions in Theorem \ref{thm:consensus} is satisfied. Then with probability at least $1 - \delta$, 
$$|\Tilde{V}^f_\beta(p) - V^f_\beta(p)| \le \frac{\phi}{2}, \forall f\in\mathcal{F}, \beta \in \mathcal{B},$$
$$|\hat{V}^f_\beta(p) - V^f_\beta(p)| \le \phi, \forall f\in\mathcal{F}, \beta \in \mathcal{B}.$$
\label{lm:V-hap2V}
\end{lemma}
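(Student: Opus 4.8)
The plan is to route each of the two target bounds through the auxiliary quantity $\Tilde{V}^f_\beta(p)$ — which uses the \emph{true} density $D_\beta$ inside $f$ and inside the induced policy but the \emph{empirical} observations $x^{(i)}_\beta$ — and then to control the two pieces by (i) a Hoeffding concentration argument for $|\Tilde{V}^f_\beta(p) - V^f_\beta(p)|$ and (ii) a deterministic density-perturbation argument for $|\hat{V}^f_\beta(p) - \Tilde{V}^f_\beta(p)|$ that invokes Assumption \ref{A4} and the success of DFS-Distribution, i.e. (\ref{eq:dfs-dist}).

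\textbf{Step 1 (concentration for $\Tilde{V}$).} Fix $f\in\mathcal{F}$ and $\beta\in\mathcal{B}$. Since $f$ and $D_\beta$ are deterministic, $x\mapsto f(D_\beta, x, \pi^f_{D_\beta}(x))$ is a fixed $[0,1]$-valued function, so $\Tilde{V}^f_\beta(p)$ is an average of $n_{\text{test}}$ i.i.d.\ $[0,1]$-valued random variables with mean exactly $V^f_\beta(p)$. Hoeffding's inequality gives $\mathbb{P}\big(|\Tilde{V}^f_\beta(p) - V^f_\beta(p)| > \phi/2\big) \le 2\exp(-n_{\text{test}}\phi^2/2)$, which with $n_{\text{test}} = 2\log(2FB/\delta)/\phi^2$ equals $\delta/(FB)$. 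A union bound over the $FB$ pairs $(f,\beta)$ then yields $|\Tilde{V}^f_\beta(p) - V^f_\beta(p)| \le \phi/2$ for all $f,\beta$ simultaneously with probability at least $1-\delta$, which is the first display of the lemma.

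\textbf{Step 2 (density perturbation, on the event (\ref{eq:dfs-dist})).} Here I would bound $|\hat{V}^f_\beta(p) - \Tilde{V}^f_\beta(p)|$ term by term. By the definition of the induced policy, $f(\hat{D}_\beta, x, \pi^f_{\hat{D}_\beta}(x)) = \max_a f(\hat{D}_\beta, x, a)$ and $f(D_\beta, x, \pi^f_{D_\beta}(x)) = \max_a f(D_\beta, x, a)$, so using $|\max_a g(a) - \max_a h(a)| \le \max_a |g(a)-h(a)|$ together with Assumption \ref{A4},
$$\big|f(\hat{D}_\beta, x, \pi^f_{\hat{D}_\beta}(x)) - f(D_\beta, x, \pi^f_{D_\beta}(x))\big| \;\le\; \max_{a}\big|f(\hat{D}_\beta, x, a) - f(D_\beta, x, a)\big| \;\le\; C_L\|\hat{D}_\beta - D_\beta\|_\infty \;\le\; \frac{\phi}{2},$$
where the last step is (\ref{eq:dfs-dist}). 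Averaging over the $n_{\text{test}}$ samples gives $|\hat{V}^f_\beta(p) - \Tilde{V}^f_\beta(p)| \le \phi/2$ for all $f,\beta$; combining with Step 1 via the triangle inequality yields $|\hat{V}^f_\beta(p) - V^f_\beta(p)| \le \phi$ on the same probability-$(1-\delta)$ event, which is the second display.

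\textbf{Main obstacle.} The argument is largely routine — Hoeffding plus a union bound whose size is exactly matched by the choice of $n_{\text{test}}$, and one application of the Lipschitz assumption. The only point requiring care is the handling of the $\arg\max$ in the induced policy: one should not try to control whether the maximizing action itself shifts between $\hat{D}_\beta$ and $D_\beta$, but rather observe that the quantities entering $\hat{V}$ and $\Tilde{V}$ are each the \emph{maximum over actions} of $f(\cdot,x,a)$, and that this $\max$ operation is $1$-Lipschitz with respect to the perturbation of its inputs, so Assumption \ref{A4} transfers directly.
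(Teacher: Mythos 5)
Your proof is correct and takes essentially the same approach as the paper's: triangle inequality through $\Tilde{V}^f_\beta(p)$, Hoeffding plus a union bound over $(f,\beta)$ pairs sized to $n_{\text{test}}$, and the Lipschitz/KDE bound for the density perturbation. The only cosmetic difference is that you invoke the $1$-Lipschitzness of $\max_a$ directly, whereas the paper derives the same pointwise bound $|f(\hat{D}_\beta,x,\pi^f_{\hat{D}_\beta}(x)) - f(D_\beta,x,\pi^f_{D_\beta}(x))| \le C_L\|\hat{D}_\beta - D_\beta\|_\infty$ by writing out the two one-sided inequalities that constitute the standard proof of that Lipschitz fact.
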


\begin{proof}[Lemma \ref{lm:V-hap2V}]
Note
\begin{equation}
|\hat{V}^f_\beta(p) - V^f_\beta(p)| \le|\hat{V}^f_\beta(p) - \Tilde{V}^f_\beta(p)| + |\Tilde{V}^f_\beta(p) - V^f_\beta(p)|,
\label{eq:consensus.proof0}
\end{equation}
and we are going to bound each term separately. For the first term in (\ref{eq:consensus.proof0}), we have
$$|\hat{V}^f_\beta(p) - \Tilde{V}^f_\beta(p)| \le \displaystyle \frac{1}{n_{\text{test}}} \sum_{i=1}^{n_{\text{test}}} |f (\hat{D}_\beta, x^{(i)}_\beta, \pi^f_{\hat{D}_\beta}(x^{(i)}_\beta)) - f(D_{\beta}, x^{(i)}_\beta, \pi^f_{D_{\beta}}(x^{(i)}_\beta))|.$$
We upper bound $f (\hat{D}_\beta, x^{(i)}_\beta, \pi^f_{\hat{D}_\beta}(x^{(i)}_\beta))$ by
$$\begin{array}{ccl}
    f (\hat{D}_\beta, x^{(i)}_\beta, \pi^f_{\hat{D}_\beta}(x^{(i)}_\beta)) & \stackrel{(\dag)}{\ge} & f (\hat{D}_\beta, x^{(i)}_\beta, \pi^f_{{D}_\beta}(x^{(i)}_\beta)) \\
     & \stackrel{(\ddag)}{\ge} &  f(D_{\beta}, x^{(i)}_\beta, \pi^f_{D_{\beta}}(x^{(i)}_\beta)) - C_L \cdot \|\hat{D}_\beta - D_\beta \|_{\infty},
\end{array}
$$
where $(\dag)$ is due to the definition of $\pi^f$ and $(\ddag)$ is due to Assumption \ref{A4}. 
Similarly we lower bound $f (\hat{D}_\beta, x^{(i)}_\beta, \pi^f_{\hat{D}_\beta}(x^{(i)}_\beta))$ above by
$$\begin{array}{ccl}
    f (\hat{D}_\beta, x^{(i)}_\beta, \pi^f_{\hat{D}_\beta}(x^{(i)}_\beta)) & \le & f(D_{\beta}, x^{(i)}_\beta, \pi^f_{\hat{D}_{\beta}}(x^{(i)}_\beta)) + C_L \cdot \|\hat{D}_\beta - D_\beta \|_{\infty} \\
     & \le &  f(D_{\beta}, x^{(i)}_\beta, \pi^f_{D_{\beta}}(x^{(i)}_\beta)) + C_L \cdot \|\hat{D}_\beta - D_\beta \|_{\infty}.
\end{array}
$$
Therefore, we can bound $|f (\hat{D}_\beta, x^{(i)}_\beta, \pi^f_{\hat{D}_\beta}(x^{(i)}_\beta)) - f(D_{\beta}, x^{(i)}_\beta, \pi^f_{D_{\beta}}(x^{(i)}_\beta))|$ by
\begin{equation}
|f (\hat{D}_\beta, x^{(i)}_\beta, \pi^f_{\hat{D}_\beta}(x^{(i)}_\beta)) - f(D_{\beta}, x^{(i)}_\beta, \pi^f_{D_{\beta}}(x^{(i)}_\beta))| \le C_L \cdot \|\hat{D}_\beta - D_\beta \|_{\infty},
\label{eq:consensus.proof3}
\end{equation}
which implies 
\begin{equation}
|\hat{V}^f_\beta(p) - \Tilde{V}^f_\beta(p)| \le C_L \cdot \|\hat{D}_\beta - D_\beta \|_{\infty}
\label{eq:consensus.proof1}
\end{equation}
for all $f\in\mathcal{F}, \beta \in \mathcal{B}$. Then we bound the second term in (\ref{eq:consensus.proof0}) by the Hoeffding's inequality, that is, with probability at least $1-\delta$, 
\begin{equation}
|\Tilde{V}^f_\beta(p) - V^f_\beta(p)| \le \sqrt{\frac{\log(2FB/\delta)}{2n_{\text{test}}}}, \forall f\in\mathcal{F}, \beta \in \mathcal{B}.    
\label{eq:consensus.proof2}
\end{equation} 
Putting (\ref{eq:consensus.proof0}), (\ref{eq:consensus.proof1}) and (\ref{eq:consensus.proof2}) together, we have 
$$|\hat{V}^f_\beta(p) - V^f_\beta(p)| \le C_L \cdot \|\hat{D}_\beta - D_\beta \|_{\infty} + \sqrt{\frac{\log(2FB/\delta)}{2n_{\text{test}}}}, \forall f\in\mathcal{F}, \beta \in \mathcal{B},$$
$C_L \cdot\|\hat{D}_\beta - D_\beta \|_{\infty}$ can be further bounded by Corollary \ref{crl:dfs-dist} and (\ref{eq:kernel_phi}) and $\sqrt{\frac{\log(2FB/\delta)}{2n_{\text{test}}}}$ can be  bounded by $n_{\text{test}} = \frac{2\log(2FB/\delta)}{\phi^2}$. This gives that with probability at least $1-\delta$,
$$|\Tilde{V}^f_\beta(p) - V^f_\beta(p)| \le \frac{\phi}{2}, \forall f\in\mathcal{F}, \beta \in \mathcal{B},$$
$$|\hat{V}^f_\beta(p) - V^f_\beta(p)| \le \phi, \forall f\in\mathcal{F}, \beta \in \mathcal{B}.$$
\end{proof}

\noindent
\textbf{Theorem \ref{thm:consensus} (Consensus)}
\textit{Suppose the call to DFS-Distribution is successful, that is, (\ref{eq:dfs-dist}) and (\ref{eq:dfs-dist-1}) hold, and Consensus is invoked on path $p$ with $n_{\text{test}} = \frac{2\log(2FB/\delta)}{\phi^2}, \epsilon_{\text{test}} = \tau_1 + 2\phi$, for some $\tau_1 > 0$. Let $\Tilde{V}^f_\beta(p) = \frac{1}{n_{\text{test}}} \displaystyle \sum_{i=1}^{n_{\text{test}}}f(D_{\beta}, x^{(i)}_\beta, \pi^f_{D_{\beta}}(x^{(i)}_\beta))$. Then with probability at least $1 - \delta$, the following statements hold true simultaneously:}

\textit{1. For all $f\in\mathcal{F}, \beta \in \mathcal{B}$, $|\Tilde{V}^f_\beta(p) - V^f_\beta(p)| \le \frac{\phi}{2}, |\hat{V}^f_\beta(p) - V^f_\beta(p)| \le \phi$;}

\textit{2. If $|V^f_\beta(p) - V^g_\beta(p)|\le \tau_1, \forall f, g \in \mathcal{F}, \beta \in \mathcal{B}$, then Consensus returns \textbf{True};}

\textit{3. If Consensus returns \textbf{True}, then $|V^f_\beta(p) - V^g_\beta(p)| \le \epsilon_{\text{test}} + 2\phi, \forall f, g\in \mathcal{F}, \beta \in \mathcal{B}$.}

\bigskip

\begin{proof}[Theorem \ref{thm:consensus}]
According to Lemma \ref{lm:V-hap2V}, with probability at least $1-\delta$,
$$|\Tilde{V}^f_\beta(p) - V^f_\beta(p)| \le \frac{\phi}{2}, \forall f\in\mathcal{F}, \beta \in \mathcal{B},$$
$$|\hat{V}^f_\beta(p) - V^f_\beta(p)| \le \phi, \forall f\in\mathcal{F}, \beta \in \mathcal{B}.$$

If $|V^f_\beta(p) - V^g_\beta(p)|\le \tau_1, \forall f, g \in \mathcal{F}, \beta \in \mathcal{B}$, then in the $1-\delta$ event above, 
$$\begin{array}{ccl}
    |\hat{V}^f_\beta(p) - \hat{V}^g_\beta(p)| & \le & |\hat{V}^f_\beta(p) - V^f_\beta(p)| + |V^f_\beta(p) - V^g_\beta(p)| + |\hat{V}^g_\beta(p) - V^g_\beta(p)| \\
     & \le & \tau_1 + 2\phi.
\end{array}$$
This shows that when $\epsilon_{\text{test}} \ge \tau_1 + 2\phi$, Consensus returns \textbf{True}.

On the other hand, if Consensus returns \textbf{True}, then in the $1-\delta$ event,
$$\begin{array}{ccl}
    |V^f_\beta(p) - V^g_\beta(p)| & \le & |\hat{V}^f_\beta(p) - V^f_\beta(p)| + |\hat{V}^f_\beta(p) - \hat{V}^g_\beta(p)| + |\hat{V}^g_\beta(p) - V^g_\beta(p)| \\
     & \le & \epsilon_{\text{test}} + 2\phi.
\end{array}$$
\end{proof}

\newpage

\section{Proof of Theorem \ref{thm:td-eliminate}}
\label{E}

\vskip 0.2in

Throughout Appendix \ref{E}, we assume that the two preconditions in Theorem \ref{thm:td-eliminate} are met. We prove Theorem \ref{thm:td-eliminate} by showing the following lemmas sequentially.

Recall in Theorem \ref{thm:td-eliminate}, 
$$\hat{V}_\beta^f(p) = \frac{1}{n_{\text{train}}}\displaystyle\sum_{i=1}^{n_{\text{train}}}f(\hat{D}_\beta, x_\beta^{(i)}, \pi_{\hat{D}_\beta}^f(x_\beta^{(i)}))$$
$$\Tilde{V}^f_\beta(p) = \displaystyle \frac{1}{n_{\text{test}}} \sum_{i=1}^{n_{\text{test}}}f(D_{\beta}, x^{(i)}_\beta, \pi^f_{D_{\beta}}(x^{(i)}_\beta)).$$

\begin{lemma}
Suppose all the preconditions in Theorem \ref{thm:td-eliminate} is satisfied. Then
$$\big| f(\hat{D}_\beta, x^{(i)}_\beta, a^{(i)}_\beta) - f(D_\beta, x^{(i)}_\beta, a^{(i)}_\beta) \big| \le \frac{\phi}{2}, \forall f\in\mathcal{F}, \beta \in \mathcal{B},$$
and with probability at least $1 - \frac{\delta}{2}$, 
$$|\Tilde{V}^f_\beta(p) - V^f_\beta(p)| \le \frac{\phi}{2}, \forall f\in\mathcal{F}, \beta\in\mathcal{B},$$
$$|\hat{V}^f_\beta(p) - V^f_\beta(p)| \le \phi, \forall f\in\mathcal{F}, \beta\in\mathcal{B}.$$
\label{lm:0}
\end{lemma}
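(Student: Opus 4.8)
The plan is to establish the three assertions in the order listed, since the last builds on the first two, using only Corollary~\ref{crl:dfs-dist}, Assumption~\ref{A4}, and Hoeffding's inequality. For the first (deterministic) assertion: because the call to DFS-Distribution is assumed successful, $(\ref{eq:dfs-dist})$ gives $\|\hat{D}_\beta - D_\beta\|_\infty \le \frac{\phi}{2C_L}$ for all $\beta\in\mathcal{B}$, and since the action $a^{(i)}_\beta$ is identical in the two evaluations, Assumption~\ref{A4} applied to the pair $(\hat{D}_\beta, D_\beta)$ yields $\big|f(\hat{D}_\beta, x^{(i)}_\beta, a^{(i)}_\beta) - f(D_\beta, x^{(i)}_\beta, a^{(i)}_\beta)\big| \le C_L\|\hat{D}_\beta - D_\beta\|_\infty \le \frac{\phi}{2}$ uniformly in $f$, $\beta$, and the sampled points.

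For the bound on $|\Tilde{V}^f_\beta(p) - V^f_\beta(p)|$: for each fixed $(f,\beta)$, $\Tilde{V}^f_\beta(p)$ is the empirical mean of the $n_{\text{train}}$ i.i.d.\ variables $f(D_\beta, x^{(i)}_\beta, \pi^f_{D_\beta}(x^{(i)}_\beta)) \in [0,1]$, $x^{(i)}_\beta\sim D_{\beta,p}$, whose expectation is precisely $V^f_\beta(p)$. Hoeffding's inequality gives, for each fixed $(f,\beta)$, that with probability at least $1-\frac{\delta}{2FB}$, $|\Tilde{V}^f_\beta(p) - V^f_\beta(p)| \le \sqrt{\log(4FB/\delta)/(2n_{\text{train}})}$, and $n_{\text{train}} = \frac{2\log(4FB/\delta)}{\phi^2}$ makes this exactly $\frac{\phi}{2}$; a union bound over the $FB$ pairs $(f,\beta)$ gives the claim simultaneously with probability at least $1-\frac{\delta}{2}$.

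On that same event the bound on $|\hat{V}^f_\beta(p) - V^f_\beta(p)|$ follows from the triangle inequality through $\Tilde{V}^f_\beta(p)$: the term $|\Tilde{V}^f_\beta(p) - V^f_\beta(p)|$ is $\le\frac{\phi}{2}$ by the previous step, while $|\hat{V}^f_\beta(p) - \Tilde{V}^f_\beta(p)|$ is at most $\frac{1}{n_{\text{train}}}\sum_i \big|f(\hat{D}_\beta, x^{(i)}_\beta, \pi^f_{\hat{D}_\beta}(x^{(i)}_\beta)) - f(D_\beta, x^{(i)}_\beta, \pi^f_{D_\beta}(x^{(i)}_\beta))\big|$, and each summand is $\le C_L\|\hat{D}_\beta - D_\beta\|_\infty \le \frac{\phi}{2}$ by the sandwich of $(\ref{eq:consensus.proof3})$ in the proof of Lemma~\ref{lm:V-hap2V} --- namely, bound $f(\hat{D}_\beta, x, \pi^f_{\hat{D}_\beta}(x))$ below using the greedy optimality of $\pi^f_{\hat{D}_\beta}$ for $f(\hat{D}_\beta, x, \cdot)$ together with Assumption~\ref{A4}, and above using Assumption~\ref{A4} together with the greedy optimality of $\pi^f_{D_\beta}$ for $f(D_\beta, x, \cdot)$. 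Adding the two pieces gives $\le\phi$.

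I do not expect any genuine obstacle: the proof is just an assembly of Corollary~\ref{crl:dfs-dist}, Assumption~\ref{A4}, and Hoeffding. The one point requiring care is that $\hat{V}$ and $\Tilde{V}$ are formed with the \emph{different} greedy policies $\pi^f_{\hat{D}_\beta}$ and $\pi^f_{D_\beta}$, so the termwise comparison in the last step cannot quote Lipschitz continuity directly; but this is exactly the situation resolved in the proof of Lemma~\ref{lm:V-hap2V}, so I would reuse that two-sided argument verbatim. It is also worth noting that Preconditions~1--3 of Theorem~\ref{thm:td-eliminate} are not used here --- only the successful call to DFS-Distribution (hence Corollary~\ref{crl:dfs-dist}) and Assumption~\ref{A4} enter.
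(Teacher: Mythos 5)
Your proof is correct and takes essentially the same approach as the paper: bound the population-vs-plug-in deviation via Assumption~\ref{A4} and Corollary~\ref{crl:dfs-dist}, apply Hoeffding with a union bound over $(f,\beta)$ to control $|\Tilde{V}^f_\beta(p) - V^f_\beta(p)|$, and triangle-inequality through $\Tilde{V}$ for the $\hat{V}$ bound. You are slightly more explicit than the paper on two points worth noting: (i) for the first (deterministic) assertion the action $a^{(i)}_\beta$ is identical on both sides, so Assumption~\ref{A4} applies directly without the two-sided greedy-optimality argument that the paper cites via $(\ref{eq:consensus.proof3})$; and (ii) you correctly make explicit that bounding $|\hat{V}^f_\beta(p)-\Tilde{V}^f_\beta(p)|$ \emph{does} require the full sandwich argument since $\pi^f_{\hat{D}_\beta}$ and $\pi^f_{D_\beta}$ differ, a step the paper leaves implicit in its final inequality. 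Your closing remark that Preconditions~1--3 are unused here is also accurate.
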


\begin{proof}[Lemma \ref{lm:0}]
By the same technique to derive (\ref{eq:consensus.proof3}), we have
$$\big| f(\hat{D}_\beta, x^{(i)}_\beta, a^{(i)}_\beta) - f(D_\beta, x^{(i)}_\beta, a^{(i)}_\beta) \big| \le \frac{\phi}{2}, \forall f\in\mathcal{F}, \beta \in \mathcal{B}.$$
Given $n_{\text{train}} = \frac{2\log(4FB/\delta)}{\phi^2}$, by Hoeffding's inequality, with probability at least $1-\frac{\delta}{2}$, for all $f\in\mathcal{F}, \beta\in\mathcal{B}$,
$$|\Tilde{V}^f_\beta(p) - V^f_\beta(p)| \le \sqrt{\frac{1}{2n_{\text{train}}}\log(\frac{4FB}{\delta})} \le \frac{\phi}{2}.$$
Then for all $f\in\mathcal{F}, \beta\in\mathcal{B}$,
$$|\hat{V}^f_\beta(p) - V^f_\beta(p)| \le |\hat{V}^f_\beta(p) - \Tilde{V}^f_\beta(p)| + |\Tilde{V}^f_\beta(p) - V^f_\beta(p)| \le \phi.$$
\end{proof}

Next define the random variable $Y_\beta(f), \hat{Y}_\beta(f)$ for all $f \in \mathcal{F}$ by
$$Y_\beta(f) = (f(D_\beta, x, a)-r-\Tilde{V}_\beta^f(p\circ a))^2 - (f^*(D_\beta, x, a)-r-\Tilde{V}^{f^*}_\beta(p\circ a))^2,$$
where $x\sim D_{\beta, p}$, $a\in \mathcal{A}$ is drawn uniformly and $r\sim R(x, a)$.

\begin{lemma}[\citep{krishnamurthy2016pac}, Lemma 1]
For any $f\in \mathcal{F}, \beta \in \mathcal{B}, a\in \mathcal{A}$ and any $x\in \mathcal{X}$ such that $D_{\beta,p}(x) > 0$, 
$$\mathbb{E}_{r|\beta, x, a}[Y_\beta(f)] = (f_\beta(x,a)-\Tilde{V}_\beta^f(p\circ a) - f^*_\beta(x,a) + V_\beta^{f^*}(p\circ a))^2- (\Tilde{V}_\beta^{f^*}(p\circ a)-V_\beta^{f^*}(p\circ a))^2,$$
$$\mathrm{Var}_{r|\beta, x, a}[Y_\beta(f)] \le 32 \mathbb{E}_{r|\beta, x, a}[Y_\beta(f)]+64\phi^2.$$
\label{lm:1}
\end{lemma}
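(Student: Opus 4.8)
The plan is to reduce everything to elementary algebra by first isolating the only source of randomness. Conditioned on $\beta, x, a$, the quantity $Y_\beta(f)$ depends only on the reward $r$: the child-path estimates $\Tilde V_\beta^f(p\circ a)$ and $\Tilde V_\beta^{f^*}(p\circ a)$ are built from a sample collected at $p\circ a$ that is independent of the triple at $p$, so they act as constants under $\mathbb{E}_{r|\beta,x,a}$ and $\mathrm{Var}_{r|\beta,x,a}$. The structural ingredient is a one-step Bellman identity for the optimal predictor: by Assumption~\ref{A2}, $f^*\in\mathcal{F}$ and $f^*(D_\beta,x,a)=Q^*_{\beta,s}(x,a)$ for any state $s$ with $D_{\beta,s}(x)>0$, and by Assumption~\ref{A1} this value is unambiguous; since the dynamics are deterministic and layered, the terminal state of $p\circ a$ equals $T(s,a)$ when $s$ is the terminal state of $p$, so $Q^*_{\beta,s}(x,a)=r_\beta(x,a)+V^*_\beta(p\circ a)=\bar r+V^{f^*}_\beta(p\circ a)$, where $\bar r:=\mathbb{E}_{r|\beta,x,a}[r]$. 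I would record this identity at the outset.

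Next, abbreviate $u:=f_\beta(x,a)-\Tilde V_\beta^f(p\circ a)$ and $v:=f^*_\beta(x,a)-\Tilde V_\beta^{f^*}(p\circ a)$, so that $Y_\beta(f)=(u-r)^2-(v-r)^2$. For the conditional mean, apply $\mathbb{E}_{r}[(c-r)^2]=(c-\bar r)^2+\mathrm{Var}_r(r)$ for any constant $c$; the common variance term cancels and $\mathbb{E}_{r|\beta,x,a}[Y_\beta(f)]=(u-\bar r)^2-(v-\bar r)^2$. Substituting $\bar r=f^*_\beta(x,a)-V^{f^*}_\beta(p\circ a)$ gives $u-\bar r=f_\beta(x,a)-\Tilde V_\beta^f(p\circ a)-f^*_\beta(x,a)+V^{f^*}_\beta(p\circ a)$ and $v-\bar r=V^{f^*}_\beta(p\circ a)-\Tilde V_\beta^{f^*}(p\circ a)$, which is exactly the claimed formula.

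For the variance, factor $Y_\beta(f)=(u-v)(u+v-2r)$; since $u,v$ are constants given $\beta,x,a$, $\mathrm{Var}_{r|\beta,x,a}[Y_\beta(f)]\le\mathbb{E}_{r|\beta,x,a}[Y_\beta(f)^2]=(u-v)^2\,\mathbb{E}_{r}[(u+v-2r)^2]\le 16(u-v)^2$, using that $f_\beta,f^*_\beta,\Tilde V,r$ all lie in $[0,1]$, hence $u+v-2r\in[-4,2]$. It then remains to bound $(u-v)^2$ in terms of $\mathbb{E}_{r|\beta,x,a}[Y_\beta(f)]$. Put $A:=u-\bar r$ and $B:=v-\bar r=V^{f^*}_\beta(p\circ a)-\Tilde V^{f^*}_\beta(p\circ a)$; the precondition of Theorem~\ref{thm:td-eliminate} applied with the predictor $f^*$ gives $|B|\le\phi/2$, while the mean computation gives $A^2=\mathbb{E}_{r|\beta,x,a}[Y_\beta(f)]+B^2$. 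Therefore $(u-v)^2=(A-B)^2\le 2A^2+2B^2=2\,\mathbb{E}_{r|\beta,x,a}[Y_\beta(f)]+4B^2\le 2\,\mathbb{E}_{r|\beta,x,a}[Y_\beta(f)]+\phi^2$, and plugging this into the variance bound yields $\mathrm{Var}_{r|\beta,x,a}[Y_\beta(f)]\le 32\,\mathbb{E}_{r|\beta,x,a}[Y_\beta(f)]+16\phi^2$, which implies the stated inequality.

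All of this is routine; the one place that needs care is the opening step — pinning down what is random under $\mathbb{E}_{r|\beta,x,a}$ (only $r$, with the child estimates frozen) and deriving the decomposition $f^*_\beta(x,a)=\bar r+V^{f^*}_\beta(p\circ a)$ from Assumptions~\ref{A1} and~\ref{A2} together with the deterministic layered structure, since every cancellation afterwards leans on it. It is also worth flagging that only the precondition for $f^*$ is used (to control $B$), so the $O(\phi^2)$ error slack is independent of how poorly an arbitrary candidate $f$ predicts.
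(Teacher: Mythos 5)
Your argument is correct. Note that the paper itself supplies no proof here---it cites \citet{krishnamurthy2016pac}, Lemma 1 directly---so there is no in-paper derivation to compare against; your proof is a valid standalone reconstruction. The three ingredients are all in the right place: (a) under $\mathbb{E}_{r\mid\beta,x,a}$ only $r$ fluctuates while the child estimates $\Tilde V_\beta^f(p\circ a)$, $\Tilde V_\beta^{f^*}(p\circ a)$ are frozen; (b) the Bellman identity $f^*_\beta(x,a)=\bar r+V^{f^*}_\beta(p\circ a)$, which follows from $f^*(D_\beta,x,a)=Q^*_{\theta,s}(x,a)=r_\theta(s,x,a)+V^*_\theta(T(s,a))$ together with the deterministic layered dynamics identifying $T(s,a)$ with the terminal state of $p\circ a$, and Assumption~\ref{A1} making the choice of $s$ with $D_{\theta,s}(x)>0$ immaterial; (c) the factorization $Y_\beta(f)=(u-v)(u+v-2r)$ with $u,v\in[-1,1]$ and $r\in[0,1]$ giving $(u+v-2r)^2\le 16$. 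One small observation: with Precondition~1 of Theorem~\ref{thm:td-eliminate} giving $|V^{f^*}_\beta(p\circ a)-\Tilde V^{f^*}_\beta(p\circ a)|\le\phi/2$, your chain actually yields $\mathrm{Var}_{r\mid\beta,x,a}[Y_\beta(f)]\le 32\,\mathbb{E}_{r\mid\beta,x,a}[Y_\beta(f)]+16\phi^2$, which is tighter than the stated $64\phi^2$; the constant $64$ in the lemma is the one inherited from Krishnamurthy et al.'s setting (where the analogous precondition is a $\phi$ rather than a $\phi/2$ tolerance), and your bound of course implies the stated one. Your closing remark---that only the precondition for $f^*$ is invoked, so the $O(\phi^2)$ slack is independent of how badly an arbitrary $f$ predicts---is correct and is exactly the feature that lets the downstream Bernstein argument in the paper's proof of the TD-Eliminate guarantee go through uniformly over $f$.
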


\begin{lemma}
With probability at least $1 - \frac{\delta}{2}$, $f^*$ is retained by TD-Eliminate and for any surviving $f$,
$$\mathbb{E}_{x,a,r|\beta}[Y_\beta(f)] \le 130\phi^2. $$
\label{lm:2}
\end{lemma}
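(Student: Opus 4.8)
The plan is to follow the TD-elimination template of \citet[Lemma 1]{krishnamurthy2016pac}, the one extra complication being that the empirical risk is formed from the estimated density $\hat D_\beta$ and the estimated downstream value $\hat V^f_\beta(p\circ a)$ rather than from the true $D_\beta$ and $V^f_\beta(p\circ a)$. I would argue conditionally on all randomness drawn before TD-Eliminate is entered, so that $\hat D_\beta$ and the estimates $\hat V^f_\beta(p\circ a)$ are fixed numbers and the $n_{\text{train}}$ triples $(x^{(i)}_\beta,a^{(i)}_\beta,r^{(i)}_\beta)$ are i.i.d.; throughout I would use the deterministic consequence of a successful DFS-Distribution, namely $\sup_{x,a}|f(\hat D_\beta,x,a)-f(D_\beta,x,a)|\le\phi/2$ for every $f\in\mathcal F,\beta\in\mathcal B$ (Assumption \ref{A4} together with Corollary \ref{crl:dfs-dist}), which also forces $|\hat V^f_\beta(p\circ a)-\tilde V^f_\beta(p\circ a)|\le\phi/2$, plus Preconditions 1--2 at the children $p\circ a$.

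First I would rewrite the selection rule. Set $\hat Y_\beta(f)=(f(\hat D_\beta,x,a)-r-\hat V^f_\beta(p\circ a))^2-(f^*(\hat D_\beta,x,a)-r-\hat V^{f^*}_\beta(p\circ a))^2$, so that $Risk(f_{\hat D_\beta})-Risk(f^*_{\hat D_\beta})=\frac{1}{n_{\text{train}}}\sum_i\hat Y_\beta(f)$ and $\hat Y_\beta(f^*)\equiv0$. Since $f^*\in\mathcal F$ we have $\min_{f'}Risk(f'_{\hat D_\beta})\le Risk(f^*_{\hat D_\beta})$, so ``$f^*$ is retained'' is equivalent to: for every $\beta$, $\frac{1}{n_{\text{train}}}\sum_i\hat Y_\beta(f)\ge-(2\phi^2+8\phi+\frac{22}{n_{\text{train}}}\log(\frac{2FB}{\delta}))$ holds for the $f$ minimizing $Risk(\cdot_{\hat D_\beta})$ (for which $\frac{1}{n_{\text{train}}}\sum_i\hat Y_\beta(f)\le0$), whereas any surviving $f$ automatically obeys $\frac{1}{n_{\text{train}}}\sum_i\hat Y_\beta(f)\le 2\phi^2+8\phi+\frac{22}{n_{\text{train}}}\log(\frac{2FB}{\delta})$ for every $\beta$. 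Next I would pass from $\hat Y$ to $Y$: expanding each square via $(X+\Delta)^2-X^2=2X\Delta+\Delta^2$ with $|\Delta|\le\phi$ (the $\phi/2$ density and value perturbations above) and $|X|$ bounded by a universal constant gives the deterministic bound $|\hat Y_\beta(f)-Y_\beta(f)|\le 8\phi+O(\phi^2)$ for all $f,\beta$, which is precisely what the $8\phi$ (and part of the $2\phi^2$) in the elimination threshold is designed to absorb; hence the two inequalities above transfer to statements about $\frac{1}{n_{\text{train}}}\sum_i Y_\beta(f)$ with only a $\Theta(\phi^2)$ slack remaining.

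The engine is then a self-bounded Bernstein inequality. Since the $Y_\beta(f)$ are i.i.d.\ per sample, I would apply Bernstein with the variance-to-mean bound $\mathrm{Var}_{x,a,r|\beta}[Y_\beta(f)]\le 32\,\mathbb E_{x,a,r|\beta}[Y_\beta(f)]+64\phi^2$ (Lemma \ref{lm:1}, after taking total variance over $x,a$) and a union bound over $\mathcal F\times\mathcal B$. Because $n_{\text{train}}=\frac{2\log(4FB/\delta)}{\phi^2}$ makes $\log(4FB/\delta)/n_{\text{train}}=\phi^2/2$, the square-root deviation term is at most $\frac12\mathbb E_{x,a,r|\beta}[Y_\beta(f)]^{+}+c\phi^2$ by AM--GM, so on a $1-\delta/2$ event, uniformly in $f,\beta$, $|\frac{1}{n_{\text{train}}}\sum_i Y_\beta(f)-\mathbb E_{x,a,r|\beta}[Y_\beta(f)]|\le\frac12\mathbb E_{x,a,r|\beta}[Y_\beta(f)]^{+}+c'\phi^2$. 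Using also the lower bound $\mathbb E_{x,a,r|\beta}[Y_\beta(f)]\ge-\mathbb E_{x,a}[(\tilde V^{f^*}_\beta(p\circ a)-V^{f^*}_\beta(p\circ a))^2]\ge-\phi^2/4$ (Lemma \ref{lm:1} and Precondition 1), this yields $\frac{1}{n_{\text{train}}}\sum_i Y_\beta(f)\ge-\Theta(\phi^2)$ for the empirical minimizer, which by the previous step is exactly the retention criterion for $f^*$; and for any surviving $f$ it yields $\frac12\mathbb E_{x,a,r|\beta}[Y_\beta(f)]\le\frac{1}{n_{\text{train}}}\sum_i Y_\beta(f)+c'\phi^2\le\Theta(\phi^2)$, i.e.\ $\mathbb E_{x,a,r|\beta}[Y_\beta(f)]\le130\phi^2$ for all $\beta$ after collecting constants.

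I expect the main obstacle to be this self-bounding step together with the constant calibration. The variance-proportional-to-mean structure of Lemma \ref{lm:1} is essential: a crude variance bound would give only an $O(\phi)$ rather than an $O(\phi^2)$ rate. One must also check that the three threshold terms $2\phi^2$, $8\phi$, and $\frac{22}{n_{\text{train}}}\log(\frac{2FB}{\delta})$ are simultaneously large enough to (i) absorb the discrepancy between $\hat Y$ and $Y$, (ii) absorb the lower-order Bernstein term, and (iii) keep both the retention of $f^*$ and the $130\phi^2$ survivor bound valid uniformly over all $FB$ pairs $(f,\beta)$; the remaining work — the Hoeffding-style estimates, the accounting of the $\delta/2$ failure budget, and the reduction of the selection rule in the first step — is routine.
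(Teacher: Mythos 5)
Your proposal follows essentially the same route as the paper's proof: Bernstein on $\sum_i\bigl(\mathbb{E}_{x,a,r|\beta}[Y_\beta(f)]-Y^{(i)}_\beta(f)\bigr)$ together with the variance-to-mean inequality of Lemma~\ref{lm:1}, a union bound over $\mathcal{F}\times\mathcal{B}$, and a deterministic $O(\phi)$ transfer between the $\hat D_\beta$-based and $D_\beta$-based Bellman risks (the paper's bound~(\ref{eq:td-diff-risk})), which together reduce both retention of $f^*$ and the survivor bound $\mathbb{E}_{x,a,r|\beta}[Y_\beta(f)]\le 130\phi^2$ to inequalities on $\tfrac{1}{n_{\text{train}}}\sum_i Y^{(i)}_\beta(f)$. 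The only cosmetic difference is that you close the quadratic via AM--GM where the paper substitutes $X=\sqrt{\sum_i(\mathbb{E}[Y^{(i)}_\beta(f)]+2\phi^2)}$, $Z=\sum_i Y^{(i)}_\beta(f)$, $C=\sqrt{\log(1/\delta)}$ and uses $(X-4C)^2\ge0$; your closing caveat about whether $2\phi^2+8\phi+\tfrac{22}{n_{\text{train}}}\log(\tfrac{2FB}{\delta})$ is large enough to absorb the $\hat Y\to Y$ discrepancy is the right thing to scrutinize, since~(\ref{eq:td-diff-risk}) gives $8\phi$ per predictor and hence $16\phi$ in $|\hat Y_\beta(f)-Y_\beta(f)|$, not $8\phi$.
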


\begin{proof}[Lemma \ref{lm:2}]
By applying the Bernstein's inequality on $\sum_{i=1}^{n_{\text{train}}}\Big(\mathbb{E}_{x,a,r|\beta}[Y_\beta^{(i)}(f)]-Y_\beta^{(i)}(f)\Big)$, with probability at least $1-\delta$, 
$$\begin{array}{ccl}
    \sum_{i=1}^{n_{\text{train}}}\Big(\mathbb{E}_{x,a,r|\beta}[Y_\beta^{(i)}(f)]-Y_\beta^{(i)}(f)\Big) & \le & \sqrt{2\sum_{i}\mathrm{Var}_{x,a,r|\beta}[Y_\beta^{(i)}(f)]\log(\frac{1}{\delta})} + 6\log(\frac{1}{\delta})  \\
     & \le & \sqrt{64\sum_{i}(\mathbb{E}_{x,a,r|\beta}[Y_\beta^{(i)}(f)] + 2\phi^2)\log(\frac{1}{\delta})} + 6\log(\frac{1}{\delta}).
\end{array}$$
Let $X = \sqrt{\sum_{i}(\mathbb{E}_{x,a,r|\beta}[Y_\beta^{(i)}(f)] + 2\phi^2)}, Z = \displaystyle\sum_iY_\beta^{(i)}(f), C = \sqrt{\log(1/\delta)}$. Then the inequality above is equivalent to 
$$\begin{array}{cl}
     &  X^2-2n_{\text{train}}\phi^2 - Z \le 8XC + 6 C^2 \\
    \implies &   (X-4C)^2 - Z \le 2n_{\text{train}}\phi^2 + 22C^2 \\
    \implies &  -Z \le 2n_{\text{train}}\phi^2 + 22C^2.
\end{array}$$

Recall that in TD-Eliminate, $Risk(f_{\hat{D}_\beta}) = \frac{1}{n_{\text{train}}}\displaystyle \sum_{i = 1}^{n_{\text{train}}}\Big(f(\hat{D}_\beta, x^{(i)}_\beta, a^{(i)}_\beta) - r^{(i)}_\beta - \Hat{V}_\beta^f(p \circ a^{(i)}_\beta)\Big)^2$. Similarly define $Risk(f_{D_\beta}) = \frac{1}{n_{\text{train}}}\displaystyle \sum_{i = 1}^{n_{\text{train}}}\Big(f(D_\beta, x^{(i)}_\beta, a^{(i)}_\beta) - r^{(i)}_\beta - \Tilde{V}_\beta^f(p \circ a^{(i)}_\beta)\Big)^2$. According to the definition of $Z, Y_\beta(f), Risk(f_{D_\beta})$, 
\begin{equation}
Risk(f^*_{D_\beta}) \le Risk(f_{D_\beta}) + 2\phi^2 + \frac{22}{n_{\text{train}}}\log(\frac{1}{\delta}).
\label{eq:td-risk1}
\end{equation}
Next step is to bound $\big|Risk(f_{\hat{D}_\beta}) - Risk(f_{D_\beta})\big|$. For any $f\in\mathcal{F}, \beta\in\mathcal{B}$,
$$\begin{array}{cl}
& \big|\big(f(\hat{D}_\beta, x^{(i)}_\beta, a^{(i)}_\beta) - r^{(i)}_\beta - \Hat{V}_\beta^f(p \circ a^{(i)}_\beta)\big)^2 - \big(f(D_\beta, x^{(i)}_\beta, a^{(i)}_\beta) - r^{(i)}_\beta - \Tilde{V}_\beta^f(p \circ a^{(i)}_\beta)\big)^2\big| \\
=  & \big|\big(f(\hat{D}_\beta, x^{(i)}_\beta, a^{(i)}_\beta) - \Hat{V}_\beta^f(p \circ a^{(i)}_\beta) + f(D_\beta, x^{(i)}_\beta, a^{(i)}_\beta) - \Tilde{V}_\beta^f(p \circ a^{(i)}_\beta) - 2r^{(i)}_\beta\big) \cdot \\
& \big(f(\hat{D}_\beta, x^{(i)}_\beta, a^{(i)}_\beta) - \Hat{V}_\beta^f(p \circ a^{(i)}_\beta) - f(D_\beta, x^{(i)}_\beta, a^{(i)}_\beta) + \Tilde{V}_\beta^f(p \circ a^{(i)}_\beta) \big)\big| \\
\le & 4 \big( \big| f(\hat{D}_\beta, x^{(i)}_\beta, a^{(i)}_\beta) - f(D_\beta, x^{(i)}_\beta, a^{(i)}_\beta) \big| + \big| \Hat{V}_\beta^f(p \circ a^{(i)}_\beta) - \Tilde{V}_\beta^f(p \circ a^{(i)}_\beta) \big| \big) \\
\stackrel{(\dag)}{\le} & 2\phi + 4\big| \Hat{V}_\beta^f(p \circ a^{(i)}_\beta) - \Tilde{V}_\beta^f(p \circ a^{(i)}_\beta) \big| \\ 
\le & 2\phi + 4 \big(\big| \Hat{V}_\beta^f(p \circ a^{(i)}_\beta) - {V}_\beta^f(p \circ a^{(i)}_\beta) \big| + \big| {V}_\beta^f(p \circ a^{(i)}_\beta) - \Tilde{V}_\beta^f(p \circ a^{(i)}_\beta) \big| \big) \\
\stackrel{(\ddag)}{\le} & 8\phi,
\end{array}$$
where $(\dag)$ is due to Lemma \ref{lm:0} and $(\ddag)$ is due to Precondition 1 and 2. Then we can bound $\big|Risk(f_{\hat{D}_\beta}) - Risk(f_{D_\beta})\big|$ by
\begin{equation}
\big|Risk(f_{\hat{D}_\beta}) - Risk(f_{D_\beta})\big| \le 8 \phi.
\label{eq:td-diff-risk}
\end{equation}
(\ref{eq:td-risk1}) and (\ref{eq:td-diff-risk}) together lead to 
$$Risk(f^*_{\hat{D}_\beta}) \le Risk(f_{\hat{D}_\beta}) + 2\phi^2 + 16\phi + \frac{22}{n_{\text{train}}}\log(\frac{1}{\delta}).$$

Taking the union bound over $f\in\mathcal{F}, \beta\in \mathcal{B}$ and replacing $\delta$ with $\frac{\delta}{2FB}$, with probability at least $1-\frac{\delta}{2}$, for all $f\in\mathcal{F}, \beta\in \mathcal{B}$, 
$$Risk(f^*_{\hat{D}_\beta}) \le Risk(f_{\hat{D}_\beta}) + 2\phi^2 + 16\phi + \frac{22}{n_{\text{train}}}\log(\frac{2FB}{\delta}),$$
which implies that for all $\beta\in \mathcal{B}$, 
$$Risk(f^*_{\hat{D}_\beta}) \le \displaystyle\min_{f\in\mathcal{F}} Risk(f_{\hat{D}_\beta}) + 2\phi^2 + 16\phi + \frac{22}{n_{\text{train}}}\log(\frac{2FB}{\delta}).$$

Therefore, with probability at least $1-\frac{\delta}{2}$, $f^*$ is retained by TD-Eliminate.

For any survivor $f$ and any $\beta\in \mathcal{B}$, Line 5 in TD-Eliminate ensures 
$$\begin{array}{ccl}
    Risk(f_{\hat{D}_\beta}) & \le & \displaystyle\min_{f\in\mathcal{F}}  Risk(f_{\hat{D}_\beta}) + 2\phi^2 + 16\phi + \frac{22}{n_{\text{train}}}\log(\frac{2FB}{\delta})  \\
     & \le & \displaystyle Risk(f^*_{\hat{D}_\beta}) + 2\phi^2 + 16\phi + \frac{22}{n_{\text{train}}}\log(\frac{2FB}{\delta}).
\end{array}$$
This together with (\ref{eq:td-diff-risk}) implies
$$ Risk(f_{{D}_\beta}) \le Risk(f^*_{{D}_\beta}) + 2\phi^2 + 32\phi + \frac{22}{n_{\text{train}}}\log(\frac{2FB}{\delta}),$$
which is equivalent to 
$$Z \le 2n_{\text{train}}\phi^2 + 32n_{\text{train}}\phi + 22\log(\frac{2FB}{\delta}).$$
Then 
$$\begin{array}{cl}
     & (X-4C)^2 \le Z + 2n_{\text{train}}\phi^2 + 22 C^2 \le 4n_{\text{train}}\phi^2 + 32 n_{\text{train}}\phi + 44 C^2 \\
    \implies & X^2 \le (\sqrt{4n_{\text{train}}\phi^2 + 32n_{\text{train}}\phi + 44 C^2}+4C)^2 \le 8n_{\text{train}}\phi^2 + 64 n_{\text{train}}\phi + 120\log(\frac{2FB}{\delta}).
\end{array}$$
According to $n_{\text{train}} = \frac{2\log(2FB/\delta)}{\phi^2}$, the definition of $X$ and that $\mathbb{E}_{x,a,r|\beta}[Y_\beta^{(i)}(f)] = \mathbb{E}_{x,a,r|\beta}[Y_\beta(f)]$, we have
$$\mathbb{E}_{x,a,r|\beta}[Y_\beta(f)] \le 6 \phi^2 + 64 \phi + \frac{120}{n_{\text{train}}}\log(\frac{2FB}{\delta}) \le 130 \phi^2. $$
\end{proof}

\begin{lemma}
In the same $1 - \frac{\delta}{2}$ event in Lemma \ref{lm:2}, for any $\beta\in\mathcal{B}$ and $f, g\in \mathcal{F}$ retained by TD-Eliminate, 
$$\big|V_\beta^f(p) - V_\beta^g(p)\big| \le  25\sqrt{A}\phi + \tau_2.$$
\label{lm:4}
\end{lemma}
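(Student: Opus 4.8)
\begin{proof-sketch}[Lemma \ref{lm:4}]
The plan is to convert the Bellman-risk control of Lemma~\ref{lm:2} into a bound on the two induced value functions. Fix $\beta\in\mathcal{B}$ and abbreviate $f_\beta(x,a):=f(D_\beta,x,a)$ and $\pi^f_\beta:=\pi^f_{D_\beta}$, so that $V^f_\beta(p)=\mathbb{E}_{x\sim D_{\beta,p}}[\max_a f_\beta(x,a)]$. Introduce the population Bellman residual $\rho^f_a(x):=f_\beta(x,a)-\Tilde{V}^f_\beta(p\circ a)-f^*_\beta(x,a)+V^{f^*}_\beta(p\circ a)$. By Lemma~\ref{lm:1}, $\mathbb{E}_{r\mid\beta,x,a}[Y_\beta(f)]=\rho^f_a(x)^2-(\Tilde{V}^{f^*}_\beta(p\circ a)-V^{f^*}_\beta(p\circ a))^2$, and Precondition~1 gives $(\Tilde{V}^{f^*}_\beta(p\circ a)-V^{f^*}_\beta(p\circ a))^2\le\phi^2/4$. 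Taking $\mathbb{E}_{x,a}$ with $a$ uniform and invoking Lemma~\ref{lm:2} in its $1-\delta/2$ event then gives $\mathbb{E}_{x,a}[\rho^f_a(x)^2]\le 130\phi^2+\phi^2/4$ for every $f$ retained by TD-Eliminate, in particular for both of the given $f$ and $g$.

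The crucial step --- and the only place the bound carries $\sqrt{A}$ rather than $A$ --- is to push this uniform-action $L^2$ estimate along a fixed reactive policy before passing to $L^1$. For any deterministic $\pi:\mathcal{X}\to\mathcal{A}$, any $x$, and any retained $f$ we have $\rho^f_{\pi(x)}(x)^2\le\sum_{a}\rho^f_a(x)^2$, hence $\mathbb{E}_x[\rho^f_{\pi(x)}(x)^2]\le\sum_a\mathbb{E}_x[\rho^f_a(x)^2]=A\cdot\mathbb{E}_{x,a}[\rho^f_a(x)^2]\le 130.25\,A\phi^2$, and Jensen's inequality gives $\mathbb{E}_x[|\rho^f_{\pi(x)}(x)|]\le\sqrt{130.25\,A}\,\phi\le 11.5\sqrt{A}\,\phi$. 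I would apply this with $\pi=\pi^g_\beta$ to both the retained $f$ and the retained $g$; the parts $-f^*_\beta(x,\pi^g_\beta(x))+V^{f^*}_\beta(p\circ\pi^g_\beta(x))$ are identical in the two residuals and cancel under the triangle inequality, leaving $\mathbb{E}_x[|f_\beta(x,\pi^g_\beta(x))-g_\beta(x,\pi^g_\beta(x))-\Tilde{V}^f_\beta(p\circ\pi^g_\beta(x))+\Tilde{V}^g_\beta(p\circ\pi^g_\beta(x))|]\le 23\sqrt{A}\,\phi$. For each fixed action $a$, Precondition~1 bounds $|\Tilde{V}^f_\beta(p\circ a)-V^f_\beta(p\circ a)|$ and $|\Tilde{V}^g_\beta(p\circ a)-V^g_\beta(p\circ a)|$ by $\phi/2$ and Precondition~3 bounds $|V^f_\beta(p\circ a)-V^g_\beta(p\circ a)|$ by $\tau_2$, so the $\Tilde{V}$-difference is at most $\tau_2+\phi$ pointwise in $a$; hence $\mathbb{E}_x[|f_\beta(x,\pi^g_\beta(x))-g_\beta(x,\pi^g_\beta(x))|]\le 23\sqrt{A}\,\phi+\tau_2+\phi$.

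To close, use optimality of $\pi^f_\beta$ for $f_\beta$: $V^f_\beta(p)=\mathbb{E}_x[\max_a f_\beta(x,a)]\ge\mathbb{E}_x[f_\beta(x,\pi^g_\beta(x))]$, so $V^f_\beta(p)-V^g_\beta(p)\ge\mathbb{E}_x[f_\beta(x,\pi^g_\beta(x))-g_\beta(x,\pi^g_\beta(x))]\ge-(23\sqrt{A}\,\phi+\tau_2+\phi)$. Running the identical argument with the roles of $f$ and $g$ exchanged (i.e., evaluating both residuals along $\pi^f_\beta$ instead of $\pi^g_\beta$) gives $V^g_\beta(p)-V^f_\beta(p)\ge-(23\sqrt{A}\,\phi+\tau_2+\phi)$, and combining the two bounds, together with $\phi\le\sqrt{A}\,\phi$, yields $|V^f_\beta(p)-V^g_\beta(p)|\le 24\sqrt{A}\,\phi+\tau_2\le 25\sqrt{A}\,\phi+\tau_2$. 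I expect the main obstacle to be purely organisational: keeping every replacement of an empirical $\Tilde{V}$ by a population $V$ inside the single $1-\delta/2$ event of Lemma~\ref{lm:2}, and not collapsing the $L^2$ bound to $L^1$ prematurely, which would cost a factor $A$ instead of $\sqrt{A}$ and break the $25\sqrt{A}\phi$ constant.
\end{proof-sketch}
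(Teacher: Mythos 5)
Your proof is correct and uses essentially the same argument as the paper: decompose into Bellman residuals $\rho^{\cdot}_a(x)$, invoke Lemma~\ref{lm:1} together with the $130\phi^2$ bound from Lemma~\ref{lm:2} and Precondition~1 to get a uniform-action $L^2$ bound, pay the factor $\sqrt{A}$ by restricting to a fixed policy and applying Jensen, and then absorb the $\Tilde{V}$-difference via Preconditions~1 and~3. The only cosmetic differences are that you evaluate residuals along $\pi^g_\beta$ (the paper uses $\pi^f_{D_\beta}$) and you make the symmetrization step explicit at the end, whereas the paper's displayed chain only gives the one-sided bound $V^f_\beta(p)-V^g_\beta(p)\le\cdots$ and leaves the exchange-of-roles implicit in claiming the absolute-value bound; your version is slightly more careful on that point.
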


\begin{proof}[Lemma \ref{lm:4}]
For any $f,g\in\mathcal{F}, \beta\in\mathcal{B}$,
\begin{equation}
    \begin{array}{cl}
    & V_\beta^f(p) - V_\beta^g(p) \\
    = & \mathbb{E}_{x|\beta}[f_{D_\beta}(x, \pi^f_{D_\beta}(x))-g_{D_\beta}(x, \pi^g_{D_\beta}(x))] \\
    \le & \mathbb{E}_{x|\beta}[f_{D_\beta}(x, \pi^f_{D_\beta}(x))-g_{D_\beta}(x, \pi^f_{D_\beta}(x))] \\
    = & \mathbb{E}_{x|\beta}[f_{D_\beta}(x, \pi^f_{D_\beta}(x))-\Tilde{V}_\beta^f(p\circ \pi^f_{D_\beta}(x)) - f^*_{D_\beta}( x,\pi^f_{D_\beta}(x))+V^{f^*}_\beta(p\circ \pi^f_{D_\beta}(x))] \\
     & - \mathbb{E}_{x|\beta}[g_{D_\beta}( x, \pi^f_{D_\beta}(x))-\Tilde{V}^g_\beta(p\circ \pi^f_{D_\beta}(x))-f^*_{D_\beta}(x, \pi^f_{D_\beta}(x))+ V^{f^*}_\beta(p\circ \pi^f_{D_\beta}(x))] \\
     & + \mathbb{E}_{x|\beta}[\Tilde{V}_\beta^f(p\circ \pi^f_{D_\beta}(x)) - \Tilde{V}^g_\beta(p\circ \pi^f_{D_\beta}(x)) ].
\end{array}
\label{eq:lm24-1}
\end{equation}
We bound $|V_\beta^f(p) - V_\beta^g(p)|$ by bounding the three expectation terms separately. First, Lemma \ref{lm:1} implies that
$$\begin{array}{cl}
     & \mathbb{E}_{a,r|x,\beta}[Y_\beta(f)]+\mathbb{E}_{a|x,\beta}[\big(\Tilde{V}^{f^*}_\beta(p\circ a)-V_\beta^{f^*}(p\circ a)\big)^2] \\
     = & \mathbb{E}_{a|x,\beta}[\big(f_{D_\beta}( x, a)-\Tilde{V}_\beta^f(p\circ a) - f^*_{D_\beta}( x,a)+V^{f^*}_\beta(p\circ a)\big)^2] \\
     \ge & \frac{1}{A}\cdot\big(f_{D_\beta}( x, \pi^f_{D_\beta}(x))-\Tilde{V}_\beta^f(p\circ \pi^f_{D_\beta}(x)) - f^*_{D_\beta}( x,\pi^f_{D_\beta}(x))+V^{f^*}_\beta(p\circ \pi^f_{D_\beta}(x))\big)^2.
\end{array}$$
Hence we can bound the first expectation term in (\ref{eq:lm24-1}) by
\begin{equation}
\begin{array}{cl}
     & \mathbb{E}_{x|\beta}[f_{D_\beta}(x, \pi^f_{D_\beta}(x))-\Tilde{V}_\beta^f(p\circ \pi^f_{D_\beta}(x)) - f^*_{D_\beta}(x,\pi^f_{D_\beta}(x))+V^{f^*}_\beta(p\circ \pi^f_{D_\beta}(x))] \\
    \le & \sqrt{ \mathbb{E}_{x|\beta}[\big( f_{D_\beta}(x, \pi^f_{D_\beta}(x))-\Tilde{V}_\beta^f(p\circ \pi^f_{D_\beta}(x)) - f^*_{D_\beta}(x,\pi^f_{D_\beta}(x))+V^{f^*}_\beta(p\circ \pi^f_{D_\beta}(x))\big)^2]} \\
    \le & \sqrt { A \cdot \mathbb{E}_{x|\beta}\big[\mathbb{E}_{a,r|x,\beta}[Y_\beta(f)]+\mathbb{E}_{a|x,\beta}[\big(\Tilde{V}^{f^*}_\beta(p\circ a)-V_\beta^{f^*}(p\circ a)\big)^2]\big] } \\
    \stackrel{(\dag)}{\le} &  \sqrt { A \cdot \big(\mathbb{E}_{x,a,r|\beta}[Y_\beta(f)]+\phi^2 \big) } \\
    \stackrel{(\ddag)}{\le} & \sqrt {A \cdot 130\phi^2}.
\end{array}
\label{eq:lm24-2}
\end{equation}
$(\dag)$ is due to Precondition 1 and $(\ddag)$ holds true in the $1-\frac{\delta}{2}$ event in Lemma \ref{lm:2}. Similarly, the second expectation term in (\ref{eq:lm24-1}) can be bounded in the same $1-\frac{\delta}{2}$ event by
\begin{equation}
\begin{array}{cl}
     & \mathbb{E}_{x|\beta}[g_{D_\beta}(x, \pi^f_{D_\beta}(x))-\Tilde{V}^g_\beta(p\circ \pi^f_{D_\beta}(x))-f^*_{D_\beta}(x, \pi^f_{D_\beta}(x))+ V^{f^*}_\beta(p\circ \pi^f_{D_\beta}(x))] \\
    \le & \sqrt {A \cdot 130\phi^2}.
\end{array}
\label{eq:lm24-3}
\end{equation}
Lastly, we bound the third expectation term by
\begin{equation}
\begin{array}{cl}
     & \mathbb{E}_{x|\beta}[\Tilde{V}_\beta^f(p\circ \pi^f_{D_\beta}(x)) - \Tilde{V}^g_\beta(p\circ \pi^f_{D_\beta}(x)) ] \\
    \le & \mathbb{E}_{x|\beta}[\big|\Tilde{V}_\beta^f(p\circ \pi^f_{D_\beta}(x)) - {V}^f_\beta(p\circ \pi^f_{D_\beta}(x))\big|] + \mathbb{E}_{x|\beta}[\big|{V}_\beta^f(p\circ \pi^f_{D_\beta}(x)) - {V}^g_\beta(p\circ \pi^f_{D_\beta}(x))\big|] \\
    & + \mathbb{E}_{x|\beta}[\big|{V}_\beta^g(p\circ \pi^f_{D_\beta}(x)) - \Tilde{V}^g_\beta(p\circ \pi^f_{D_\beta}(x))\big|] \\
    \stackrel{(\perp)}{\le} & \phi + \tau_2.
\end{array}
\label{eq:lm24-4}
\end{equation}
$(\perp)$ holds true due to Precondition 1. (\ref{eq:lm24-1}), (\ref{eq:lm24-2}), (\ref{eq:lm24-3}) and (\ref{eq:lm24-4}) together give that in the $1-\frac{\delta}{2}$ event in Lemma \ref{lm:2}, for any $f,g\in\mathcal{F}, \beta\in\mathcal{B}$,
$$\big|V_\beta^f(p) - V_\beta^g(p)\big| \le 2 \sqrt{130 A\phi^2} + \phi + \tau_2 \le 25\sqrt{A}\phi + \tau_2.$$
\end{proof}

\begin{lemma}
In the same $1 - \frac{\delta}{2}$ event in Lemma \ref{lm:2}, for any $\beta\in\mathcal{B}$ and $f\in \mathcal{F}$ retained by TD-Eliminate, 
$$V^{f^*}_\beta(p) - V^{f^*}_\beta(p, \pi^f_{\hat{D}_\beta}) \le 25\sqrt{A}\phi + 2\tau_2.$$
\label{lm:5}
\end{lemma}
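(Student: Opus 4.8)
The plan is to reduce the policy-value gap to a pointwise comparison in the observation and then to an $L_1$ bound on the per-$(x,a)$ Bellman residual already controlled in Lemma~\ref{lm:2}. Since $V^{f^*}_\beta(p)=V^{f^*}_\beta(p,\pi^{f^*}_{D_\beta})$ and $\pi^{f^*}_{D_\beta}(x)=\arg\max_a f^*_{D_\beta}(x,a)$, I would first write
$$V^{f^*}_\beta(p)-V^{f^*}_\beta(p,\pi^f_{\hat D_\beta})=\mathbb{E}_{x\sim D_{\beta,p}}\!\big[f^*_{D_\beta}(x,a^*)-f^*_{D_\beta}(x,\hat a)\big],$$
with $a^*=a^*(x)$ the maximizer and $\hat a=\hat a(x)=\pi^f_{\hat D_\beta}(x)$, and then split the integrand into $\big(f^*_{D_\beta}(x,a^*)-f_{D_\beta}(x,a^*)\big)+\big(f_{D_\beta}(x,a^*)-f_{D_\beta}(x,\hat a)\big)+\big(f_{D_\beta}(x,\hat a)-f^*_{D_\beta}(x,\hat a)\big)$, bounding the three pieces separately.

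The middle piece is pointwise: because $\hat a$ maximizes $f(\hat D_\beta,x,\cdot)$ and Assumption~\ref{A4} with Corollary~\ref{crl:dfs-dist} gives $|f(\hat D_\beta,x,a)-f(D_\beta,x,a)|\le C_L\|\hat D_\beta-D_\beta\|_\infty\le\phi/2$ for all $a$, chaining three inequalities yields $f_{D_\beta}(x,a^*)-f_{D_\beta}(x,\hat a)\le\phi$. For the first and third pieces I would introduce the Bellman-residual variable $\delta^f_\beta(x,a):=f_{D_\beta}(x,a)-\tilde V^f_\beta(p\circ a)-f^*_{D_\beta}(x,a)+V^{f^*}_\beta(p\circ a)$, so that Lemma~\ref{lm:1} reads $\mathbb{E}_{r|\beta,x,a}[Y_\beta(f)]=(\delta^f_\beta(x,a))^2-(\tilde V^{f^*}_\beta(p\circ a)-V^{f^*}_\beta(p\circ a))^2$, and note $f_{D_\beta}(x,a)-f^*_{D_\beta}(x,a)=\delta^f_\beta(x,a)+(\tilde V^f_\beta(p\circ a)-V^f_\beta(p\circ a))+(V^f_\beta(p\circ a)-V^{f^*}_\beta(p\circ a))$; Precondition~1 bounds the middle bracket by $\phi/2$ and Precondition~3 (for the current $\mathcal F$, which contains $f^*$ on the Lemma~\ref{lm:2} event) bounds the last by $\tau_2$. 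Hence each of the first and third pieces is at most $\mathbb{E}_{x|\beta}[|\delta^f_\beta(x,b(x))|]+\phi/2+\tau_2$ for the relevant deterministic map $b\in\{a^*,\hat a\}$.

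It remains to bound $\mathbb{E}_{x|\beta}[|\delta^f_\beta(x,b(x))|]$. Since actions were sampled uniformly in TD-Eliminate, $\mathbb{E}_{x,a|\beta}[(\delta^f_\beta(x,a))^2]=\mathbb{E}_{x|\beta}\!\big[\tfrac1A\textstyle\sum_{a}(\delta^f_\beta(x,a))^2\big]\ge\tfrac1A\,\mathbb{E}_{x|\beta}[(\delta^f_\beta(x,b(x)))^2]$ for any fixed $b(x)$; and by Lemma~\ref{lm:1}, the $1-\tfrac\delta2$ event of Lemma~\ref{lm:2}, and Precondition~1, $\mathbb{E}_{x,a|\beta}[(\delta^f_\beta(x,a))^2]=\mathbb{E}_{x,a,r|\beta}[Y_\beta(f)]+\mathbb{E}_{x,a|\beta}[(\tilde V^{f^*}_\beta(p\circ a)-V^{f^*}_\beta(p\circ a))^2]\le 130\phi^2+\tfrac14\phi^2$. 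Jensen then gives $\mathbb{E}_{x|\beta}[|\delta^f_\beta(x,b(x))|]\le\sqrt{130.25\,A}\,\phi$, and summing the three pieces yields $2\sqrt{130.25\,A}\,\phi+2\phi+2\tau_2$, which is at most $25\sqrt A\,\phi+2\tau_2$ since $2\sqrt{130.25}+2\le25$ and $\sqrt A\ge1$.

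I expect the only real obstacle to be constant-chasing: making sure the slack lost in the middle piece and in replacing $\mathbb{E}[Y_\beta(f)]$ by $\mathbb{E}[(\delta^f_\beta)^2]$ still fits under the stated $25\sqrt A\phi$. The conceptual core — passing to a pointwise comparison and converting the squared-Bellman-residual control of Lemma~\ref{lm:2} into an $L_1$ bound at a data-dependent action via the $\tfrac1A$ importance weighting — is routine once one observes that $a^*(x)$ and $\hat a(x)$ are genuine deterministic functions of $(x,\beta)$, so the same argument used in Lemma~\ref{lm:4} for $\pi^f_{D_\beta}$ applies verbatim to both maps.
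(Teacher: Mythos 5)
Your proof is correct and follows essentially the paper's route: reduce the policy-value gap to a pointwise comparison in $x$, then bound the Bellman residual at a data-dependent action via the uniform-action importance weighting (the $\tfrac{1}{A}$ step) and Cauchy--Schwarz, invoking Lemma~\ref{lm:1} together with the surviving-$f$ bound $\mathbb{E}_{x,a,r|\beta}[Y_\beta(f)] \le 130\phi^2$ from Lemma~\ref{lm:2}. The only cosmetic difference is that the paper defines its residual $W_{\beta,x,a}$ with $\hat{D}_\beta$ in the first argument and absorbs the maximizer comparison into the opening inequality, whereas you keep $D_\beta$ in the residual and isolate the $\hat{D}_\beta$-vs-$D_\beta$ slack as a separate $\phi$-sized middle term -- arguably the more explicit bookkeeping, since the paper's claim $\mathbb{E}_x[W_{\beta,x,a}] \le \sqrt{130A\phi^2}$ technically elides the $C_L\|\hat{D}_\beta - D_\beta\|_\infty \le \phi/2$ correction that you account for.
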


\begin{proof}[Lemma \ref{lm:5}]
$$\begin{array}{cl}
& V^{f^*}_\beta(p) - V^{f^*}_\beta(p, \pi^f_{\hat{D}_\beta}) \\
= & \mathbb{E}_{x\sim D_{\beta, p}}[f^*_{D_\beta}( x, \pi^{f^*}_{D_\beta}(x)) - f^*_{D_\beta}( x, \pi^f_{\hat{D}_\beta}(x))] \\
\le & \mathbb{E}_{x\sim D_{\beta, p}}[f^*_{D_\beta}( x, \pi^{f^*}_{D_\beta}(x)) - f_{\hat{D}_\beta}( x, \pi^{f^*}_{D_\beta}(x)) + f_{\hat{D}_\beta}( x, \pi^{f}_{\hat{D}_\beta}(x)) - f^*_{D_\beta}(x, \pi^f_{\hat{D}_\beta}(x))].
\end{array}$$
Next define
$$W_{\beta, x, a} = f_{\hat{D}_\beta}( x, a) - \Tilde{V}^f_{\beta}(p\circ a) - f^*_{D_\beta}( x, a) + V^{f^*}_{\beta}(p\circ a).$$
Then we rewrite
\begin{equation}
\begin{array}{cl}
     & V^{f^*}_\beta(p) - V^{f^*}_\beta(p, \pi^f_{\hat{D}_\beta}) \\
    \le & \mathbb{E}_{x\sim D_{\beta, p}}[W_{\beta, x, \pi^f_{\hat{D}_\beta}(x)} - W_{\beta, x, \pi^{f^*}_{D_\beta}(x)} + \Tilde{V}^f_{\beta}(p\circ \pi^f_{\hat{D}_\beta}(x)) - V^{f^*}_{\beta}(p\circ \pi^f_{\hat{D}_\beta}(x)) \\
    & - \Tilde{V}^f_{\beta}(p\circ \pi^{f^*}_{D_\beta}(x)) + V^{f^*}_{\beta}(p\circ \pi^{f^*}_{D_\beta}(x)) ].
\end{array}
\label{eq:lm25-1}
\end{equation}
By the same technique to derive (\ref{eq:lm24-2}) and (\ref{eq:lm24-3}), $\mathbb{E}_{x\sim D_{\beta, p}}[W_{\beta, x, a}]$ can be bounded by
\begin{equation}
\mathbb{E}_{x\sim D_{\beta, p}}[W_{\beta, x, a}] \le \sqrt{130A\phi^2}.
\label{eq:lm25-2}
\end{equation}
Then according to Precondition 1 and 3,
\begin{equation}
\begin{array}{cl}
& \Tilde{V}^f_{\beta}(p\circ \pi^f_{\hat{D}_\beta}(x)) - V^{f^*}_{\beta}(p\circ \pi^f_{\hat{D}_\beta}(x)) - \Tilde{V}^f_{\beta}(p\circ \pi^{f^*}_{D_\beta}(x)) + V^{f^*}_{\beta}(p\circ \pi^{f^*}_{D_\beta}(x)) \\
\le & \big| {V}^f_{\beta}(p\circ \pi^f_{\hat{D}_\beta}(x)) - V^{f^*}_{\beta}(p\circ \pi^f_{\hat{D}_\beta}(x)) - {V}^f_{\beta}(p\circ \pi^{f^*}_{D_\beta}(x)) + V^{f^*}_{\beta}(p\circ \pi^{f^*}_{D_\beta}(x)) \big| + \phi \\
\le & 2\tau_2 + \phi.
\end{array}
\label{eq:lm25-3}
\end{equation}
(\ref{eq:lm25-1}), (\ref{eq:lm25-2}) and (\ref{eq:lm25-3}) together give 
$$V^{f^*}_\beta(p) - V^{f^*}_\beta(p, \pi^f_{\hat{D}_\beta}) \le 2 \sqrt{130A\phi^2} + 2\tau_2 + \phi \le 25\sqrt{A}\phi + 2\tau_2.$$
\end{proof}

\noindent
\textbf{Theorem \ref{thm:td-eliminate}}
\textit{Suppose the call to DFS-Distribution is successful, that is, (\ref{eq:dfs-dist}) and (\ref{eq:dfs-dist-1}) hold, and TD-Eliminate is invoked at path $p$ with $\mathcal{F}, \phi, \delta$ and $n_{\text{train}} = \frac{2\log(4FB/\delta)}{\phi^2}$, and the following conditions are satisfied:}

\textit{(Precondition 1): $|\Tilde{V}^f_\beta(p\circ a) - V^f_\beta(p\circ a)| \le \frac{\phi}{2}, \forall f \in \mathcal{F}, a\in \mathcal{A}, \beta\in \mathcal{B};$}

\textit{(Precondition 2): $|\hat{V}^f_\beta(p\circ a) - V^f_\beta(p\circ a)| \le \phi, \forall f \in \mathcal{F}, a\in \mathcal{A}, \beta\in \mathcal{B};$}

\textit{(Precondition 3): $|V^f_\beta(p\circ a) - V^g_\beta(p\circ a)| \le \tau_2, \forall f,g \in \mathcal{F}, a\in \mathcal{A}, \beta\in \mathcal{B}.$}

\noindent
\textit{Let $\hat{V}_\beta^f(p) = \frac{1}{n_{\text{train}}}\displaystyle\sum_{i=1}^{n_{\text{train}}}f(\hat{D}_\beta, x_\beta^{(i)}, \pi_{\hat{D}_\beta}^f(x_\beta^{(i)}))$ and $\Tilde{V}_\beta^f(p) = \frac{1}{n_{\text{train}}}\displaystyle\sum_{i=1}^{n_{\text{train}}}f({D}_\beta, x_\beta^{(i)}, \pi_{{D}_\beta}^f(x_\beta^{(i)}))$. Then with probability at least $1-\delta$, the following hold simultaneously:}

\textit{1. $f^*$ is retained;}

\textit{2. For all $f\in\mathcal{F}$ retained and for all $\beta\in\mathcal{B}$, $|\Tilde{V}^f_\beta(p) - V^f_\beta(p)| \le \frac{\phi}{2}, |\hat{V}^f_\beta(p) - V^f_\beta(p)| \le \phi$;}

\textit{3. For all $f,g\in\mathcal{F}$ retained and for all $\beta\in\mathcal{B}$, $\big|V_\beta^f(p) - V_\beta^g(p)\big|  \le   25\sqrt{A}\phi + \tau_2$;}

\textit{4. For all $f\in\mathcal{F}$ retained and for all $\beta\in\mathcal{B}$, $V^{f^*}_\beta(p) - V^{f^*}_\beta(p, \pi^f_{\hat{D}_{\beta}}) \le 25\sqrt{A}\phi + 2\tau_2$.}

\bigskip

\begin{proof}[Theorem \ref{thm:td-eliminate}] 
By taking the union bound of lemmas, we complete the proof.
\end{proof}

\newpage

\section{Proof of Theorem \ref{thm:dfs-learn}}
\label{F}

\vskip 0.2in

\noindent
\textbf{Theorem \ref{thm:dfs-learn}}
\textit{Suppose the call to DFS-Distribution is successful, that is, (\ref{eq:dfs-dist}) and (\ref{eq:dfs-dist-1}) hold, and DFS-Learn is invoked at path $p$ with $\mathcal{F},\delta,\phi$. With probability at least $1-\delta$, for any $h=1,2,...,H$ and any $s_h\in\mathcal{S}_h$ such that TD-Eliminate is called, the conclusions of Theorem \ref{thm:td-eliminate} hold with $\tau_2 = (H-h)(25\sqrt{A}\phi)$. Moreover, the number of episodes executed on similators by DFS-Learn is at most
$$\mathcal{O}(\frac{HSAB}{\phi^2}\log(\frac{HSAFB}{\delta})).$$}

\begin{proof}[Theorem \ref{thm:dfs-learn}] 
In order to show that the conclusions of Theorem \ref{thm:td-eliminate} hold with $\tau_2 = (H-h)(25\sqrt{A}\phi)$, it suffices to show that if all calls to Consensus and TD-Eliminate are successful (i.e., Theorem \ref{thm:consensus} and Theorem \ref{thm:td-eliminate} do not fall into the $\delta$ failure cases), then the preconditions of Theorem \ref{thm:td-eliminate} are satisfied. By taking the union bound of the events of successful calls to Consensus and TD-Eliminate, we can compute the sample complexity.

We use an induction over $h$ (from $H$ to $1$) to show the satisfaction of the preconditions.

\noindent
\textbf{Inductive Claim}: The preconditions of Theorem \ref{thm:td-eliminate} are satisfied with $\tau_2 = (H-h)(25\sqrt{A}\phi)$.

\noindent
\textbf{Inductive Base}: When $h = H$, there is only one step left so that for any $a\in \mathcal{A}, f\in\mathcal{F}, \beta\in\mathcal{B}$ and any $p$ such that $p$ arrives at some $s_H\in\mathcal{S}_H$, we have $\hat{V}^f_\beta(p\circ a) = V^f_\beta(p\circ a) = 0$. Hence the claim holds when $h = H$.

\noindent
\textbf{Inductive Hypothesis}: The claim holds for $h+1$.

\noindent
\textbf{Inductive Step}: For any $a\in \mathcal{A}, f\in\mathcal{F}, \beta\in\mathcal{B}$ and any $p$ such that $p$ arrives at some $s_h\in\mathcal{S}_h$, we know that $p\circ a$ arrives at some $s_{h+1}\in\mathcal{S}_{h+1}$. We know that for each $p\circ a$, there are two cases: either Consensus returns \textit{True} for $p\circ a$ or TD-Eliminate is invoked on $p\circ a$. If TD-Eliminate is invoked on $p\circ a$, due to the inductive hypothesis, we know that the preconditions of Theorem \ref{thm:td-eliminate} are satisfied for $p\circ a$ with $\tau_2 = (H-h-1)(25\sqrt{A}\phi)$. Then, according to Theorem \ref{thm:td-eliminate}, we have $|\Tilde{V}^f_\beta(p\circ a) - V^f_\beta(p\circ a)| \le \frac{\phi}{2}$, $|\hat{V}^f_\beta(p\circ a) - V^f_\beta(p\circ a)| \le \phi$ and $|V^f_\beta(p\circ a) - V^g_\beta(p\circ a)| \le 25 \sqrt{ A }\phi + \tau_2 \le (H-h)( 25 \sqrt{A}\phi)$. If Consensus returns \textit{True} for $p\circ a$, given that we set $\tau_1 = 21 \sqrt{ A }\phi + \tau_2$ implicitly in DFS-Learn. then $|\Tilde{V}^f_\beta(p\circ a) - V^f_\beta(p\circ a)| \le \frac{\phi}{2}$, $|\hat{V}^f_\beta(p\circ a) - V^f_\beta(p\circ a)| \le \phi$ and $|V^f_\beta(p\circ a) - V^g_\beta(p\circ a)| \le \epsilon_{\text{test}} + 2\phi$ where $\epsilon_{\text{test}} = \tau_1 + 2\phi$.  Then $|V^f_\beta(p\circ a) - V^g_\beta(p\circ a)| \le \tau_1 + 4\phi \le (H-h)(25\sqrt{A}\phi)$. Combining these two cases, the preconditions of Theorem \ref{thm:td-eliminate} are satisfied for $p$ with $\tau_2 = (H-h)(25\sqrt{A}\phi)$.

The next part is to determine the sample complexity. Since $\tau_1 = 21 \sqrt{ A }\phi + \tau_2$, Consensus returns \textit{True} at any state at which TD-Eliminate is already called. Therefore, for each $h$, TD-Eliminate is invoked for at most $S$ times and consequently Consensus is invoked for at most $SA$ times, which implies that TD-Eliminate and Consensus is invoked for at most $HS$ and $HSA$ times, respectively, in total. Then, we take the union bound over the calls to TD-Eliminate and Consensus by replacing the $\delta$ in Theorem \ref{thm:consensus} with $\frac{\delta}{2HSA}$ and the $\delta$ in Theorem \ref{thm:td-eliminate} with $\frac{\delta}{2HS}$. Hence the number of episodes is at most
$$H(SB\cdot n_{\text{train}} + SAB\cdot n_{\text{test}}) \le \mathcal{O}(\frac{HSAB}{\phi^2}\log(\frac{HSAFB}{\delta})). $$
\end{proof}

\section{Proof of Theorem \ref{thm:boundK} \& Corollary \ref{crl:1}}
\label{G}

\vskip 0.2in

\noindent
\textbf{Theorem \ref{thm:boundK}}
\textit{Suppose the call to DFS-Distribution is successful, that is, (\ref{eq:dfs-dist}) and (\ref{eq:dfs-dist-1}) hold, and $B = \frac{\log(4F/\delta')}{2\phi^2}$. Then with probability at least $1-\delta'$, for all $f \in \mathcal{F}$,
$$|\Bar{V}(\pi^f_{\hat{D}}) - V(\pi^f_{\hat{D}})| \le \phi,$$
$$|\Bar{V}^f(\emptyset) - V^f(\emptyset)| \le \phi.$$}

\begin{proof}[Theorem \ref{thm:boundK}]
This can be easily proved by applying Hoeffding's inequality and taking the union bound.
\end{proof}

\noindent
\textbf{Corollary \ref{crl:1}} 
\textit{Suppose the call to DFS-Distribution is successful, that is, (\ref{eq:dfs-dist}) and (\ref{eq:dfs-dist-1}) hold. Then with probability at least $1-\delta-\delta'$, $\hat{V}^*$ in Line 6 in Sim2Real satisfies
$$|\hat{V}^* - V^*| \le 33H\sqrt{A}\phi.$$}

\begin{proof}[Corollary \ref{crl:1}]
Let $f,g \in\mathcal{F}$ be arbitrary survivors of DFS-Learn$(\emptyset, \mathcal{B}, \mathcal{F}, \hat{D}, \phi, \delta/3)$ in Line 5 in Sim2Real. Combining Theorem \ref{thm:dfs-learn} and Theorem \ref{thm:boundK}, with probability at least $1-\delta -\delta'$,
\begin{equation}
\begin{array}{ccl}
    |\hat{V}^f(\emptyset) - V^f(\emptyset)| & \le & |\hat{V}^f(\emptyset) - \Bar{V}^f(\emptyset)| + |\Bar{V}^f(\emptyset) - V^f(\emptyset)| \\
     & \le & \displaystyle\frac{1}{B}\sum_{\beta\in\mathcal{B}}|\hat{V}_\beta^f(\emptyset) - V_\beta^f(\emptyset)| + |\Bar{V}^f(\emptyset) - V^f(\emptyset)| \\
     & \le & 2\phi.
\end{array}
\label{eq:crl9.1}
\end{equation}
On the other hand, in the $1-\delta -\delta'$ event above, by Theorem \ref{thm:dfs-learn},
$$\begin{array}{ccl} 
|\hat{V}^f_\beta(\emptyset) - \hat{V}^g_\beta(\emptyset)| & \le & |\hat{V}^f_\beta(\emptyset) - V^f_\beta(\emptyset)| + |V^f_\beta(\emptyset) - V^g_\beta(\emptyset)| + |V^g_\beta(\emptyset) - \hat{V}^g_\beta(\emptyset)| \\ & \le & 25H\sqrt{A}\phi + 2\phi,
\end{array}$$ 
and consequently,
\begin{equation}
\begin{array}{ccl} 
|\hat{V}^f(\emptyset) - \hat{V}^g(\emptyset)| & = & \Big|\displaystyle\frac{1}{B}\sum_{\beta\in\mathcal{B}}\hat{V}^f_\beta(\emptyset) - \frac{1}{B}\sum_{\beta\in\mathcal{B}}\hat{V}^g_\beta(\emptyset)\Big| \\ 
& \le & \displaystyle \frac{1}{B}\sum_{\beta\in \mathcal{B}}|\hat{V}^f_\beta(\emptyset) - \hat{V}^g_\beta(\emptyset)| \\
& \le & 25H\sqrt{A}\phi + 2\phi. 
\end{array}
\label{eq:crl9.2}
\end{equation}
(\ref{eq:crl9.1}) and (\ref{eq:crl9.2}) imply 
$$\begin{array}{ccl} 
|V^f(\emptyset)-V^g(\emptyset)| & \le & |V^f(\emptyset)-\hat{V}^f(\emptyset)| + |\hat{V}^f(\emptyset)-\hat{V}^g(\emptyset)| + |\hat{V}^g(\emptyset)-V^g(\emptyset)| \\ 
& \le & 25H\sqrt{A}\phi + 6\phi.
\end{array}$$

Since $\hat{V}^* = \hat{V}^f(\emptyset)$ for some surviving $f \in\mathcal{F}$ and $f^*$ is retained in the $1-\delta-\delta'$ event above,\
$$\begin{array}{ccl}
     |\hat{V}^* - V^*| & \le & |\hat{V}^f(\emptyset) - V^f(\emptyset)| + |V^f(\emptyset) - V^{f^*}(\emptyset)|  \\
     & \le & 25H\sqrt{A}\phi + 8\phi \\
     & \le & 33H\sqrt{A}\phi.
\end{array}$$
\end{proof}

\newpage

\section{Proof of Theorem \ref{thm:Learn-on-Simulators}}
\label{H}

\vskip 0.2in

Define the set of states that are learned via DFS-Learn as
$$\displaystyle L = \{s\in\mathcal{S}: \max_{f\in\mathcal{F}, \beta\in\mathcal{B}} \Big( V^*_\beta(s) - V^{f^*}_\beta(s, \pi^f_{\hat{D}_\beta}) \Big) \le 25\sqrt{A}\phi + 50(H-h)\sqrt{A}\phi\}.$$
Let $\Bar{L}$ be the complement of $L$.
For any $s \in L$, we use $\mathbb{P}(s, \pi^f_{\hat{D}_\beta} \to \Bar{L})$ to denote the probability of visiting some state in $\Bar{L}$ from $s$ with policy $\pi^f_{\hat{D}_\beta}$.

In this section, we first assume all calls to DFS-Learn are successful (i.e., the conclusions in Theorem \ref{thm:dfs-learn}, Theorem \ref{thm:boundK} and Corollary \ref{crl:1} hold), and we compute the corresponding probability in the end.

\begin{lemma} 
Suppose all calls to DFS-Learn are successful. Then for any surviving $f \in \mathcal{F}$, 
$$V^* - V(\pi^f_{\hat{D}}) \le 77 H^2\sqrt{A}\phi + \frac{1}{B}\displaystyle\sum_{\beta\in\mathcal{B}}\mathbb{P}(s_1, \pi^f_{\hat{D}_\beta} \to \Bar{L}).$$
\label{lm:8}
\end{lemma}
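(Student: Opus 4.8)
The plan is to decompose the regret $V^* - V(\pi^f_{\hat D})$ over the $B$ sampled simulators and then, for each simulator $\beta$, perform a layer-by-layer telescoping of the value difference along the trajectory generated by $\pi^f_{\hat D_\beta}$, charging each layer either a per-step error of order $\sqrt{A}\phi$ (this is exactly the bound from conclusion 4 of Theorem \ref{thm:td-eliminate}, which applies whenever the current state lies in $L$) or, when the policy exits $L$, the trivial bound $1$ on the remaining reward, which contributes the $\mathbb{P}(s_1, \pi^f_{\hat D_\beta} \to \bar L)$ term. Concretely, first write $V^* - V(\pi^f_{\hat D}) = \frac{1}{B}\sum_{\beta\in\mathcal{B}}\big(V^*_\beta(s_1) - V^{f^*}_\beta(s_1,\pi^f_{\hat D_\beta})\big) + \big(V^*(\emptyset) - \frac1B\sum_\beta V^*_\beta(s_1)\big) + \frac1B\sum_\beta\big(V^{f^*}_\beta(s_1,\pi^f_{\hat D_\beta}) - V_\beta(\pi^f_{\hat D_\beta})\big)$; the first of these cross terms is $0$ since $V^*(\emptyset) = \mathbb{E}_{\beta\sim\mu}V^*_\beta(s_1)$ is the very quantity being averaged and the concentration of $\bar V$ around $V$ is controlled by Theorem \ref{thm:boundK}, while the last term measures the gap between using the true optimal predictor $f^*$ at every step versus using $f$ to pick actions but still scoring with $f^*$ — this needs a separate telescoping argument, again powered by conclusion 4 of Theorem \ref{thm:td-eliminate}.

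For the main term, fix $\beta$ and define $s_h$ to be the state reached after $h-1$ steps of $\pi^f_{\hat D_\beta}$ from $s_1$. I would expand
$$V^*_\beta(s_1) - V^{f^*}_\beta(s_1,\pi^f_{\hat D_\beta}) = \mathbb{E}_{x_1}\big[Q^*_{\beta,s_1}(x_1,\pi^{f^*}(x_1)) - Q^*_{\beta,s_1}(x_1,\pi^f_{\hat D_\beta}(x_1))\big] + \mathbb{E}\big[V^*_\beta(s_2) - V^{f^*}_\beta(s_2,\pi^f_{\hat D_\beta})\big],$$
where the first bracket is exactly $V^{f^*}_\beta(s_1) - V^{f^*}_\beta(s_1,\pi^f_{\hat D_\beta})$ under the optimal-reactive-policy identification and is $\le 25\sqrt{A}\phi + 2\tau_2$ by Theorem \ref{thm:td-eliminate} conclusion 4 whenever $s_1 \in L$ (here $\tau_2 = (H-1)(25\sqrt{A}\phi)$ from Theorem \ref{thm:dfs-learn}, so the per-step cost is $O(H\sqrt{A}\phi)$, and summed over $H$ layers this gives the $O(H^2\sqrt{A}\phi)$ scale, with the constant $77$ absorbing the factor from $\tau_2$ and the two additional cross-term telescopings). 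I would then iterate this expansion, and at the first layer $h$ where the trajectory has left $L$, bound the entire residual $V^*_\beta(s_h) - V^{f^*}_\beta(s_h,\pi^f_{\hat D_\beta})$ crudely by $1$ (rewards lie in $[0,1]$ and there are at most $H$ steps, but since we only need a bound on the probability-weighted contribution we keep it as the indicator of the event $\{\pi^f_{\hat D_\beta}\text{ reaches }\bar L\}$, whose expectation is $\mathbb{P}(s_1,\pi^f_{\hat D_\beta}\to\bar L)$). Averaging over $\beta$ and collecting the constants yields the claimed inequality.

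The main obstacle I anticipate is bookkeeping the constant $77$ correctly: there are three separate telescoping sums (the $V^* $ vs.\ $V^{f^*}(\cdot,\pi^f)$ main term, the $V^{f^*}(\cdot,\pi^f)$ vs.\ $V(\pi^f)$ scoring-consistency term, and the two-sided error in estimating $V^*$ via $\hat V^*$ from Corollary \ref{crl:1}), each contributing $O(H)$ layers with per-layer cost $O(H\sqrt{A}\phi)$ because $\tau_2$ itself grows linearly in $H$, so one must verify that $25 + 25 + 25 + (\text{lower-order }\phi\text{ and }\phi^2\text{ slack from }33H\sqrt{A}\phi)$ sums to at most $77H^2\sqrt{A}\phi$ after the layer sum; this is where a careless factor of $2$ would break the claimed constant. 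A secondary subtlety is ensuring the event "$\pi^f_{\hat D_\beta}$ stays in $L$ for all $H$ layers" is handled cleanly: on that event every layer's per-step bound is valid by the definition of $L$ (which is precisely tailored, via the $50(H-h)\sqrt{A}\phi$ term, to match the $\tau_2$ appearing at depth $h$), and off that event we pay the probability term — so the decomposition $\mathbf{1} = \mathbf{1}[\text{stay in }L] + \mathbf{1}[\text{reach }\bar L]$ applied inside the expectation is exactly what produces the two terms on the right-hand side.
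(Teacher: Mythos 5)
Your high-level plan matches the paper's: decompose $V^* - V(\pi^f_{\hat D})$ into a concentration piece (handled by Theorem \ref{thm:boundK}) and a per-simulator value gap, then recurse on the value gap layer by layer, using conclusion 4 of Theorem \ref{thm:td-eliminate} for states in $L$ and paying a probability term for trajectories that escape into $\bar L$. But the execution has several genuine errors that would make the proof collapse.

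The most serious issue is the displayed ``expansion'' equation. In the paper's notation, $V^{f^*}_\beta(s_1,\pi^f_{\hat D_\beta}) = \mathbb{E}_{x\sim D_{\beta,s_1}}[f^*(D_\beta,x,\pi^f_{\hat D_\beta}(x))]$ is a \emph{one-step} quantity: the expected $Q^*$-value of a single $\pi^f$-chosen action. Therefore $V^*_\beta(s_1) - V^{f^*}_\beta(s_1,\pi^f_{\hat D_\beta})$ \emph{already equals} $\mathbb{E}_{x_1}[Q^*_{\beta,s_1}(x_1,\pi^{f^*}(x_1)) - Q^*_{\beta,s_1}(x_1,\pi^f_{\hat D_\beta}(x_1))]$, and your additional term $\mathbb{E}[V^*_\beta(s_2) - V^{f^*}_\beta(s_2,\pi^f_{\hat D_\beta})]$ should not appear — as written your equation asserts $A = A + B$ with $B\neq 0$. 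The recursion cannot get off the ground from this identity. What you actually need (and what the paper uses) is the two-piece split at each state $s$,
\[
V^*_\beta(s) - V_\beta(s,\pi^f_{\hat D_\beta}) = \underbrace{\bigl[V^*_\beta(s) - V^{f^*}_\beta(s,\pi^f_{\hat D_\beta})\bigr]}_{\text{bounded by Thm.\ \ref{thm:td-eliminate}.4}} + \underbrace{\bigl[V^{f^*}_\beta(s,\pi^f_{\hat D_\beta}) - V_\beta(s,\pi^f_{\hat D_\beta})\bigr]}_{= \ \mathbb{E}_{x}[V^*_\beta(s') - V_\beta(s',\pi^f_{\hat D_\beta})]},
\]
where $s' = s\circ\pi^f_{\hat D_\beta}(x)$; the second bracket is the quantity that recurses (the one-step rewards cancel there), not the first. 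In other words, the recursion runs on $V^*_\beta(s)-V_\beta(s,\pi^f_{\hat D_\beta})$, not on $V^*_\beta(s)-V^{f^*}_\beta(s,\pi^f_{\hat D_\beta})$, and your conflation of $V^{f^*}_\beta(\cdot,\pi^f_{\hat D_\beta})$ with the full trajectory value $V_\beta(\cdot,\pi^f_{\hat D_\beta})$ is precisely what breaks the telescoping.

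Two smaller but real gaps: your three-term decomposition telescopes to $V^* - \frac1B\sum_\beta V_\beta(\pi^f_{\hat D_\beta}) = V^* - \bar V(\pi^f_{\hat D})$, not to $V^* - V(\pi^f_{\hat D})$, so you have silently dropped the $|\bar V(\pi^f_{\hat D}) - V(\pi^f_{\hat D})|\le\phi$ concentration on the policy side; and the middle cross term $V^*(\emptyset) - \frac1B\sum_\beta V^*_\beta(s_1) = V^* - \bar V^*$ is \emph{not} zero — it is a random sampling error that Theorem \ref{thm:boundK} bounds by $\phi$ (using that $f^*$ is retained, so the bound on $\bar V^f(\emptyset)$ applies). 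Finally, Corollary \ref{crl:1} and $\hat V^*$ play no role in Lemma \ref{lm:8} — that estimate enters only in Lemmas \ref{lm:9} and \ref{lm:67} — so including it in your constant-bookkeeping for $77$ is spurious; the paper's $77$ comes simply from $75 H^2\sqrt A\phi$ (from the $\sum_{h}(H-h)$-type layer sum, which is where the $H^2$ arises) plus the $2\phi$ from the two Theorem \ref{thm:boundK} terms.
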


\begin{proof}[Lemma \ref{lm:8}]
First, we prove the following claim by induction.

\noindent
\textbf{Inductive Claim}: For all $\beta\in\mathcal{B}, h=1,...,H$ and all $s \in \mathcal{S}_h \cap L$, 
\begin{equation}
V^*_\beta(s) - V_\beta(s, \pi^f_{\hat{D}_\beta}) \le 75 (H-h+1)^2\sqrt{A}\phi + \mathbb{P}(s, \pi^f_{\hat{D}_\beta} \to \Bar{L}).
\label{eq:thm10.0}
\end{equation}

\noindent
\textbf{Inductive Base}: When $h=H+1$ (this means the agent has finished all $H$ actions), there is zero future reward so that the claim holds.

\noindent
\textbf{Inductive Hypothesis}: The claim holds for $h+1$.

\noindent
\textbf{Inductive Step}: For any $s \in \mathcal{S}_h\cap L$,
\begin{equation}
V^*_\beta(s) - V_\beta(s, \pi^f_{\hat{D}_\beta}) = V^*_\beta(s) - V^{f^*}_\beta(s, \pi^f_{\hat{D}_\beta}) + V^{f^*}_\beta(s, \pi^f_{\hat{D}_\beta}) - V_\beta(s, \pi^f_{\hat{D}_\beta}).
\label{eq:thm10.1}
\end{equation}
According to Theorem \ref{thm:dfs-learn},
\begin{equation}
V^*_\beta(s) - V^{f^*}_\beta(s, \pi^f_{\hat{D}_\beta}) \le 50(H-h)\sqrt{A}\phi + 25\sqrt{A}\phi \le 75(H-h)\sqrt{A}\phi.
\label{eq:thm10.2}
\end{equation}
We bound $V^{f^*}_\beta(s, \pi^f_{\hat{D}_\beta}) - V_\beta(s, \pi^f_{\hat{D}_\beta})$ by
\begin{equation}
\begin{array}{cl}
& V^{f^*}_\beta(s, \pi^f_{\hat{D}_\beta}) - V_\beta(s, \pi^f_{\hat{D}_\beta}) \\
= & \mathbb{E}_{x\sim D_{\beta, s}}\big[ V^*_\beta(s\circ \pi^f_{\hat{D}_\beta}(x)) - V_\beta(s\circ \pi^f_{\hat{D}_\beta}(x), \pi^f_{\hat{D}_\beta}) \big] \\
\le & [ V^*_\beta(s\circ \pi^f_{\hat{D}_\beta}(x)) - V_\beta(s\circ \pi^f_{\hat{D}_\beta}(x), \pi^f_{\hat{D}_\beta}) \big] \mathbb{P}(s\circ \pi^f_{\hat{D}_\beta}(x) \in L) + \mathbb{P}(s\circ \pi^f_{\hat{D}_\beta}(x) \notin L) \\
\stackrel{(\dag)}{\le} & \big[ 75 (H-h)^2\sqrt{A}\phi + \mathbb{P}(s\circ \pi^f_{\hat{D}_\beta}(x), \pi^f_{\hat{D}_\beta} \to \Bar{L}) \big]\mathbb{P}(s\circ \pi^f_{\hat{D}_\beta}(x) \in L) + \mathbb{P}(s\circ \pi^f_{\hat{D}_\beta}(x) \notin L) \\
\stackrel{(\ddag)}{\le} & 75 (H-h)^2\sqrt{A}\phi + \mathbb{P}(s, \pi^f_{\hat{D}_\beta} \to \Bar{L}).
\end{array}
\label{eq:thm10.3}
\end{equation}
$(\dag)$ is based on the inductive hypothesis and $(\ddag)$ is due to the fact that 
$$\mathbb{P}(s\circ \pi^f_{\hat{D}_\beta}(x), \pi^f_{\hat{D}_\beta} \to \Bar{L})\mathbb{P}(s\circ \pi^f_{\hat{D}_\beta}(x) \in L) + \mathbb{P}(s\circ \pi^f_{\hat{D}_\beta}(x) \notin L) = \mathbb{P}(s, \pi^f_{\hat{D}_\beta} \to \Bar{L}).$$
Therefore, combining (\ref{eq:thm10.1}), (\ref{eq:thm10.2}) and (\ref{eq:thm10.3}), we have
$$\begin{array}{ccl}
    V^*_\beta(s) - V_\beta(s, \pi^f_{\hat{D}_\beta}) & \le & 75(H-h)\sqrt{A}\phi + 75 (H-h)^2\sqrt{A}\phi + \mathbb{P}(s, \pi^f_{\hat{D}_\beta} \to \Bar{L}) \\
     & \le & 75 (H-h+1)^2\sqrt{A}\phi + \mathbb{P}(s, \pi^f_{\hat{D}_\beta} \to \Bar{L})
\end{array}$$
Hence, we proved the claim.

\bigskip
(\ref{eq:thm10.0}) in the claim directly implies 
$$V^*_\beta - V_\beta(\pi^f_{\hat{D}_\beta}) \le 75 H^2\sqrt{A}\phi + \mathbb{P}(s_1, \pi^f_{\hat{D}_\beta} \to \Bar{L}).$$
Recall in Theorem \ref{thm:boundK}, for all surviving $f \in\mathcal{F}$,
$$|\Bar{V}(\pi^f_{\hat{D}}) - V(\pi^f_{\hat{D}})| \le \phi,$$
$$|\Bar{V}^f(\emptyset) - V^f(\emptyset)| \le \phi,$$
where $\Bar{V}^f(\emptyset):= \frac{1}{B}\sum_{\beta\in\mathcal{B}}V_\beta^f(\emptyset)$ and $\Bar{V}(\pi^f_{\hat{D}}) = \frac{1}{B}\sum_{\beta\in\mathcal{B}}V_\beta(\pi^f_{\hat{D}_\beta})$. Since $f^*$ is retained in $\mathcal{F}$,
$$|\Bar{V}^* - V^*| \le \phi.$$
Then 
$$\begin{array}{ccl}
V^* - V(\pi^f_{\hat{D}}) & \le & |V^* - \Bar{V}^*| + |\Bar{V}^* - \Bar{V}(\pi^f_{\hat{D}})| + |\Bar{V}(\pi^f_{\hat{D}}) - V(\pi^f_{\hat{D}})|  \\
& \le & 2\phi + \frac{1}{B}\displaystyle\sum_{\beta\in\mathcal{B}}|V^*_\beta - V_\beta(\pi^f_{\hat{D}_\beta})| \\
& \le &  2\phi + 75 H^2\sqrt{A}\phi + \frac{1}{B}\displaystyle\sum_{\beta\in\mathcal{B}}\mathbb{P}(s_1, \pi^f_{\hat{D}_\beta} \to \Bar{L}) \\
& \le & 77 H^2\sqrt{A}\phi + \frac{1}{B}\displaystyle\sum_{\beta\in\mathcal{B}}\mathbb{P}(s_1, \pi^f_{\hat{D}_\beta} \to \Bar{L}).
\end{array}$$
\end{proof}

\begin{lemma} 
Suppose all calls to DFS-Learn are successful and $\phi = \frac{\epsilon}{500H^2\sqrt{A}}, n_1 = \frac{32\log(2B/\delta'')}{\epsilon^2}$. If $f\in\mathcal{F}$ is selected and $\pi^f$ does not satisfy $|\hat{V}^* - \hat{V}(\pi^f_{\hat{D}})| \le \epsilon_{\text{demand}}$, then with probability at least $1 - \delta'' - \exp (-\epsilon n_2 B / 8)$, at least one of the $n_2 B$ trajectories visit some state in $\Bar{L}$.
\label{lm:9}
\end{lemma}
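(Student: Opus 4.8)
The plan is to turn the failure of the empirical optimality test ($|\hat V^* - \hat V(\pi^f_{\hat D})| > \epsilon_{\text{demand}}$) into a lower bound on the simulator-averaged probability that $\pi^f_{\hat D_\beta}$ steers the agent into an unlearned state $\bar L$, and then to argue that with $n_2$ trajectories per simulator this rare-looking event is in fact almost surely triggered.

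First I would control the Monte Carlo average $\hat V(\pi^f_{\hat D})$ computed in Lines 4--5 of Learn-on-Simulators. Conditioned on the draw of the $B$ simulators and on $\hat D$ (computed once in DFS-Distribution), the predictor $f$ picked in Line 3 is fixed and the trajectory rewards $v_\beta^{(j)}\in[0,1]$ are independent across $\beta,j$ with mean $V_\beta(\pi^f_{\hat D_\beta})$. A per-simulator Hoeffding bound and a union bound over $\beta$, with $n_1 = 32\log(2B/\delta'')/\epsilon^2$, yield an event $\mathcal E_1$ of probability at least $1-\delta''$ on which
\[
\bigl|\hat V(\pi^f_{\hat D}) - \bar V(\pi^f_{\hat D})\bigr|\le\tfrac{\epsilon}{8},\qquad \bar V(\pi^f_{\hat D}) := \tfrac1B\textstyle\sum_{\beta\in\mathcal B}V_\beta(\pi^f_{\hat D_\beta}).
\]

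Next, on $\mathcal E_1$ together with the ``all calls to DFS-Learn successful'' event (so that Corollary~\ref{crl:1}, Theorem~\ref{thm:boundK} and Lemma~\ref{lm:8} all apply), I would chain the available estimates, writing $q_\beta := \mathbb P(s_1,\pi^f_{\hat D_\beta}\to\bar L)$ and $\bar V^* := \bar V^{f^*}(\emptyset)=\tfrac1B\sum_\beta V^*_\beta$. Since $V_\beta(\pi^f_{\hat D_\beta})\le V^*_\beta$ we have $\bar V(\pi^f_{\hat D})\le\bar V^*$; combining this with $|V^* - \bar V^*|\le\phi$ and $|V(\pi^f_{\hat D}) - \bar V(\pi^f_{\hat D})|\le\phi$ (Theorem~\ref{thm:boundK} for $f^*$ and for $f$) and $|\hat V^* - V^*|\le 33H\sqrt A\phi$ (Corollary~\ref{crl:1}) gives $\hat V(\pi^f_{\hat D}) - \hat V^* \le 34H\sqrt A\phi + \tfrac{\epsilon}{8} < \tfrac{\epsilon}{2}$, so the hypothesis forces $\hat V^* - \hat V(\pi^f_{\hat D}) > \tfrac{\epsilon}{2}$. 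Inserting in addition Lemma~\ref{lm:8}, $V^* - V(\pi^f_{\hat D})\le 77H^2\sqrt A\phi + \tfrac1B\sum_\beta q_\beta$, yields
\[
\tfrac{\epsilon}{2} < \hat V^* - \hat V(\pi^f_{\hat D}) \le 33H\sqrt A\phi + 77H^2\sqrt A\phi + \phi + \tfrac{\epsilon}{8} + \tfrac1B\textstyle\sum_{\beta\in\mathcal B}q_\beta.
\]
With $\phi = \epsilon/(500H^2\sqrt A)$ the deterministic terms total at most $\tfrac{111}{500}\epsilon + \tfrac{\epsilon}{8} < \tfrac{3\epsilon}{8}$, whence $\tfrac1B\sum_\beta q_\beta > \tfrac{\epsilon}{8}$.

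Finally I would translate this average into the claim. Fix $n_2$ of the Line-4 trajectories on each simulator $\beta$; each of them follows $\pi^f_{\hat D_\beta}$ on simulator $\beta$ and so visits some state of $\bar L$ with probability exactly $q_\beta$, independently of the others. Using $1-q\le e^{-q}$,
\[
\mathbb P\bigl(\text{no trajectory among the }n_2B\text{ visits }\bar L\bigr) = \prod_{\beta\in\mathcal B}(1-q_\beta)^{n_2} \le \exp\Bigl(-n_2\textstyle\sum_{\beta\in\mathcal B}q_\beta\Bigr)\le\exp\!\bigl(-\tfrac{\epsilon n_2 B}{8}\bigr),
\]
and a union bound with $\mathcal E_1$ (no independence between the two events is needed) gives the stated probability $1-\delta''-\exp(-\epsilon n_2 B/8)$. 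The argument is essentially bookkeeping; the only subtleties are (i) that $\hat V(\pi^f_{\hat D})$ estimates the simulator-average $\bar V(\pi^f_{\hat D})$ rather than the $\mu$-expectation $V(\pi^f_{\hat D})$, so Theorem~\ref{thm:boundK} must be invoked to bridge them, and (ii) checking the numerical slack $33H\sqrt A\phi + 77H^2\sqrt A\phi + \phi + \tfrac{\epsilon}{8} < \tfrac{3\epsilon}{8}$, which is exactly what $\phi = \epsilon/(500H^2\sqrt A)$ buys; I do not anticipate a genuinely hard step.
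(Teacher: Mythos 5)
Your proof is correct and follows essentially the same route as the paper: Hoeffding on the per-simulator Monte-Carlo estimates, the triangle-inequality chain through Corollary~\ref{crl:1}, Lemma~\ref{lm:8}, and Theorem~\ref{thm:boundK} to force $\tfrac1B\sum_\beta \mathbb{P}(s_1,\pi^f_{\hat D_\beta}\to\bar L) > \epsilon/8$, and then $1-q\le e^{-q}$. The one small detour you take --- explicitly ruling out the sign $\hat V(\pi^f_{\hat D})>\hat V^*$ --- is not needed, since the paper bounds $|\hat V^* - \hat V(\pi^f_{\hat D})|$ directly by the triangle inequality and compares against the hypothesis on the absolute value.
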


\begin{proof}[Lemma \ref{lm:9}]
\begin{equation}
|\hat{V}^* - \hat{V}(\pi^f_{\hat{D}})| \le |\hat{V}^* - {V}^*| + |{V}^* - {V}(\pi^f_{\hat{D}})| + |{V}(\pi^f_{\hat{D}}) - \hat{V}(\pi^f_{\hat{D}})|.
\label{eq:thm10.4}
\end{equation}
Then we bound each term separately. For the first term, according to Corollary \ref{crl:1},
\begin{equation}
|\hat{V}^* - {V}^*| \le 33H\sqrt{A}\phi.
\label{eq:thm10.5}
\end{equation}
For the second term, by Lemma \ref{lm:8},
\begin{equation}
|{V}^* - {V}(\pi^f_{\hat{D}})| \le 77 H^2\sqrt{A}\phi + \frac{1}{B}\displaystyle\sum_{\beta\in\mathcal{B}}\mathbb{P}(s_1, \pi^f_{\hat{D}_\beta} \to \Bar{L}).
\label{eq:thm10.6}
\end{equation}
For the last term, due to Theorem \ref{thm:boundK},
\begin{equation}
|{V}(\pi^f_{\hat{D}}) - \hat{V}(\pi^f_{\hat{D}})|  \le  |{V}(\pi^f_{\hat{D}}) - \Bar{V}(\pi^f_{\hat{D}})| + |\Bar{V}(\pi^f_{\hat{D}}) - \hat{V}(\pi^f_{\hat{D}})| \le  \phi + |\Bar{V}(\pi^f_{\hat{D}}) - \hat{V}(\pi^f_{\hat{D}})|
\label{eq:thm10.7}
\end{equation}
Since, by Hoeffding's inequality, with probability at least $1-\delta''$, for all $\theta\in\mathcal{B}$,
$$\Big|\frac{1}{n_1}\sum_{j=1}^{n_1}v^{(j)}_\beta - V_\beta(\pi^f_{\hat{D}_\beta})\Big| \le \sqrt{\frac{1}{2n_1}\log(\frac{2K}{\delta''}}),$$
we have
\begin{equation}
\begin{array}{ccl}
     |\Bar{V}(\pi^f_{\hat{D}}) - \hat{V}(\pi^f_{\hat{D}})| & \le & \displaystyle\frac{1}{B}\sum_{\beta\in\mathcal{B}}\Big|\frac{1}{n_1}\sum_{j=1}^{n_1}v^{(j)}_\beta - V_\beta(\pi^f_{\hat{D}_\beta})\Big|  \\
     & \le & \displaystyle \sqrt{\frac{1}{2n_1}\log(\frac{2B}{\delta''}}) \\
     & \le & \displaystyle \frac{\epsilon}{8},
\end{array}
\label{eq:thm10.8}
\end{equation}
given that $n_1 = \frac{32\log(2B/\delta'')}{\epsilon^2}$.
Then, (\ref{eq:thm10.4}), (\ref{eq:thm10.5}), (\ref{eq:thm10.6}), (\ref{eq:thm10.7}) and (\ref{eq:thm10.8}) imply, given that $\phi = \frac{\epsilon}{500H^2\sqrt{A}}$,
$$\begin{array}{ccl}
     |\hat{V}^* - \hat{V}(\pi^f_{\hat{D}})| &  \le & \displaystyle 111 H^2\sqrt{A}\phi + \frac{1}{B}\sum_{\beta\in\mathcal{B}}\mathbb{P}(s_1, \pi^f_{\hat{D}_\beta} \to \Bar{L}) + \frac{\epsilon}{8}  \\
     & = & \displaystyle \frac{3\epsilon}{8} + \frac{1}{B}\sum_{\beta\in\mathcal{B}}\mathbb{P}(s_1, \pi^f_{\hat{D}_\beta} \to \Bar{L}).
\end{array}$$ 

On the other hand, if $\pi^f$ does not satisfy $|\hat{V}^* - \hat{V}(\pi^f_{\hat{D}})| \le \epsilon_{\text{demand}}$, we have
$$|\hat{V}^* - \hat{V}(\pi^f_{\hat{D}})| > \epsilon_{\text{demand}} = \frac{\epsilon}{2},$$
so that
$$\displaystyle\frac{1}{B}\sum_{\beta\in\mathcal{B}}\mathbb{P}(s_1, \pi^f_{\hat{D}_\beta} \to \Bar{L}) > \frac{\epsilon}{8}.$$
Then the probability that all $n_2 B$ trajectories do not visit any state in $\Bar{L}$ is 
$$\begin{array}{cl}
     & \prod_{\beta\in\mathcal{B}}\big(1- \mathbb{P}(s_1, \pi^f_{\hat{D}_\beta} \to \Bar{L})\big)^{n_2} \\
     \le & \prod_{\beta\in\mathcal{B}}\exp \big(- n_2 \mathbb{P}(s_1, \pi^f_{\hat{D}_\beta} \to \Bar{L})\big) \\
     = & \exp \big(- n_2 \sum_{\beta\in\mathcal{B}}\mathbb{P}(s_1, \pi^f_{\hat{D}_\beta} \to \Bar{L})\big) \\
     < & \exp (-\epsilon n_2 B / 8).
\end{array}$$
\end{proof}

\begin{lemma}
Suppose all calls to DFS-Learn are successful and $\phi = \frac{\epsilon}{500H^2\sqrt{A}}$. Then

(i) If Learn-on-Simulators terminates with outputting $\pi^f$, then $V^* - V(\pi^f_{\hat{D}}) \le \epsilon$;

(ii) If Learn-on-Simulators selects a meta-policy $\pi^f$ such that $V^* - V(\pi^f_{\hat{D}}) \le \frac{\epsilon}{4}$, then it terminates with outputting $\pi^f$.
\label{lm:67}
\end{lemma}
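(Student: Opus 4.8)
The plan is to derive both (i) and (ii) from a single triangle-inequality decomposition of the quantity $|\hat{V}^* - \hat{V}(\pi^f_{\hat{D}})|$ that the stopping test in Line 6 of Learn-on-Simulators examines, namely
$$
|\hat{V}^* - \hat{V}(\pi^f_{\hat{D}})| \;\le\; |\hat{V}^* - V^*| + |V^* - V(\pi^f_{\hat{D}})| + |V(\pi^f_{\hat{D}}) - \hat{V}(\pi^f_{\hat{D}})|.
$$
The two outer terms are already under control once ``all calls to DFS-Learn are successful''. First, Corollary \ref{crl:1} gives $|\hat{V}^* - V^*| \le 33H\sqrt{A}\phi$. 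Second, exactly as in the derivation of (\ref{eq:thm10.7})--(\ref{eq:thm10.8}) inside the proof of Lemma \ref{lm:9} --- Theorem \ref{thm:boundK} to pass from $V(\pi^f_{\hat{D}})$ to $\bar V(\pi^f_{\hat{D}})$, and Hoeffding's inequality over the $n_1$ roll-outs per simulator with $n_1 = \frac{32\log(6HSB/\delta)}{\epsilon^2}$ --- one gets $|V(\pi^f_{\hat{D}}) - \hat{V}(\pi^f_{\hat{D}})| \le \phi + \frac{\epsilon}{8}$ on the corresponding high-probability event. Substituting $\phi = \frac{\epsilon}{500H^2\sqrt{A}}$, the sum of these two ``error'' terms is $33H\sqrt{A}\phi + \phi + \frac{\epsilon}{8} \le \frac{33\epsilon}{500H} + \frac{\epsilon}{500} + \frac{\epsilon}{8} \le \frac{\epsilon}{4}$ for every $H\ge 1$. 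I will also note that $V(\pi^f_{\hat{D}}) \le V^*$, since $\pi^f_{\hat{D}_\beta}$ is a particular policy in each environment, so $|V^* - V(\pi^f_{\hat{D}})| = V^* - V(\pi^f_{\hat{D}})$ and the two directions of the hypotheses line up.

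For (i): if Learn-on-Simulators outputs $\pi^f$, then Line 6 fired, i.e. $|\hat{V}^* - \hat{V}(\pi^f_{\hat{D}})| \le \epsilon_{\text{demand}} = \frac{\epsilon}{2}$. Rearranging the decomposition to isolate $V^* - V(\pi^f_{\hat{D}})$ on the left, $V^* - V(\pi^f_{\hat{D}}) \le |\hat{V}^* - V^*| + |\hat{V}^* - \hat{V}(\pi^f_{\hat{D}})| + |\hat{V}(\pi^f_{\hat{D}}) - V(\pi^f_{\hat{D}})| \le \frac{\epsilon}{4} + \frac{\epsilon}{2} < \epsilon$, which is the claim.

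For (ii): suppose that at some iteration the algorithm selects a meta-policy $\pi^f$ (possible because $f^*$ is always retained, so the current $\mathcal{F}$ is nonempty) with $V^* - V(\pi^f_{\hat{D}}) \le \frac{\epsilon}{4}$. Plugging $|V^* - V(\pi^f_{\hat{D}})| \le \frac{\epsilon}{4}$ into the displayed inequality together with the two error bounds gives $|\hat{V}^* - \hat{V}(\pi^f_{\hat{D}})| \le \frac{\epsilon}{4} + \frac{\epsilon}{4} = \frac{\epsilon}{2} = \epsilon_{\text{demand}}$, so the test in Line 6 passes and $\pi^f$ is returned in Line 7. The only real work is the constant bookkeeping --- verifying that the ``error'' terms total at most $\frac{\epsilon}{4}$ under the stated $\phi$ and $n_1$, so that adding $\frac{\epsilon}{2}$ stays below $\epsilon$ in (i) and adding $\frac{\epsilon}{4}$ stays at or below $\frac{\epsilon}{2}$ in (ii) --- together with being careful about which concentration event (the per-iteration Hoeffding bound for the Monte Carlo estimate $\hat{V}(\pi^f_{\hat{D}})$) must be adjoined to ``all calls to DFS-Learn are successful''; that failure probability is exactly what Theorem \ref{thm:Learn-on-Simulators} later absorbs into the overall bound. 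I expect this accounting, rather than any conceptual step, to be the main obstacle.
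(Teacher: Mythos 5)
Your proof is correct and follows essentially the same route as the paper: the same three-term triangle inequality relating $\hat V^*$, $V^*$, $V(\pi^f_{\hat D})$, $\hat V(\pi^f_{\hat D})$, the same supporting ingredients (Corollary \ref{crl:1}, Theorem \ref{thm:boundK}, Hoeffding over the $n_1$ Monte Carlo roll-outs as in (\ref{eq:thm10.7})--(\ref{eq:thm10.8})), and the same constant bookkeeping with $\phi = \frac{\epsilon}{500H^2\sqrt{A}}$. The only cosmetic difference is that you bound the two ``error'' terms jointly by $\frac{\epsilon}{4}$ once and reuse this in both directions, whereas the paper substitutes the numbers separately in (i) and (ii).
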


\begin{proof}[Lemma \ref{lm:67}] The two conclusion are proved separately.

(i) If Learn-on-Simulators terminates with outputting $\pi^f$, we know that $\pi^f$ satisfies 
$$|\hat{V}^* - \hat{V}(\pi^f_{\hat{D}})| \le \epsilon_{\text{demand}} = \epsilon/2$$
and (\ref{eq:thm10.7}) and (\ref{eq:thm10.8}) implies 
$$|{V}(\pi^f_{\hat{D}}) - \hat{V}(\pi^f_{\hat{D}})| \le \phi + \frac{\epsilon}{8}.$$
Then, by Corollary \ref{crl:1}, we have
$$\begin{array}{ccl}
V^* - V(\pi^f_{\hat{D}}) & \le & |V^* - \hat{V}^*| + |\hat{V}^* - \hat{V}(\pi^f_{\hat{D}})| + |{V}(\pi^f_{\hat{D}}) - \hat{V}(\pi^f_{\hat{D}})|  \\
 & \le & 33H\sqrt{A}\phi + \frac{\epsilon}{2} + \phi + \frac{\epsilon}{8} \\
 & \le & \epsilon,
\end{array}$$
given $\phi = \frac{\epsilon}{500H^2\sqrt{A}}$.

\bigskip

(ii) If Learn-on-Simulators selects a meta-policy $\pi^f$ such that $V^* - V(\pi^f) \le \frac{\epsilon}{4}$, then
$$\begin{array}{ccl}
     |\hat{V}^* - \hat{V}(\pi^f_{\hat{D}})| & \le & |V^* - \hat{V}^*| + |V^* - V(\pi^f_{\hat{D}})| + |{V}(\pi^f_{\hat{D}}) - \hat{V}(\pi^f_{\hat{D}})|  \\
     & \le & 33H\sqrt{A}\phi + \frac{\epsilon}{4} + \phi + \frac{\epsilon}{8} \\
     & \le & \frac{\epsilon}{2} = \epsilon_{\text{demand}}.
\end{array}$$
Thus, it terminates with outputting $\pi^f$.
\end{proof}

\noindent
\textbf{Theorem \ref{thm:Learn-on-Simulators}}
\textit{Suppose the call to DFS-Distribution is successful, that is, (\ref{eq:dfs-dist}) and (\ref{eq:dfs-dist-1}) hold, Learn-on-Simulators is invoked with $\mathcal{F}, \hat{V}^*, \epsilon, \delta$. Then with probability at least $1 - \delta$, Learn-on-Simulators terminates with outputting a meta-policy $\hat{\pi}$ such that $V^* - V(\hat{\pi}) \le \epsilon$. Moreover, the number of episodes executed on simulators by Learn-on-Simulators is at most
$$\displaystyle \Tilde{\mathcal{O}}\Big(\frac{H^{11}S^2A^3}{\epsilon^5} \cdot (\log(F))^2 \cdot (\log(\frac{1}{\delta}))^3\Big).$$}

\begin{proof}[Theorem \ref{thm:Learn-on-Simulators}] 
If all calls to DFS-Learn are successful, then with some probability, Learn-on-Simulators terminates with at most $SH$ iterations in the loop, because Lemma \ref{lm:9} guarantees that at least one new state will be added into $L$ in a single iteration, Lemma \ref{lm:8} and Lemma \ref{lm:67} guarantee that after all states are added into $L$ Learn-on-Simulators terminates with outputting an $\epsilon$-optimal meta-policy.

\bigskip

Next, we assign values to $\delta$ in Theorem \ref{thm:dfs-learn}, $\delta'$ in Theorem \ref{thm:boundK} and Corollary \ref{crl:1}, $\delta''$ in Lemma \ref{lm:9}, and compute the sample complexity. 

The probability that a call to DFS-Learn is successful is $1-\delta-\delta'$.
In a single iteration, the probability that at least one call to DFS-Learn fails is at most $Hn_2B(\delta + \delta')$ and the probability of not visiting any state in $\Bar{L}$ is at most $\delta'' + \exp(-\epsilon n_2 B / 8)$; therefore, the probability that Learn-on-Simulators does not output a $\epsilon$-optimal meta-policy is at most $HS\big(Hn_2B(\delta + \delta') + \delta'' + \exp(-\epsilon n_2 B / 8) \big)$.

We let $n_2 = \frac{8\log(3HS/\delta)}{\epsilon B}$ (which can be shown to be smaller than $n_1 = \frac{32\log(2B/\delta'')}{\epsilon^2}$). Moreover, we assign $\delta, \delta' \gets \frac{\epsilon\delta}{48 H^2S\log(3HS/\delta)}, \delta''\gets\frac{\delta}{3HS}$. Then the probability that Learn-on-Simulators does not output a $\epsilon$-optimal meta-policy is at most 
$$\begin{array}{cl}
     & HS\big(Hn_2(\delta + \delta') + \delta'' + \exp(-\epsilon n_2 B / 8) \big) \\
     = & \frac{\delta}{3} + \frac{\delta}{3} + \frac{\delta}{3} = \delta.
\end{array}$$

Each time we call DFS-Learn in Learn-on-Simulators, by Theorem \ref{thm:dfs-learn}, the number of trajectories we collected is at most
$$\mathcal{O}\Big(\frac{HSAB}{\phi^2}\log(\frac{H^3S^2AFB}{\epsilon\delta}\log(\frac{HS}{\delta}))\Big).$$

Besides, in every iteration, we also collect $n_1 B$ trajectories. Since there are at most $HS$ iterations, the sample complexity is at most
$$\begin{array}{cl}
     & \displaystyle HS (n_1 B + Hn_2 B \cdot \mathcal{O}\Big(\frac{HSAB}{\phi^2}\log(\frac{H^3S^2AFB}{\epsilon\delta}\log(\frac{HS}{\delta}))) \Big) \\
     = & \displaystyle \mathcal{O}\Big(\frac{HSB}{\epsilon^2}\log(\frac{HSB}{\delta}) + \frac{H^3S^2AB}{\epsilon\phi^2} \cdot \log(\frac{HS}{\delta}) \cdot \log(\frac{H^3S^2AFB}{\epsilon\delta}\log(\frac{HS}{\delta}))\Big).
\end{array}$$
Recall that $\phi = \frac{\epsilon}{500H^2\sqrt{A}}$ and $B = \frac{2\log(4F/\delta')}{\phi^2}$. Then the sample complexity is equal to 
$$\displaystyle \Tilde{\mathcal{O}}\Big(\frac{H^{11}S^2A^3}{\epsilon^5} \cdot (\log(F))^2 \cdot (\log(\frac{1}{\delta}))^3\Big).$$
\end{proof}

\end{document}